\newtheorem{theorem}{Theorem}
\newtheorem{lemma}[theorem]{Lemma}
\def\messages{\mathcal{M}}
\def\prefixes{\mathcal{P}}
\def\types{\mathcal{Z}}
\def\data{\mathcal{D}}
\def\actions{\mathcal{A}}
\def\KL{\mathbf{d}_{\mathrm{KL}}}
\def\nn{\nonumber}
\def\1{\mathbbm{1}}
\def\E{\mathbb{E}}
\def\Pr{\mathbb{P}}
\def\hat{\widehat}
\DeclareMathOperator*{\argmax}{arg\,max}
\DeclareMathOperator*{\argmin}{arg\,min}
\newcommand{\Ic}{\mathcal{I}}
\newcommand{\Xc}{\mathcal{X}}
\newcommand{\Yc}{\mathcal{Y}}
\newcommand{\Zc}{\mathcal{Z}}
\newcommand{\Ec}{\mathcal{E}}
\newcommand{\Mc}{\mathcal{M}}
\newcommand{\Lc}{\mathcal{L}}
\definecolor{red}{RGB}{255,0,0}
\definecolor{blue}{RGB}{0,0,255}
\definecolor{green}{RGB}{0,255,0}
\definecolor{orange}{RGB}{255,165,0}
\definecolor{purple}{RGB}{128,0,128}
\definecolor{teal}{RGB}{0,128,128}
\newcommand{\specificthanks}[1]{\@fnsymbol{#1}}% Inserts a specific \thanks symbol
\author[1]{Wanqiao Xu}
\author[2]{Shi Dong}
\author[2]{Xiuyuan Lu}
\author[2]{\\ Grace Lam}
\author[2]{Zheng Wen}
\author[1,2]{Benjamin Van Roy}
\affil[1]{Stanford University}
\affil[2]{Google DeepMind}
\title{RLHF and IIA: Perverse Incentives}
\begin{document}

\maketitle

\begin{abstract}
Existing algorithms for reinforcement learning from human feedback (RLHF) can incentivize responses at odds with preferences because they are based on models that assume independence of irrelevant alternatives (IIA). 
The perverse incentives induced by IIA hinder innovations on query formats and learning algorithms.
% The perverse incentives induced by IIA give rise to egregious behavior when innovating on query formats or learning algorithms.
\end{abstract}

\section{Introduction}

Modern generative AIs ingest trillions of data bytes from the World Wide Web to produce a large pretrained model.  Trained to imitate what is observed, this model represents an agglomeration of behaviors, some of which are more or less desirable to mimic.  Further training through human interaction, even on fewer than a hundred thousand bits of data, has proven to greatly enhance usefulness and safety, enabling the remarkable AIs we have today.  This process of {\it reinforcement learning from human feedback} (RLHF) steers AIs toward the more desirable among behaviors observed during pretraining.

While AIs now routinely generate drawings, music, speech, and computer code, the text-based chatbot remains an emblematic artifact.  To produce a chatbot, starting with a pretrained language model, a prototypical approach to RLHF \citep{christiano2017deep,stiennon2020learning,ouyang2022training} progresses through several steps.  First, queries, each taking the form of a prompt and a pair of alternative responses, are presented to human annotators who each identify their favorite among a pair.  The annotated data is then used to train a reward model to score any response to a given prompt.  Finally, the language model is tuned to align its responses toward those that earn high reward.

The tremendous impact of RLHF in generative AI has sparked a flurry of research that aims to understand and improve the process.  Some propose alternative algorithms \citep{rafailov2023dpo,xu2023shattering,zhao2023slichf,hejna2023contrastive,dumoulin2023density}.  Others consider alternative query formats \cite{glaese2022improving,zhu2023principled,song2023preference,yuan2023rrhf,dong2023raft,rafailov2023dpo} for which annotators, rather than comparing only a pair of responses, are asked to choose from a longer list, or to rank-order.  Feedback can also be garnered from interactions with humans in their regular use of online chatbots.  RLHF research is continually growing in importance with the volume of human feedback data.

With all the effort and resources directed at RLHF, it is worth asking whether current algorithms rest on firm foundations.  Maybe not, as these algorithms are based on models that assume independence of irrelevant alternatives (IIA), which intuitively means that, when making a choice between two options, the introduction of a third option should not alter preferences between the original two.  As we will demonstrate, human preferences for text content violate IIA.
Even though this flaw is not pronounced when using the most common approach of fitting a reward model to pairwise comparison data, followed by tuning the language policy to optimize reward, it makes current RLHF approaches rigid.  Even simple tweaks to the query format or learning algorithm can lead to undesirable outcomes.
% Interestingly, this flaw is not pronounced when using the most common approach of fitting a reward model to pairwise comparison data, then tuning a language model to optimize reward.  However, working with alternative query formats or algorithmic innovations can induce egregious behavior.

A simple experiment we will present in Section \ref{se:empirical} illustrates our point.  This experiment applies a standard reward learning approach \citep{christiano2017deep,stiennon2020learning,ouyang2022training}.  We first consider learning from queries that are each comprised of the prompt
\begin{displayquote}
\begin{tabular}{ll}
{\bf prompt} & {\it Did Oppenheimer win a Nobel Prize?}
\end{tabular}
\end{displayquote}
and a pair of responses, one generated by GPT-3.5 and the other by GPT-4.  The former generally produces more concise and the latter more informative responses.  For example, here are representative responses:
\begin{displayquote}
\begin{tabular}{ll}
{\bf GPT-3.5 response} & {\it No, Oppenheimer did not win the Nobel Prize.} \\
{\bf GPT-4 response} & {\it No, Robert Oppenheimer, often called the ``father of the atomic bomb''} \\
& {\it for his role in the Manhattan Project, did not win a Nobel Prize.}
\end{tabular}
\end{displayquote}
If a large majority of annotators prefer responses generated by GPT-4, the learned reward function correctly assigns higher scores to GPT-4 responses in independent test data.

A variation in which training queries include four rather than two responses reveals egregious behavior induced by the IIA assumption.  In particular, suppose that each query includes one response generated by GPT-3.5 and three by GPT-4.  Then, even if a large majority of annotators prefer responses generated by GPT-4, the learned reward function erroneously assigns higher scores to GPT-3.5 responses in test data.

The preceding example indicates that innocuous changes to the query format can cause standard RLHF pipelines to fail.  Innovating on RLHF algorithms can also result in undesirable behavior.
%To offer an example, we will consider inclusive learning (IL), an approach that aims to produce a language model that samples responses according to human choice probabilities \cite{arumugam2022inclusive,xu2023shattering}.  This is in contrast to the standard approach to RLHF, which targets responses with the highest human choice probabilities. 
As an example, we will consider inclusive learning (IL) \cite{arumugam2022inclusive,xu2023shattering} and sequence likelihood calibration (SLiC) from human feedback \cite{zhao2023slichf}.
In Section \ref{se:il-slic}, we show that the IIA assumption exacts egregious behavior from IL and SLiC even on standard preference data.

We demonstrate in this paper how the IIA assumption imposes serious limitations on current RLHF approaches, hindering innovations on alternative query formats and learning algorithms.
The remainder of the paper explains issues and methods more deeply, with a goal to enhance understanding of flaws in current algorithms.  We develop this understanding through interpreting simulations of didactic models, an empirical study of data produced by GPT models, and theoretical results that corroborate the observed behavior.  These theoretical results establish that such behavior generalizes beyond specific instances.  We leave for future work the design of algorithms that alleviate flaws we identify here.

\section{Language Models, Messages, and Preference Data}
\label{se:definitions}

We begin by defining a few basic concepts that we will use when describing and analyzing algorithms.

\subsection{Language Models}

Let $\actions$ be a finite set, which we will refer to as an {\it alphabet}.  We refer to elements as {\it tokens}.  We denote the set of finite token sequences by $\actions^+$.  A {\it language model} is a function $\pi$ that, for each $x \in \actions^+$, specifies a probability mass function $\pi(\cdot|x)$ over $\actions$.  For example, if the alphabet is the set of ASCII characters, a language model provides a distribution $\pi(\cdot|(X_1, X_2, \ldots, X_t))$ conditioned on $t$ characters typed so far.  Sequentially sampling characters $X_{t+1} \sim \pi(\cdot|(X_1, \ldots, X_t))$ generates text.  A language model is alternately referred to as a {\it policy}, as it provides a rule for selecting a next token based on the preceding sequence.

\subsection{Messages}

There is a set $\messages \subseteq \actions^+$ of token sequences that we refer to as {\it messages}.  We interpret $\messages$ as the set of complete individual statements.  For example, in a text messaging application, a message might terminate with either the ASCII {\it enter} character or upon reaching a maximum allowed message length.

A language model generates a random message by sampling sequentially according to $X_{t+1} \sim \pi(\cdot|(X_1, \ldots, X_t))$, terminating at the first index $T$ for which $(X_1, \ldots, X_T) \in \messages$.  For any token sequence $x = (x_1,\ldots,x_T) \in \actions^+$, we use the notation $x_{t_1:t_2}$ to denote the subsequence $(x_{t_1}, \ldots, x_{t_2})$.  Hence, we can use the abbreviated form $X_{t+1} \sim \pi(\cdot | X_{1:t})$.  We assume that, for any language model $\pi$, any token sequence generated sequentially in this manner terminates with probability one.  Through this procedure, a language model $\pi$ samples messages from a distribution $P_\pi$, defined by
\begin{equation}
\label{eq:message-probability}
P_\pi(x) = \prod_{t=1}^{|x|} \pi(x_t|x_{1:t-1}),    
\end{equation}
for $x \in \messages$, where $|x|$ denotes the length of sequence $x$.  For a set of messages $\mathcal{S}$, we use $P_\pi(\mathcal{S})$ to denote $\sum_{x\in \mathcal{S}} P_\pi(x)$.

\subsection{Prefixes}

We refer to a token sequence $x$ as a {\it prefix} if $x_{1:t} \notin \messages$ for all $t = 1,\ldots,|x|$.  In our aforementioned text messaging example, a prefix would be any text absent the {\it enter} character and of length less than the maximum allowed.  Let $\prefixes$ denote the set of prefixes.  While a language model $\pi$ specifies probabilities $\pi(\cdot|x)$ for all token sequences $x \in \actions^+$, only those for which $x \in \prefixes$ are consequential to message generation.

\subsection{Preference Data}

Each {\it preference datum} is a pair $(\Yc, y)$.  The first item is a set $\Yc \subseteq \messages$ of alternative messages, while the second is a choice $y \in \Yc$.  For example, each message in $\mathcal{Y}$ could be a poem, with the choice $y$ indicating an individual's favorite among them.  RLHF algorithms we present use a {\it preference dataset} $\data$.  Each element is a preference datum, and the number of alternative messages can vary across elements.  Note that $\Yc$ and $\data$ are technically multisets because elements can repeat.  However, we will refer to these informally as sets.

\subsection{What About Prompting?}

Language models typically generate a message in response to a prompt made up of previous messages.  In that context, each preference datum expresses a choice between responses to a prompt.  Except for our empirical study of Section \ref{se:empirical}, we omit prompting from our formulation and analysis, because that would only complicate the discussion without contributing to insight.  Examples, algorithms, and results we present can easily be extended to treat prompting.

\section{Choice Models}

Given a set $\Yc \subseteq \messages$ of alternatives, a choice model generates a random element $Y \in \Yc$. We interpret $Y$ to be the choice made by a random individual.  Choice models we consider are each defined with respect to a message set $\messages$ and a triple $(\types, p, r)$, consisting of a set $\types$ of individual types, a type distribution $p$, and a reward function $r$.  We will define these terms and then present examples.  Choices in the first and second examples, but not the third example, satisfy IIA.  After presenting these models, we discuss their relation to IIA and why human preferences for text violate IIA.

\subsection{Individual Types}

Individuals are distinguished by type.  Let $\Zc$ denote the set of types.  The type of a random individual, $Z$, is distributed according to a type distribution $p$.  If the set of types $\Zc$ is finite, this distribution assigns a probability $p(z)$ to each $z \in \types$.

\subsection{Reward Functions}

A reward function $r$ expresses individual preferences between elements of a message set $\messages$.  In particular, an individual of type $z \in \types$ prefers a message $x$ to $x'$ if and only if $r(x|z) \geq r(x'|z)$.

To illustrate with an example, two elements $z, z' \in \mathcal{Z}$ could express detailed characteristics of particular individuals, one of whom resides in China and the other in the United States.  These individuals have different tastes and needs -- a message $x$ that is helpful or interesting to one may not be to the other.  The distinct preferences are expressed by differing reward assignments $r(x|z) \neq r(x|z')$.

\subsection{Choice Probabilities}
\label{se:choice probabilities}

A choice model $(\types, p, r)$ expresses how random individuals make choices.  The model posits that, when presented with the set $\Yc$ of alternatives, an individual of type $z \in \Zc$ samples their choice $Y$ uniformly from $\argmax_{y \in \Yc} r(y|z)$.  This implies choice probabilities
\begin{equation}
\label{eq:choice-probs}
\Pr(Y = y|\Yc) = \E\left[\frac{\1(y \in \argmax_{x \in \Yc} r(x|Z))}{|\argmax_{x \in \Yc} r(x|Z)|}\right],
\end{equation}
where $Z$ is sampled from $p$.  In particular, $\Pr(Y = y|\Yc)$ is the probability that a randomly sampled individual presented with $\Yc$ will choose $y \in \Yc$.

\subsection{Example 1: Logit Models}
\label{se:logit}

The standard logit model can be expressed as a choice model $(\types, p, r)$.
The set $\mathcal{Z}$ of types is comprised of functions that map $\messages$ to $\Re_+$.  Hence, the type of a random individual is expressed by a random function $Z$.  For each $x \in \messages$, $Z(x)$ is distributed as an independent standard Gumbel.  For some fixed {\it base reward function} $\overline{r}:\actions^+ \rightarrow \Re$, let $r(x|z) = \overline{r}(x) + z(x)$.  Given this reward function and type distribution, sampling a choice $Y$ as described above implies choice probabilities governed by the standard logit model with rewards specified by the base reward function $\overline{r}$ \cite{luce1965preferences}.  In particular, $(\types, p, r)$ gives rise to choice probabilities
\begin{equation}
\label{eq:logit}
\Pr(Y = y| \Yc) = \frac{e^{\overline{r}(y)}}{\sum_{y'\in\Yc} e^{\overline{r}(y')}}.
\end{equation}
Conversely, only the choice model $(\types, p, r)$ that we have specified or one with a reward function that differs from $r$ by a constant produces these choice probabilities \cite{mcfadden1974preferences}.

As an example, consider a language model $\pi$ that recommends recipes for dinner.  For each recipe $y$, $\overline{r}(y)$ indicates its nominal desirability -- no individual attributes any less reward to $y$.  But for a specific individual of type $z$, a positive perturbation $z(y)$ is added to reflect their idiosyncratic preferences.  Presented with a set $\Yc$ of recipes, the individual selects the recipe $y \in \Yc$ that maximizes $\overline{r}(y) + z(y)$.  In this way, $z$ induces variation in how different individuals make their selection.  The Gumbel distribution enjoys elegant mathematical structure: if, for each recipe $y$, $z(y)$ is an independent sample from the standard Gumbel then the probability that a random individual selects $y$ over the remaining recipes in $\Yc$ is given by \eqref{eq:logit}.  

\subsection{Example 2: Soft Choice Models}
\label{se:soft-choice-model}

Soft choice models generalize on the logit.  Similarly with the choice model formalism defined in Equation \eqref{eq:choice-probs} -- which we alternately refer to as a {\it hard choice model} -- a soft choice model is specified by a triple $(\types, p, r)$.  However, unlike a hard choice model, choice probabilities are given by
\begin{equation}
\label{eq:soft-choice-probs}
\Pr(Y = y| \Yc) = \E\left[\frac{e^{r(y|Z)}}{\sum_{y'\in\Yc} e^{r(y'|Z)}}\right],
\end{equation}
instead of (\ref{eq:choice-probs}).

A soft choice model $(\types, p, r)$ can be expressed as a hard choice model $(\types', p', r')$.  By this we mean that choice probabilities $\Pr(Y=y|\Yc)$ implied by the two models are identical.  The type set $\Zc'$ is the product set $\Zc \times \Re_+^\messages$.  The type of a random individual is a pair $Z' = (Z,W)$.  The first item $Z$ is sampled from $p$.  The second item comprises of an independent standard Gumbel sample $W(x)$ for each $x \in \messages$.  The reward function is given by $r'(x|z') = r(x|z) + w(x)$.  It is easy to see that this hard choice model $(\types', p', r')$ produces the same choice probabilities (\ref{eq:soft-choice-probs}) as the soft choice model $(\types, p, r)$.

Note that a logit model can be expressed as a soft choice model.  Consider a logit model expressed as a hard choice model $(\types, p, r)$.  Define a soft choice model $(\types', p', r')$ with a single type $\types = \{0\}$, a trivial type distribution for which $p(0) = 1$, and a reward function $r'(x|0) = r(x|0)$. This soft choice model implies choice probabilties
$$\Pr(Y = y| \Yc) = \frac{e^{r'(y|0)}}{\sum_{y'\in\Yc} e^{r'(y'|0)}},$$
which are equivalent to the logit choice probabilities (\ref{eq:logit}) if we take the base reward function to be $\overline{r}(x) = r(x|0)$.

\subsection{Example 3: Dichotomy Models}
\label{se:dichotomy}

Our next model class is crafted to illustrate in a transparent manner perverse incentives induced by current RLHF algorithms. There are two individual types $\mathcal{Z} = \{1,2\}$, each of which prefers one of two categories of messages, identified by nonempty sets $\messages_1$ and $\messages_2$ for which $\messages_1 \cap \messages_2 = \emptyset$ and $\messages_1 \cup \messages_2 = \messages$.  The reward function takes the form
$$r(x|z) = \left\{\begin{array}{ll}
1 \qquad & \text{if } x \in \messages_z \\
0 \qquad & \text{otherwise.}
\end{array}
\right.$$
Note that reward depends on the message $x$ only through its membership in $\messages_1$ or $\messages_2$.  In other words, each individual is indifferent between any two messages of the same type.

The type set $\Zc$ and reward function $r$ we have defined, together with any type distribution $p$, constructs a choice model $(\Zc, p, r)$.  According to this choice model, when presented with alternatives $\mathcal{Y}$, the choice $Y$ made by an individual of type $Z$ is sampled uniformly from $\Yc \cap \messages_Z$.  Hence,
\begin{equation}
\label{eq:dichotomy-choice}
\Pr(Y=y|\Yc) = \E\left[\frac{\1_{\messages_Z}(y)}{|\Yc \cap \messages_Z|}\right],
\end{equation}
where $Z$ is sampled from $p$.

A hypothetical sort of homophily serves to illustrate a context where this sort of model may be relevant.  Suppose a language model $\pi$ generates political commentary, with $\messages_1$ and $\messages_2$ conveying conservative versus liberal views.  In this hypothetical world, there are two types of individuals -- conservatives and liberals -- labeled $1$ and $2$.  Each prefers commentary aligned with their predisposition.  There is no reward for opposing views.

\subsection{Independence of Irrelevant Alternatives}

Consider three messages $\messages = \{\mathrm{dog}, \mathrm{cat}, \mathrm{feline}\}$ that each guess an individual's favorite pet.  Consider a logit choice model, as described in Section \ref{se:logit}, that assigns equal base rewards $r(\mathrm{dog}) = r(\mathrm{cat}) = r(\mathrm{feline})$ to indicate that a random individual is as likely to prefer dogs or cats.  Presented with two alternatives $\Yc = \{\mathrm{dog}, \mathrm{cat}\}$, the logit model produces choice probabilities
$\Pr(Y=\mathrm{dog} | \Yc) = \Pr(Y=\mathrm{cat} | \Yc) = 1/2$.
On the other hand, with three alternatives $\Yc = \{\mathrm{dog}, \mathrm{cat}, \mathrm{feline}\}$,
$\Pr(Y=\mathrm{dog} | \Yc) = \Pr(Y=\mathrm{cat} | \Yc) = \Pr(Y=\mathrm{feline} | \Yc) = 1/3$.
This of course makes no sense, because {\it feline} is synonymous to {\it cat}, so presenting it as an alternative ought not reduce the probability that an individual prefers dogs.

The aforementioned implausible implication of a logit model derives from its assumption of independence of irrelevant alternatives (IIA).  While this notion was formalized in earlier work \cite{arrow1951social}, the treatment of \cite{luce1959individual} best suits our usage.  In that treatment, IIA indicates that the ratio of any two choice probabilities does not vary with the set of alternatives.  More formally, for all messages $x,x' \in \messages$ and sets $\Xc$ and $\Xc'$ such that $x,x' \in \Xc \cap \Xc'$, 
\begin{equation}
\label{eq:IIA}
\frac{\Pr(Y = x | \Yc = \Xc)}{\Pr(Y = x' | \Yc = \Xc)} = \frac{\Pr(Y = x | \Yc = \Xc')}{\Pr(Y = x' | \Yc = \Xc')}.
\end{equation}
This property implies that if cat-lovers are indifferent and randomly choose between {\it cat} and {\it feline} then the introduction of {\it feline} as an alternative reduces not only the probability that a random individual selects {\it cat} but also {\it dog}.  In particular, (\ref{eq:IIA}) specializes to
$$\frac{\Pr(Y = \mathrm{dog} | \Yc = \{\mathrm{dog}, \mathrm{cat}\})}{\Pr(Y = \mathrm{cat} | \Yc = \{\mathrm{dog}, \mathrm{cat}\})} = \frac{\Pr(Y = \mathrm{dog} | \Yc = \{\mathrm{dog}, \mathrm{cat}, \mathrm{feline}\})}{\Pr(Y = \mathrm{cat} | \Yc = \{\mathrm{dog}, \mathrm{cat}, \mathrm{feline}\})}.$$

Our dichotomy model of Section \ref{se:dichotomy} relaxes IIA and can generate more realistic pet choices, as we will now explain.  Take the individual types $\mathcal{Z} = \{1,2\}$ to be dog and cat lovers, with type probabilities $p_*(1) = p_*(2) = 1/2$, and rewards $r_1 = r_2 = 1$.  Then, presented with two alternatives $\Yc = \{\mathrm{dog}, \mathrm{cat}\}$, the dichotomy model produces the same choice probabilities as the logit model:
$\Pr(Y=\mathrm{dog} | \Yc) = \Pr(Y=\mathrm{cat} | \Yc) = 1/2$.
On the other hand, with three alternatives $\Yc = \{\mathrm{dog}, \mathrm{cat}, \mathrm{feline}\}$,
$\Pr(Y=\mathrm{dog} | \Yc) = 1/2$, while $\Pr(Y=\mathrm{cat} | \Yc) = \Pr(Y=\mathrm{feline} | \Yc) = 1/4$.  Hence, unlike the logit model, with the dichotomy model the probability that a random individual chooses {\it dog} appropriately remains unchanged.

\subsection{IIA for Language}

Human preferences for text messages do not satisfy IIA.  This is because, for any given message, others can be virtually equivalent.  Our preceding example with three messages -- {\it dog}, {\it cat}, and {\it feline} -- illustrates this.  The introduction of {\it feline} as an alternative should not reduce the probability that an individual chooses {\it dog}.

In spite of this, current RLHF algorithms are based on models that assume IIA.  In particular, preference estimates used in RLHF satisfy IIA even though human preferences expressed through data do not.  These estimates can give rise to perverse incentives to generate undesirable messages.  We will discuss in Section \ref{se:perverse-incentives} potentially egregious consequences.  But first, we review current RLHF algorithms.

\section{RLHF Algorithms}
\label{se:algorithms}

An RLHF algorithm, given a {\it base language model} $\overline{\pi}$ and {\it preference data} $\data$, produces a {\it fine-tuned language model} $\hat{\pi}$.  In practice, the base language model is typicaly a pretrained model that has undergone some supervised fine-tuning.  Figure \ref{fig:rlhf} illustrates the RLHF algorithm interface.  In this section, we describe several instances of RLHF algorithms.  

\begin{figure}[htb]
\begin{center}
\includegraphics[scale=0.25]{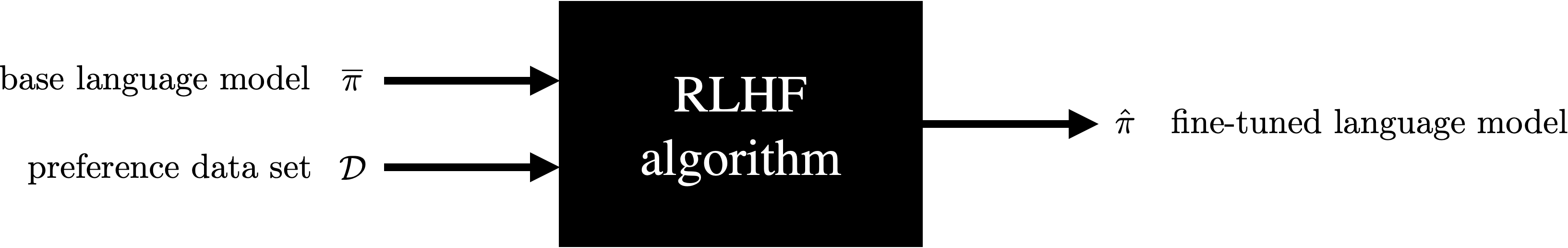}
\caption{RLHF algorithm interface.}
\label{fig:rlhf}
\end{center}
\end{figure}

\subsection{Reward Learning Followed by Policy Optimization}

A standard approach to RLHF first fits a logit model to preference data and then tunes a language model to optimize reward \cite{stiennon2020learning,ouyang2022training}.  We will refer to this as RLPO, which stands for {\it reward learning followed by policy optimization}.  RLPO uses a reward model $\hat{r}_\psi$, which is parameterized by a vector $\psi$.  This reward model maps $\messages$ to $\Re$ and is interpreted as expressing approximate soft choice probabilities $\Pr(Y = y| \Yc) \approx e^{\hat{r}_\psi(y)} / \sum_{y'\in\Yc} e^{\hat{r}_\psi(y')}$.  RLPO tunes parameters to minimize cross-entropy loss on preference data:
\begin{equation}
\label{eq:reward-loss}
\mathcal{L}_\mathrm{reward}(r) = - \sum_{(\Yc, y) \in \data} \ln\left(\frac{e^{r(y)}}{\sum_{y'\in\Yc} e^{r(y')}}\right).
\end{equation}
The reward function $r_\psi$ could, for example, be linear in a vector message embedding $\phi(x)$, taking the form $\hat{r}_\psi(x) = \psi^\top \phi(x)$.  Alternatively, $\hat{r}_\psi$ could be a neural network with parameters $\psi$.

For any $r:\messages\rightarrow\Re$ and language model $\pi$, we define a policy loss function
\begin{equation}
\label{eq:policy-optimization}
\mathcal{L}_\mathrm{policy}(\pi | r) = - |\data| \sum_{x \in \messages} P_\pi(x) r(x) + \beta \KL(P_\pi\|P_{\overline{\pi}}),
\end{equation}
for some scalar hyperparameter $\beta > 0$.  The first term of this loss function expresses expected reward while the second regularizes toward the base language model.

A fine-tuned language model is obtained via two steps.  The first minimizes $\mathcal{L}_\mathrm{reward}$ to produce a reward function $\hat{r}_{\hat{\psi}}$.  This is followed by a policy optimization step, which, given a language model $\hat{\pi}_\theta$ parameterized by a vector $\theta$, minimizes $\mathcal{L}_\mathrm{policy}(\cdot | \hat{r}_{\hat{\psi}})$ to produce the fine-tuned language model $\hat{\pi}_{\hat{\theta}}$.  This minimization is typically carried out via stochastic gradient descent starting with $\theta$ initialized to match the base policy $\hat{\pi}_\theta = \overline{\pi}$.

\subsection{Direct Preference Optimization}

Direct preference optimization (DPO) approximates RLPO while bypassing the reward modeling step \cite{rafailov2023dpo}.  The single-step process can be derived by first observing that if  $\pi$ minimizes $\mathcal{L}_\mathrm{policy}(\cdot | r)$ then
\begin{equation}
\label{eq:reward-to-policy}
P_\pi(x) = \frac{P_{\overline{\pi}}(x) e^{r(x) |\data|/\beta}}{\sum_{x' \in \messages} P_{\overline{\pi}}(x') e^{r(x') |\data|/ \beta}},
\end{equation}
and therefore, 
\begin{equation}
\label{eq:r_in_terms_of_pi}
r(x) = \frac{\beta}{|\data|} \ln \frac{P_\pi(x)}{P_{\overline{\pi}}(x)} + \frac{\beta}{|\data|} \ln \sum_{x' \in \messages} P_{\overline{\pi}}(x') e^{r(x') |\data|/\beta}.
\end{equation}
Substituting $r$ in \eqref{eq:reward-loss} with the right-hand-side of \eqref{eq:r_in_terms_of_pi} yields a new loss function
\begin{equation}
\label{eq:dpo-loss}
\mathcal{L}_\mathrm{DPO}(\pi) = - \sum_{(\Yc, y) \in \data} \ln \frac{
\left(P_{\pi}(y)/ P_{\overline{\pi}}(y)\right)^{\beta/|\data|}}{\sum_{y'\in\Yc} 
\left(P_{\pi}(y') / P_{\overline{\pi}}(y')\right)^{\beta/|\data|}}.
\end{equation}
For a parameterized language model $\hat{\pi}_\theta$, minimizing $\mathcal{L}_\mathrm{DPO}$ produces a fine-tuned language model $\hat{\pi}_{\hat{\theta}}$.  

As we will see, RLPO and DPO both can give rise to egregious behavior when queries include more than two alternatives.  Recent work proposes a family of RLHF algorithms that unifies and extends RLPO and DPO \cite{azar2023general}.  Other so-called $\Psi$PO algorithms suffer in the same manner when queries include more than two alternatives.

\subsection{Inclusive Learning and Sequence Likelihood Calibration}
Unlike RLPO or DPO which maximize reward, inclusive learning (IL) aims to produce language models that reflect the diversity of preferences across the population \cite{xu2023shattering}.
%RLPO and DPO maximize reward.  This tends toward {\it monolithic language models} that serve a majority of the population.  Inclusive learning (IL) offers an alternative approach to RLHF, aiming to produce {\it inclusive language models} that reflect the diversity of preferences across the population \cite{xu2023shattering}.
IL produces a language model $\pi$ that simultaneously serves as a reward model.  In particular, reward is taken to be the log-probability $\ln P_\pi(x)$ assigned to a message.  Minimizing cross-entropy loss while regularizing toward the base policy $\overline{\pi}$ gives rise to an inclusive loss function
\begin{equation*}
% \label{eq:inclusive-loss}
\mathcal{L}_\mathrm{IL}(\pi) = - \sum_{(\Yc, y) \in \data} \ln \frac{P_{\pi}(y)}{\sum_{y'\in\Yc} P_{\pi}(y')} + \beta \KL(P_\pi\|P_{\overline{\pi}}).
\end{equation*}
For a parameterized language model $\hat{\pi}_\theta$, minimizing $\mathcal{L}_\mathrm{IL}$ produces a fine-tuned language model $\hat{\pi}_{\hat{\theta}}$.

While this IL algorithm was introduced in \cite{xu2023shattering}, it is closely related to other approaches proposed in the literature \cite{zhao2023slichf, hejna2023contrastive}, which share in the merits and faults of IL that we will discuss.  A notable representative is sequence likelihood calibration (SLiC) with human feedback \citep{zhao2023slichf}. For $|\Yc| = 2$, the loss function for SLiC is defined as
\begin{equation*}
    \Lc_\mathrm{SLiC}(\pi) = \sum_{(\Yc,y)\in\data} \left(\ln \frac{P_{\pi}(\Yc\setminus \{y\})}{P_{\pi}(y)} + \delta\right)_+ + \beta \KL (P_{\pi}\| P_{\overline{\pi}}),
\end{equation*}
for some scalar margin $\delta > 0$.  % For a parameterized language model $\hat{\pi}_\theta$, minimizing $\mathcal{L}_\mathrm{SLiC}$ produces a fine-tuned language model $\hat{\pi}_{\hat{\theta}}$.

\section{Perverse Incentives}
\label{se:perverse-incentives}

We will explain how RLPO, DPO, IL, and SLiC can fail to produce desirable results due to the fact that underlying models assume IIA.  Our explanations build on theoretical and computational results.  These results assume simple processes for generation of preference data, which we now describe.  In the cases of RLPO and DPO, failure modes arise only when queries include more than two alternative messages.

\subsection{Dichotomy Data}
\label{se:dichotomy-data}

Our simulation and theoretical results assume a particular data generating process, articulated by the following assumption.
\begin{restatable}[dichotomy data]{assumption}{AssumptionDichotomy}
\label{as:dichotomy}
Elements of $\data$ are sampled iid.  For each datum $(\Yc, Y) \in \data$, the choice $Y$ is sampled according to a dichotomy model $(\Zc, p_*, r_*)$.  For any $(\Yc,Y), (\Yc', Y') \in \data$, $|\Yc| = |\Yc'|$.  Each element of each set $\Yc$ of alternatives is sampled independently by a language model $\overline{\pi}$.  For any message category $\tau \in \{1,2\}$ and messages $y,y' \in \messages_\tau$, $P_{\overline{\pi}}(y) = P_{\overline{\pi}}(y')$.
\end{restatable}
Recall that, for the dichotomy model, there are two individual types, $\types = \{1,2\}$, and $r_*(x|z) = \1_{\messages_z}(x)$.  Under our assumption, each tuple $(\Yc, y)$ can be viewed as generated as follows.  First, each element of $\Yc$ is generated by sampling $\tau$ and then a message uniformly from $\messages_\tau$.  The distribution of $\tau$ is implied by the language model $\overline{\pi}$.  Then, $Z$ is sampled from $p_*$ and $Y$ is sampled uniformly from $\Yc \cap \messages_Z$.

\subsection{Architectures}
\label{se:architectures}

Each RLHF algorithm we have described operates by minimizing one or more loss functions.  The argument of each is itself a function.  Optimization is carried out by tuning parameters of an approximation architecture taking the form, for example, of a neural network.  Some of our results pertain to particular simple architectures chosen to produce transparent analyses specifically for data satisfying Assumption \ref{as:dichotomy}.  Our next two assumptions describe these architectures.  The first pertains to the reward function architecture.
\begin{restatable}[reward architecture]{assumption}{AssumptionDichotomyRewardArchitecture}
\label{as:dichotomy-reward-architecture}
For each $\psi \in \Re^2$, $z\in \Zc$, and $x\in \messages_z$, $\hat{r}_\psi(x) = \psi_z$.
\end{restatable}
Under this assumption, each reward function $r_\psi$ is parameterized by two scalars -- $\psi_1$ and $\psi_2$ -- which express estimates of rewards enjoyed by individuals who receive their desired type of message.

Our next assumption pertains to the policy architecture.
\begin{restatable}[policy architecture]{assumption}{AssumptionDichotomyPolicyArchitecture}
\label{as:dichotomy-policy-architecture}
For each $\theta \in \Re^2$, $z \in \Zc$, and $x \in \messages_z$, $P_{\hat{\pi}_\theta}(x) = e^{\theta_z} / (|\messages_1|e^{\theta_1} + |\messages_2|e^{\theta_2})$.
\end{restatable}
Under this assumption, each policy is identified by two scalars -- $\theta_1$ and $\theta_2$.  Increasing either $\theta_1$ or $\theta_2$ increases the chances of generating messages of the corresponding type.

\subsection{Simulation Setup}
\label{se:simulation}

Our simulations are carried out with data and architectures that satisfy Assumptions \ref{as:dichotomy}, \ref{as:dichotomy-reward-architecture}, and \ref{as:dichotomy-policy-architecture}.  Message sets are of cardinality $|\messages_1| = 10$ and $|\messages_2| = 100$ unless noted otherwise.  The choice model type distribution and the reward function are given by $p_*(1) = 0.6$, $p_*(2) = 0.4$, and $r_*(x|z) = \1_{\messages_z}(x)$.  The base language model $\overline{\pi}$ satisfies $P_{\overline{\pi}}(x) = 0.8 / |\messages_1|$ for $x \in \messages_1$ and $P_{\overline{\pi}}(x) = 0.2 / |\messages_2|$ for $x \in \messages_2$.  Hence, a dominant fraction $p_*(1) > p_*(2)$ of the population is of type $1$, individuals of that type prefer messages in $\messages_1$, and the baseline $\overline{\pi}$ tends to generate type $1$ messages an even larger fraction of the time than $p_*(1)$.  We use a regularization penalty coefficient of $\beta=1$.  Under these circumstances, it is surprising that, as we will see, RLHF algorithms can produce language models that almost always generate messages of type $2$.

\subsection{RLPO and DPO}
\label{se:rlpo-failure}

RLPO and DPO are designed to produce language models that generate desired messages.  As such, we should expect language models produced by these RLHF algorithms to gravitate toward messages in $\messages_1$, which are preferred by $60\%$ of the population.  As can be seen in Figure \ref{fig:rlpo-and-dpo-fail}, 
this is indeed the case when each choice set $|\Yc|$ contains two messages.  In particular, for sufficiently large datasets, language models produced by RLPO and DPO consistently generate elements of $\messages_1$.  However, with larger choice sets, the language models consistently generate elements of $\messages_2$.\footnote{Larger choice sets are common, for example, in applications where a language model is used to suggest alternative messages for use by a human agent who is assisting a user.  In such contexts, the human agent selects one of the alternatives or manually crafts a response.  The human agent's choice serves as feedback that can be used to train the language model in order to improve subsequent suggestions.}

\begin{figure}[htb]
\begin{center}
\includegraphics[scale=0.28]{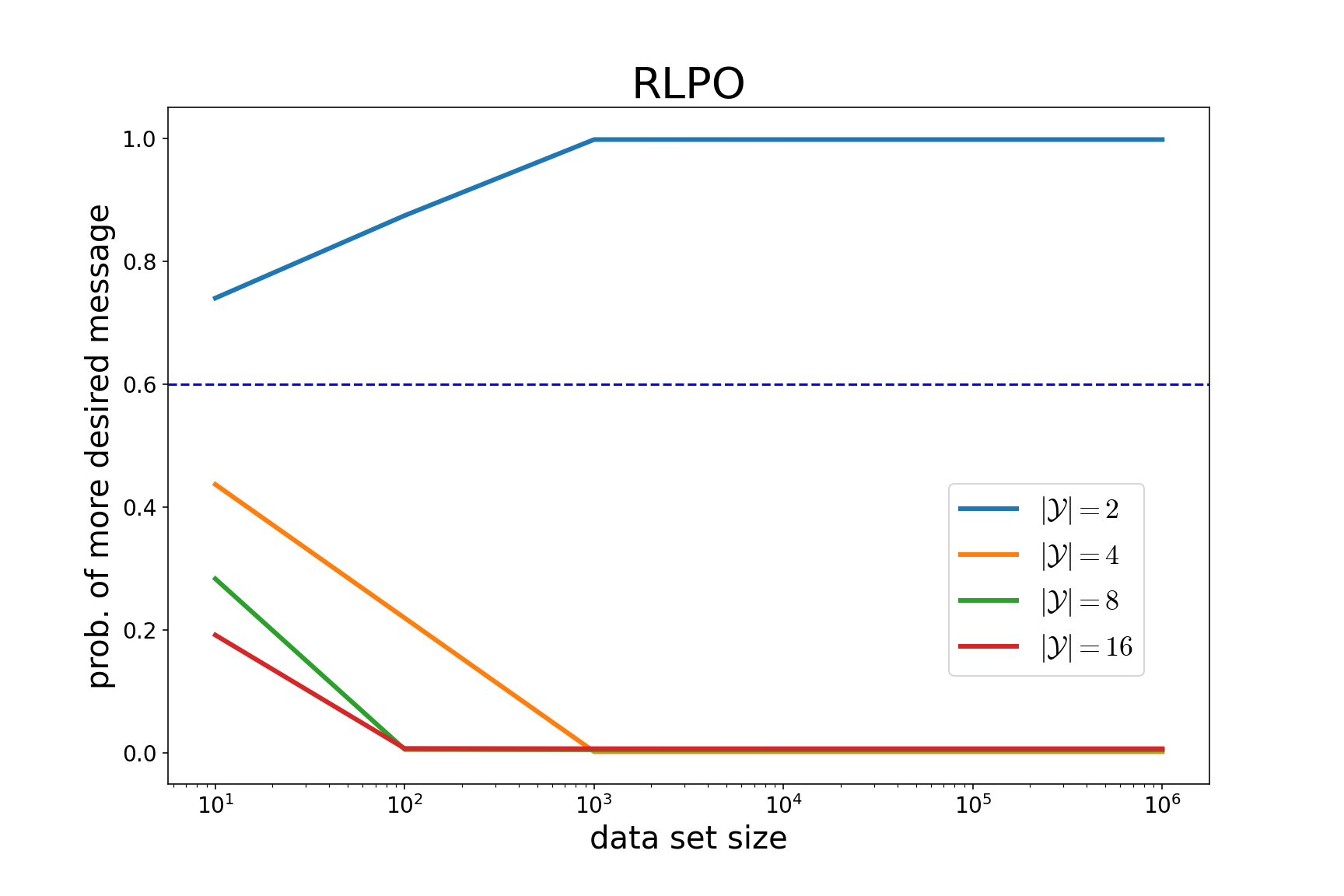} \hspace{-0.35in}
\includegraphics[scale=0.28]{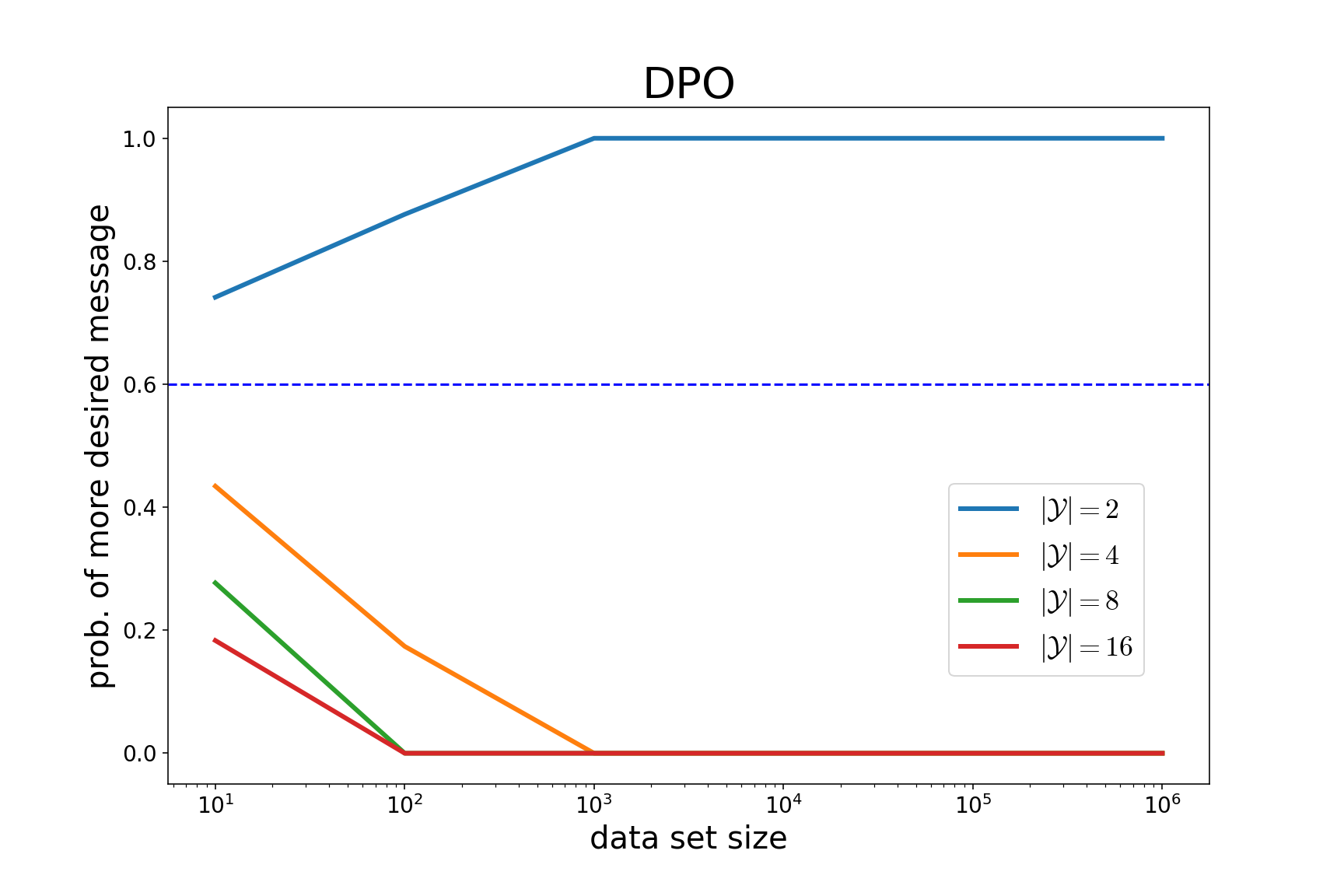}
\caption{For choice sets $\Yc$ containing two messages, as $\data$ grows, RLPO and DPO produce language models that consistently generate messages most likely to be preferred.  However, with larger choice sets, less preferred messages are consistently generated.  Each plot is averaged over one hundred independent simulations.}
\label{fig:rlpo-and-dpo-fail}
\end{center}
\end{figure}

It may seem surprising that RLPO and DPO fail so egregiously when choice sets include more than a pair of alternatives, and it is natural to wonder whether this could be due to technical details of our simulation.  However, the following theoretical results establish in greater generality that minimizing RLPO loss functions can lead to egregious outcomes.

% It is worth noting that the minimizer of the optimization problem $\argmin_\psi \Lc_\mathrm{reward}(\hat{r}_\psi)$ might not exist. Thus, for simplicity of exposition, we define
% \begin{equation}
% \label{eqn:psi_hat}
% \hat{\psi} = \left \{
% \begin{array}{ll}
% \argmin_\psi \Lc_\mathrm{reward}(\hat{r}_\psi) & \textrm{if $\argmin_\psi \Lc_\mathrm{reward}(\hat{r}_\psi)$ exists and is unique} \\
% 0 & \textrm{otherwise}
% \end{array} \right. 
% \end{equation}

\begin{restatable}[RLPO failure]{proposition}{dichotomyRlpo}
\label{prop:dichotomy-rlpo-policy}
Under Assumptions \ref{as:dichotomy}, \ref{as:dichotomy-reward-architecture}, and \ref{as:dichotomy-policy-architecture}, for all $|\Yc|\ge 3$, if $p_*(1) < F(P_{\overline{\pi}}(\messages_1))$ with $F(\zeta) = \frac{\zeta - \zeta^{|\Yc|}}{1 - \zeta^{|\Yc|} - (1-\zeta)^{|\Yc|}}$, then, as $|\data| \rightarrow \infty$,
\begin{equation}
P_{\hat{\pi}_{\hat{\theta}}}(\Mc_1) \stackrel{p.}{\longrightarrow} 0 \qquad \text{and} \qquad \nn P_{\hat{\pi}_{\hat{\theta}}}(\Mc_2) \stackrel{p.}{\longrightarrow} 1,
\end{equation}
where $\hat{\theta}$ minimizes $\Lc_\mathrm{policy}(\hat{\pi}_\theta|\hat{r}_{\hat{\psi}})$, and $\hat{\psi} \in \argmin_\psi \Lc_\mathrm{reward}(\hat{r}_\psi)$ if the loss $\Lc_\mathrm{reward}(\hat{r}_\psi)$ has a minimizer, and $\hat{\psi}=0$ otherwise.
\end{restatable}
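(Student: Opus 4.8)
\section*{Proof proposal}

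The plan is to decompose the argument into the two optimization stages that define $\hat{\pi}_{\hat{\theta}}$ --- reward learning and then policy optimization --- and to use Assumptions \ref{as:dichotomy-reward-architecture} and \ref{as:dichotomy-policy-architecture} to reduce each stage to a scalar problem. Because $\hat{r}_\psi$ is constant on each $\Mc_z$ and $P_{\hat{\pi}_\theta}$ is uniform on each $\Mc_z$, every relevant quantity depends only on the scalar gap $\hat{\Delta} := \hat{\psi}_1 - \hat{\psi}_2$ and on the category masses $P_{\hat{\pi}_{\hat{\theta}}}(\Mc_1), P_{\hat{\pi}_{\hat{\theta}}}(\Mc_2)$. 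The proposition then reduces to two claims: (i) $\hat{\Delta}$ converges in probability to a limit $\Delta^\star$ whose sign is governed by the stated threshold, and (ii) a negative $\Delta^\star$ forces the fine-tuned policy to place vanishing mass on $\Mc_1$.

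For the reward stage, I would first observe that under Assumption \ref{as:dichotomy} each choice set $\Yc$ contains $K \sim \mathrm{Binomial}(n,\zeta)$ type-$1$ messages, with $n = |\Yc|$ and $\zeta = P_{\overline{\pi}}(\Mc_1)$, and that by the dichotomy choice rule \eqref{eq:dichotomy-choice} the chosen message is of type $1$ with conditional probability $p_*(1)$ when both types are present, $1$ when $K=n$, and $0$ when $K=0$. Writing the cross-entropy loss \eqref{eq:reward-loss} in terms of $\Delta$ and dividing by $|\data|$, the law of large numbers gives pointwise convergence of the empirical objective to the population objective
\begin{equation*}
\ell(\Delta) = \E\!\left[\ln\!\big(K e^{\Delta} + n - K\big)\right] - \rho\,\Delta,
\qquad \rho = \zeta^{n} + p_*(1)\big(1 - \zeta^{n} - (1-\zeta)^{n}\big).
\end{equation*}
One checks that $\ell$ is strictly convex, hence has a unique minimizer $\Delta^\star$ with $\ell'(\Delta^\star)=0$; since $\ell'(0) = \zeta - \rho$ and $\ell'$ is increasing, $\Delta^\star < 0$ exactly when $\zeta > \rho$, which rearranges to $p_*(1) < F(\zeta)$ (the denominator $1 - \zeta^{n} - (1-\zeta)^{n}$ is positive for $n \ge 3$, so the inequality direction is preserved). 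Convexity then upgrades pointwise to uniform convergence on compacta, so the empirical minimizer $\hat{\Delta}$ converges in probability to $\Delta^\star$.

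For the policy stage, I would invoke the closed form \eqref{eq:reward-to-policy}: the minimizer of $\Lc_\mathrm{policy}(\cdot\,|\,\hat{r}_{\hat{\psi}})$ satisfies $P_{\hat{\pi}_{\hat{\theta}}}(x) \propto P_{\overline{\pi}}(x)\, e^{\hat{r}_{\hat{\psi}}(x)|\data|/\beta}$. Since both $P_{\overline{\pi}}$ and $\hat{r}_{\hat{\psi}}$ are constant on each $\Mc_z$, this optimum is uniform within each category and is therefore realizable within the architecture of Assumption \ref{as:dichotomy-policy-architecture}; taking the ratio of category masses yields
\begin{equation*}
\frac{P_{\hat{\pi}_{\hat{\theta}}}(\Mc_1)}{P_{\hat{\pi}_{\hat{\theta}}}(\Mc_2)}
= \frac{P_{\overline{\pi}}(\Mc_1)}{P_{\overline{\pi}}(\Mc_2)}\, e^{\hat{\Delta}\, |\data|/\beta}.
\end{equation*}
When $\Delta^\star < 0$, the convergence $\hat{\Delta} \stackrel{p.}{\longrightarrow} \Delta^\star$ makes $\hat{\Delta}$ eventually bounded above by $\Delta^\star/2 < 0$ with high probability, so the exponent $\hat{\Delta}\,|\data|/\beta \to -\infty$ and the ratio collapses to $0$, giving $P_{\hat{\pi}_{\hat{\theta}}}(\Mc_1)\stackrel{p.}{\longrightarrow}0$ and $P_{\hat{\pi}_{\hat{\theta}}}(\Mc_2)\stackrel{p.}{\longrightarrow}1$.

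The main obstacle I anticipate is the M-estimation step rather than the algebra: making rigorous that the finite-data argmin converges to $\Delta^\star$, and handling the edge case in which $\Lc_\mathrm{reward}$ has no finite minimizer (the convention $\hat{\psi}=0$). The latter occurs only when the data are \emph{separable} --- for instance, every query containing a type-$1$ message is answered with a type-$1$ choice --- an event whose probability vanishes as $|\data|\to\infty$, because queries with both types present and a type-$2$ choice occur with fixed positive probability; on this vanishing event the fallback cannot affect a convergence-in-probability conclusion. The remaining care is to couple the in-probability convergence of $\hat{\Delta}$ with the diverging factor $|\data|/\beta$ in the exponent, which is precisely the quantitative step handled above by bounding $\hat{\Delta}$ away from $0$ on the high-probability event.
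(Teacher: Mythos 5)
Your proposal is correct in substance and follows the same two-stage skeleton as the paper (scalar reduction via the two architectures, then the closed-form policy ratio $\frac{P_{\overline{\pi}}(\messages_1)}{P_{\overline{\pi}}(\messages_2)}\,e^{\hat{\Delta}|\data|/\beta}$, which matches the paper's policy-stage lemma exactly), but your reward-stage argument takes a genuinely different route. The paper never passes to a population objective: it defines the empirical fractions $\rho_\mathrm{chosen}$ (data whose choice lies in $\messages_1$) and $\rho_\mathrm{data}$ (presented messages lying in $\messages_1$), proves a deterministic finite-sample lemma --- via a tangent-line bound for strictly convex functions --- showing that any dataset with $\rho_\mathrm{data} > \rho_\mathrm{chosen} > 0$ yields a reward minimizer with $\hat{\psi}_2 - \hat{\psi}_1 > \frac{\rho_\mathrm{data}-\rho_\mathrm{chosen}}{1+\rho_\mathrm{data}-\rho_\mathrm{chosen}}$, and then uses Chernoff and Hoeffding bounds, conditioning on queries containing both types (where $\E\left[|\Yc\cap\messages_1|/|\Yc|\right]=F(P_{\overline{\pi}}(\messages_1))$ while $\E\left[\1(y\in\messages_1)\right]=p_*(1)$), to show this event holds with probability tending to one. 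You instead run convex M-estimation: pointwise LLN convergence to $\ell(\Delta)$, the sign analysis $\ell'(0)=\zeta-\rho$, and argmin consistency via convexity; your unconditional $\rho$ and the paper's conditional expectation encode the same threshold. The trade-off: the paper's route gives explicit exponential tail bounds and needs no uniform-convergence or epi-convergence machinery; yours pinpoints the population limit $\Delta^\star$ and makes the origin of $F$ transparent, but leans on an argmin-consistency theorem for convex random functions (Hjort--Pollard style) that you would need to state and verify carefully, and it yields no rate. One caveat deserves attention, though it is shared with (and inherited from) the paper: strict convexity gives uniqueness of $\Delta^\star$, not existence; existence requires $\ell'$ to change sign at $\pm\infty$, which forces $p_*(1)>0$ (that $p_*(1)<1$ already follows from $p_*(1)<F\le 1$). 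When $p_*(1)=0$, the empirical loss is with high probability strictly decreasing in $\psi_2$, no minimizer exists, the fallback $\hat{\psi}=0$ is triggered, and the fine-tuned policy coincides with $\overline{\pi}$, so the stated conclusion cannot hold --- the paper's own proof is likewise vacuous there, since its Hoeffding bound carries $p_*^2(1)$ in the exponent. Relatedly, your separability discussion names only one of the two directions in which the empirical minimizer can fail to exist (no mixed query with a type-2 choice); the other direction (no mixed query with a type-1 choice) also must be ruled out, and again this needs $p_*(1)>0$.
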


% \begin{restatable}[RLPO failure]{proposition}{dichotomyRlpo}
% \label{prop:dichotomy-rlpo-policy}
% Under Assumptions \ref{as:dichotomy}, \ref{as:dichotomy-reward-architecture}, and \ref{as:dichotomy-policy-architecture}, for all $K\ge 3$ and $\epsilon \in (0,1)$, if $p_*(1) < F(P_{\overline{\pi}}(\messages_1))$ with $F(x) = \frac{x - x^K}{1 - x^K - (1-x)^K}$, $\hat{\psi}$ satisfies $\Lc_\mathrm{reward}(\hat{r}_{\hat{\psi}}) < \inf_\psi\Lc_\mathrm{reward}(\hat{r}_\psi) + \epsilon$, and $\hat{\theta}$ minimizes $\Lc_\mathrm{policy}(\hat{\pi}_\theta|\hat{r}_{\hat{\psi}})$, then
% \begin{equation}
% P_{\hat{\pi}_{\hat{\theta}}}(\Mc_1) \stackrel{p.}{\longrightarrow} 0 \qquad \text{and} \qquad \nn P_{\hat{\pi}_{\hat{\theta}}}(\Mc_2) \stackrel{p.}{\longrightarrow} 1
% \end{equation}
% as $|\data|$ grows.
% \end{restatable}

Given that DPO is designed to approximate RLPO, one would expect a similar theoretical result to hold for DPO.  The following proposition formally establishes this.

\begin{restatable}[DPO failure]{proposition}{dichotomyDpo}
\label{prop:dichotomy-dpo}
Under Assumptions \ref{as:dichotomy}, \ref{as:dichotomy-reward-architecture}, and \ref{as:dichotomy-policy-architecture}, for all $|\Yc|\ge 3$, if $p_*(1) < F(P_{\overline{\pi}}(\messages_1))$ with $F(\zeta) = \frac{\zeta - \zeta^{|\Yc|}}{1 - \zeta^{|\Yc|} - (1-\zeta)^{|\Yc|}}$, then as $|\data| \rightarrow \infty$,
\begin{equation}
P_{\hat{\pi}_{\hat{\theta}}}(\Mc_1) \stackrel{p.}{\longrightarrow} 0 \qquad \text{and} \qquad \nn P_{\hat{\pi}_{\hat{\theta}}}(\Mc_2) \stackrel{p.}{\longrightarrow} 1,
\end{equation}
where $\hat{\theta} \in \argmin_{\theta} \Lc_\mathrm{DPO}(\hat{\pi}_{\theta})$ if the loss $\Lc_\mathrm{DPO}(\hat{\pi}_{\theta})$ has a minimizer, and $\hat{\theta} = 0$ otherwise.
\end{restatable}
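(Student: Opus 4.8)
The plan is to show that, under the three assumptions, minimizing $\Lc_\mathrm{DPO}$ over the policy parameter $\theta$ reduces, after an invertible reparametrization, to the reward-model fitting problem solved in the first stage of RLPO, and that DPO and RLPO therefore induce \emph{identical} category probabilities $(P_{\hat{\pi}_{\hat{\theta}}}(\Mc_1), P_{\hat{\pi}_{\hat{\theta}}}(\Mc_2))$ on the high-probability event that the loss admits a finite minimizer. The conclusion is then immediate from Proposition \ref{prop:dichotomy-rlpo-policy}.

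First I would carry out the reduction. Write $\bar{p}_z := P_{\overline{\pi}}(x)$ for $x\in\Mc_z$, which is well defined by Assumption \ref{as:dichotomy}, let $z(y)$ denote the category of a message $y$, and set $N(\theta) := |\Mc_1|e^{\theta_1}+|\Mc_2|e^{\theta_2}$. By Assumption \ref{as:dichotomy-policy-architecture}, for $y\in\Mc_z$ we have $P_{\hat{\pi}_\theta}(y)/P_{\overline{\pi}}(y)=e^{\theta_z}/(\bar{p}_z\, N(\theta))$. Substituting into \eqref{eq:dpo-loss}, the factor $N(\theta)^{-\beta/|\data|}$ is common to every term in the numerator and denominator of each summand and cancels, leaving
\begin{equation*}
\Lc_\mathrm{DPO}(\hat{\pi}_\theta) = -\sum_{(\Yc,y)\in\data}\ln\frac{e^{\rho_{z(y)}}}{\sum_{y'\in\Yc}e^{\rho_{z(y')}}}, \qquad \rho_z := \frac{\beta}{|\data|}\bigl(\theta_z-\ln\bar{p}_z\bigr).
\end{equation*}
By Assumption \ref{as:dichotomy-reward-architecture} the right-hand side is exactly $\Lc_\mathrm{reward}(\hat{r}_\psi)$ of \eqref{eq:reward-loss} with $\psi_z=\rho_z$. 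Since $\theta\mapsto\rho$ is an invertible affine map of $\Re^2$ and both losses depend on their argument only through the gap $\rho_1-\rho_2$ (equivalently $\theta_1-\theta_2$), minimizing $\Lc_\mathrm{DPO}$ over $\theta$ is equivalent to minimizing $\Lc_\mathrm{reward}$ over $\psi$, and a finite minimizing gap exists for one loss iff it does for the other.

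Next I would translate the minimizer back. Fix any dataset for which $\Lc_\mathrm{reward}$ has a finite minimizing gap; by strict convexity of $\Lc_\mathrm{reward}$ in the gap on this event the gap is unique and equals the RLPO quantity $\hat{\psi}_1-\hat{\psi}_2$. The reparametrization yields a corresponding DPO minimizer with $\hat{\theta}_1-\hat{\theta}_2=(\ln\bar{p}_1-\ln\bar{p}_2)+(|\data|/\beta)(\hat{\psi}_1-\hat{\psi}_2)$, so that, using Assumption \ref{as:dichotomy-policy-architecture} and $|\Mc_z|\bar{p}_z=P_{\overline{\pi}}(\Mc_z)$,
\begin{equation*}
\frac{P_{\hat{\pi}_{\hat{\theta}}}(\Mc_1)}{P_{\hat{\pi}_{\hat{\theta}}}(\Mc_2)} = \frac{|\Mc_1|}{|\Mc_2|}e^{\hat{\theta}_1-\hat{\theta}_2} = \frac{P_{\overline{\pi}}(\Mc_1)}{P_{\overline{\pi}}(\Mc_2)}\,e^{(|\data|/\beta)(\hat{\psi}_1-\hat{\psi}_2)}.
\end{equation*}
This is precisely the category ratio produced by the RLPO policy-optimization step, since the optimal policy \eqref{eq:reward-to-policy} for $\hat{r}_{\hat{\psi}}$ is constant within each category and hence representable in the architecture of Assumption \ref{as:dichotomy-policy-architecture}. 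Because the ratio depends only on $\theta_1-\theta_2$, it is unaffected by the choice of representative along the argmin line, so DPO and RLPO yield the same $(P_{\hat{\pi}_{\hat{\theta}}}(\Mc_1),P_{\hat{\pi}_{\hat{\theta}}}(\Mc_2))$ on this event. For $p_*(1)\in(0,1)$ both categories are chosen with positive probability on mixed choice sets, so the finite-minimizer event has probability tending to $1$ as $|\data|\to\infty$ and the degenerate fallback $\hat{\theta}=0$ is negligible in the limit. Proposition \ref{prop:dichotomy-rlpo-policy} then gives $P_{\hat{\pi}_{\hat{\theta}}}(\Mc_1)\stackrel{p.}{\longrightarrow}0$ and $P_{\hat{\pi}_{\hat{\theta}}}(\Mc_2)\stackrel{p.}{\longrightarrow}1$.

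The main obstacle is purely the bookkeeping of the reduction: checking that the $N(\theta)$ and $\bar{p}_z$ factors combine so that $\Lc_\mathrm{DPO}$ collapses onto $\Lc_\mathrm{reward}$ under the stated reparametrization, and that the resulting gap relation reproduces the RLPO ratio verbatim. Once this equivalence is established there is no new probabilistic content: the genuinely nontrivial fact---that $p_*(1)<F(P_{\overline{\pi}}(\Mc_1))$ drives the reward gap $\hat{\psi}_1-\hat{\psi}_2$ to a strictly negative limit and hence the exponent to $-\infty$---is inherited directly from Proposition \ref{prop:dichotomy-rlpo-policy}.
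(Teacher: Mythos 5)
Your proposal is correct, and its top-level strategy is the same as the paper's: show that DPO is equivalent to RLPO under the stated assumptions and then inherit the conclusion from Proposition \ref{prop:dichotomy-rlpo-policy}. The mechanism of the reduction, however, is genuinely different. The paper proves an architecture-agnostic equivalence (Lemma \ref{lem:dpo-equiv-rlpo}): for any reward $r$, the $\Lc_\mathrm{policy}(\cdot|r)$-optimal policy $\pi_r$ satisfies $\Lc_\mathrm{reward}(r) = \Lc_\mathrm{DPO}(\pi_r)$, the map $r \mapsto \pi_r$ is surjective, and an exchange/contradiction argument identifies the $\Lc_\mathrm{DPO}$-optimal policy with the RLPO one. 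You instead work entirely inside the parametric classes of Assumptions \ref{as:dichotomy-reward-architecture} and \ref{as:dichotomy-policy-architecture}: substituting the two-parameter policy into \eqref{eq:dpo-loss}, cancelling the common factor $N(\theta)^{-\beta/|\data|}$, and observing that $\Lc_\mathrm{DPO}(\hat{\pi}_\theta)$ collapses onto $\Lc_\mathrm{reward}(\hat{r}_\psi)$ under the affine bijection $\psi_z = \frac{\beta}{|\data|}(\theta_z - \ln \bar{p}_z)$. Your computation buys three things: it is elementary; it makes the equivalence of the existence events (a minimizer of $\Lc_\mathrm{DPO}$ exists iff one of $\Lc_\mathrm{reward}$ does) immediate, which matters because both propositions carry a degenerate fallback when no minimizer exists; and it makes explicit a point the paper leaves implicit, namely that Lemma \ref{lem:dpo-equiv-rlpo} concerns unconstrained minimization, so applying it to the parametric argmins in the proposition requires checking that the unconstrained optimum \eqref{eq:reward-to-policy} for a category-constant reward is itself category-constant and hence representable under Assumption \ref{as:dichotomy-policy-architecture} --- you verify exactly this. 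What the paper's lemma buys in exchange is generality: it transfers to any other architecture or function class without redoing the algebra. One minor looseness on your side: the claim that the finite-minimizer event has probability tending to one is asserted informally ("both categories are chosen with positive probability"); in the paper this is precisely what the event $\Ec_\eta(\data)$ together with Lemmas \ref{le:deterministic-proportion-of-preferred} and \ref{le:high-probability-event} establishes, and since you already invoke Proposition \ref{prop:dichotomy-rlpo-policy}, the cleanest fix is to cite that machinery rather than re-argue it.
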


% \begin{restatable}[DPO failure]{proposition}{dichotomyDpo}
% \label{prop:dichotomy-dpo}
% Under Assumptions \ref{as:dichotomy}, \ref{as:dichotomy-reward-architecture}, and \ref{as:dichotomy-policy-architecture}, for all $K\ge 3$ and $0<\epsilon<1$, if $p_*(1) < F(P_{\overline{\pi}}(\messages_1))$ with $F(x) = \frac{x - x^K}{1 - x^K - (1-x)^K}$, and $\hat{\theta}$ satisfies $\Lc_\mathrm{DPO}(\hat{\pi}_{\hat\theta}) < \inf_\theta\Lc_\mathrm{DPO}(\hat{\pi}_{\theta}) + \epsilon$, then 
% \begin{equation}
% P_{\hat{\pi}_{\hat{\theta}}}(\Mc_1) \stackrel{p.}{\longrightarrow} 0 \qquad \text{and} \qquad \nn P_{\hat{\pi}_{\hat{\theta}}}(\Mc_2) \stackrel{p.}{\longrightarrow} 1
% \end{equation}
% as $|\data|$ grows.
% \end{restatable}

To understand what causes these failures, let us consider as a thought experiment a simplified data generating process where each choice set contains the same triple $\Yc = \{y_1,y_2,y_3\}$, with $y_1,y_2 \in \messages_1$ and $y_3 \in \messages_2$.  Since $p_*(1) = 0.6$ and $p_*(2) = 0.4$, choice probabilities generating the preference data are given by
$\Pr(Y = 1| \Yc) = \Pr(Y = 2| \Yc) = 0.3$ and $\Pr(Y = 3 | \Yc) = 0.4$.  As the dataset grows, minimizing $\Lc_\mathrm{reward}(\hat{r}_\psi)$ identifies parameters $\hat{\psi}$ to match these probabilities, if possible.  In particular,
\begin{equation}
\label{eq:IIA-RLPO-example}
\frac{e^{\hat{r}_{\hat{\psi}}(y_1)}}{\sum_{i=1}^3 e^{\hat{r}_{\hat{\psi}}(y_i)}} = \frac{e^{\hat{r}_{\hat{\psi}}(y_2)}}{\sum_{i=1}^3 e^{\hat{r}_{\hat{\psi}}(y_i)}} = 0.3 \qquad \text{and} \qquad  \frac{e^{\hat{r}_{\hat{\psi}}(y_3)}}{\sum_{i=1}^3 e^{\hat{r}_{\hat{\psi}}(y_i)}} = 0.4.
\end{equation}
These equations imply that $\hat{\psi}_2 - \hat{\psi}_1 = \ln (4/3) > 0$.  Hence, the estimated reward $\hat{r}_{\hat{\psi}}(x)$ is maximized, as RLPO aims to do, by generating messages $x \in \messages_2$.  DPO is designed to approximate RLPO and therefore leads to similar behavior.

In our thought experiment, the fact that each choice set had twice as many more desirable ($\messages_1$) than the less desirable ($\messages_2$) messages biased reward estimates in favor of $\messages_2$.  This tendency is expressed in the above propositions though the requirement that $F(P_{\overline{\pi}}(\messages_1)) > p_*(1)$, with $F(\cdot)$ defined in Propositions \ref{prop:dichotomy-rlpo-policy} and \ref{prop:dichotomy-dpo}.  The baseline policy $\overline{\pi}$ is used to generate choice sets, and perhaps surprisingly, the fact that it biases samples toward desirable messages leads to undesirable outcomes.

The root cause is that $\hat{r}_{\hat{\psi}}$ assumes IIA while the process generating preferences does not.  In our thought experiment, for example, where $\Pr(Y = 1| \Yc) = \Pr(Y = 2| \Yc) = 0.3$ and $\Pr(Y = 3 | \Yc) = 0.4$, if the choice sets were instead to only contain two alternatives $\Yc = \{y_2,y_3\}$ of different types, then we would have $\Pr(Y = 2| \Yc) = 0.6$ and $\Pr(Y = 3 | \Yc) = 0.4$, giving rise to different equations
\begin{equation}
\label{eq:IIA-RLPO-example-pairwise}
\frac{e^{\hat{r}_{\hat{\psi}}(y_2)}}{\sum_{i=2}^3 e^{\hat{r}_{\hat{\psi}}(y_i)}} = 0.6 \qquad \text{and} \qquad  \frac{e^{\hat{r}_{\hat{\psi}}(y_3)}}{\sum_{i=2}^3 e^{\hat{r}_{\hat{\psi}}(y_i)}} = 0.4.
\end{equation}
These new equations imply that $\hat{\psi}_2 - \hat{\psi}_1 = \ln (2/3) < 0$, correctly identifying $\messages_1$ as more desirable than $\messages_2$.  The irrelevant alternative $y_1$ is able to distort estimates because the reward model assumes IIA.

\subsection{IL and SLiC}
\label{se:il-slic}

IL is designed to produce language models that reflect the diversity of preferences across the population \cite{xu2023shattering}.  In particular, IL ought to generate messages in $\messages_1$ and $\messages_2$ according to the probabilities $0.6$ and $0.4$ with which they are preferred by random individuals.  As can be seen in Figure \ref{fig:inclusive-fails}, this is indeed the case when $|\Yc| = 2$ and $\messages_1 = \messages_2 = 10$.  However, as the set $\messages_2$ of less desired messages grows, the probability $P_{\hat{\pi}_{\hat{\theta}}}(\messages_1)$ of generating a more desired message vanishes.

\begin{figure}[htb]
\begin{center}
\includegraphics[scale=0.3]{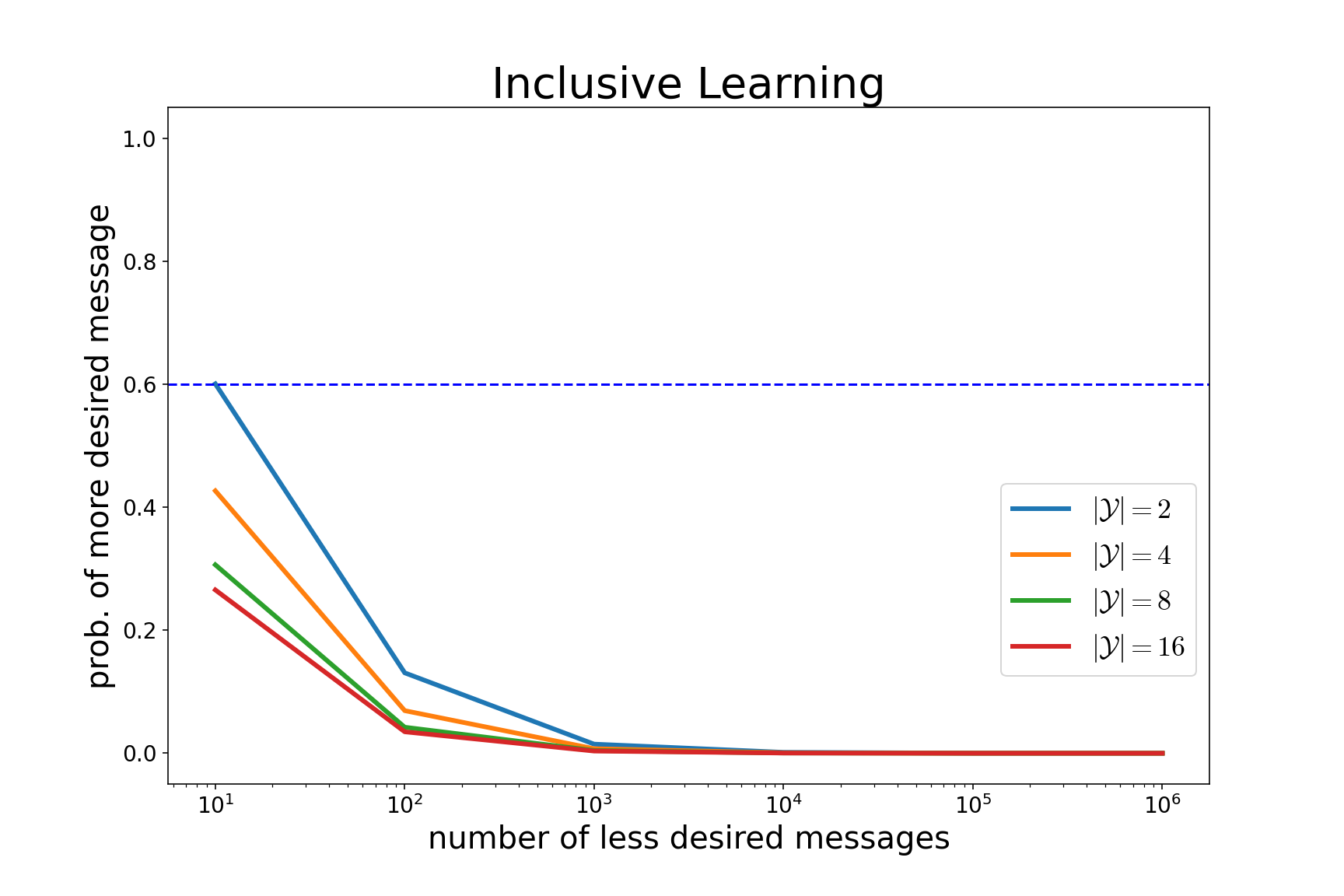}
\caption{Inclusive learning tends to generate a message in the less desired set $\messages_2$ as that set grows.}
\label{fig:inclusive-fails}
\end{center}
\end{figure}

The following proposition formalizes this phenomenon, establishing that IL fails as $\data$ and $\messages_2$ grow, not just for the specific simulated examples, but more generally.
\begin{restatable}[IL failure]{proposition}{dichotomyInclusive}
\label{prop:dichotomy-inclusive}
Under Assumptions \ref{as:dichotomy} and \ref{as:dichotomy-policy-architecture}, if $|\Yc| \ge 2$ and $\hat{\theta}$ minimizes $\Lc_\mathrm{IL}(\hat{\pi}_\theta)$, then for fixed $\Mc_1$, 
\begin{equation}
\lim_{|\data|\to\infty} \Pr\left(\lim_{|\messages_2| \to\infty} P_{\hat{\pi}_{\hat\theta}}(\messages_1) = 0\right) = 1.
\end{equation}
% \begin{equation}
% P_{\hat{\pi}_{\hat{\theta}}}(\Mc_1) \stackrel{p.}{\longrightarrow} 0 \qquad \text{and} \qquad \nn P_{\hat{\pi}_{\hat{\theta}}}(\Mc_2) \stackrel{p.}{\longrightarrow} 1
% \end{equation}
\end{restatable}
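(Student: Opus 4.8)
The plan is to collapse the two-dimensional optimization to a single scalar and then show that the fitted log-odds parameter stays bounded \emph{uniformly in} $|\messages_2|$, so that the mass placed on $\messages_1$ is forced to vanish as $|\messages_2|$ grows. First I would observe that, under Assumption \ref{as:dichotomy-policy-architecture}, both terms of $\Lc_\mathrm{IL}$ depend on $\theta$ only through $u := \theta_1 - \theta_2$: the common normalizer cancels in each ratio $P_{\hat\pi_\theta}(y)/\sum_{y'\in\Yc}P_{\hat\pi_\theta}(y')$, and $\KL(P_{\hat\pi_\theta}\|P_{\overline{\pi}})$ reduces to the binary divergence between $q(u):=P_{\hat\pi_\theta}(\messages_1)=\tfrac{|\messages_1|e^u}{|\messages_1|e^u+|\messages_2|}$ and $\zeta:=P_{\overline{\pi}}(\messages_1)$. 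Writing $\Lc_\mathrm{IL}=\ell_N(u)+\beta g(u)$, with $\ell_N$ the cross-entropy sum over the $N=|\data|$ data and $g$ the binary KL, the target quantity is exactly $q(\hat u)$. Since $q(\hat u)\le \tfrac{|\messages_1|e^{\hat u}}{|\messages_2|}$, it suffices to show that, with probability tending to one as $N\to\infty$, the minimizer $\hat u$ is bounded above by a constant that does not depend on $|\messages_2|$; the inner limit $|\messages_2|\to\infty$ then drives $q(\hat u)$ to $0$.

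Next I would compute $\ell_N'(u)$ by summing the per-datum contributions, each of which depends only on the type composition $(k_1,\,n-k_1)$ of $\Yc$ (with $n:=|\Yc|$) and on the type of the choice. A short calculation shows that $\ell_N'$ is nondecreasing, so $\ell_N$ is convex, and that it has finite limits $\ell_N'(u)\to -c_1$ as $u\to-\infty$ and $\ell_N'(u)\to c_2$ as $u\to+\infty$, where $c_1$ counts the data whose choice is of type $1$ drawn from a set that also contains a type-$2$ alternative, and $c_2$ counts the data whose choice is of type $2$ drawn from a set that also contains a type-$1$ alternative. Under Assumption \ref{as:dichotomy}, with $\zeta\in(0,1)$, $n\ge 2$, and $p_*$ nondegenerate, the law of large numbers gives $c_1/N\to p_*(1)\gamma$ and $c_2/N\to p_*(2)\gamma$ with $\gamma=1-\zeta^{n}-(1-\zeta)^{n}>0$; hence both counts grow linearly and exceed any fixed threshold with probability tending to one.

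The key structural fact is then that $g'(u)$ is bounded independently of $|\messages_2|$: explicitly $g'(u)=q(1-q)\ln\tfrac{q(1-\zeta)}{(1-q)\zeta}$ with $q=q(u)\in(0,1)$, and the map $q\mapsto q(1-q)\ln\tfrac{q(1-\zeta)}{(1-q)\zeta}$ is continuous on $(0,1)$ with limits $0$ at both endpoints, hence bounded by some $C=C(\zeta)$ for every $u$ and every $|\messages_2|$. Since the loss is coercive once $c_1,c_2\ge 1$, any minimizer is interior and stationary, so $\ell_N'(\hat u)=-\beta g'(\hat u)\in[-\beta C,\beta C]$. On the event $E_N:=\{c_1>\beta C,\ c_2>\beta C\}$, the monotone function $\ell_N'$ meets the band $[-\beta C,\beta C]$ only on a bounded interval determined by $\ell_N$ alone, which is independent of $|\messages_2|$; thus $\hat u$ is bounded above by an $|\messages_2|$-free constant and $\lim_{|\messages_2|\to\infty}q(\hat u)=0$ on $E_N$. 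Consequently $\Pr\!\left(\lim_{|\messages_2|\to\infty}P_{\hat\pi_{\hat\theta}}(\messages_1)=0\right)\ge \Pr(E_N)\to 1$, which is the claim.

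I expect the main obstacle to be precisely the interaction between the fixed regularizer and the growing data, and this is where the order of limits is essential. For a fixed data size and a large penalty $\beta$, the regularizer can instead pull $q(\hat u)$ toward $\zeta>0$ by sending $\hat u\sim\ln|\messages_2|\to\infty$, so the conclusion would fail; taking $N\to\infty$ first makes the linearly growing cross-entropy slopes $c_1,c_2$ dominate the uniformly bounded regularizer slope $\beta C$ for \emph{any} fixed $\beta$, which is exactly what traps $\hat u$ in an $|\messages_2|$-independent interval. Establishing this domination rigorously --- that $\min(c_1,c_2)>\beta C$ holds with probability tending to one, together with the uniform bound on $g'$ --- is the crux of the argument.
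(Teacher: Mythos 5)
Your proposal is correct (granting the nondegeneracy conditions $P_{\overline{\pi}}(\messages_1)\in(0,1)$ and $p_*(1),p_*(2)>0$, which the paper also needs and invokes implicitly; cf.\ Lemma \ref{le:high-probability-both-types}), and it shares the paper's outer skeleton: isolate an event depending only on the category-level counts of the data, prove the inner limit $|\messages_2|\to\infty$ deterministically on that event, then show the event has probability tending to one as $|\data|\to\infty$. Indeed your counts $c_1,c_2$ are exactly the paper's $|\Ic_1|,|\Ic_2|$. But the mechanism for the deterministic step is genuinely different. The paper (Lemmas \ref{le:convex-solution} and \ref{le:deterministic-failure-IL}) works with $q=P_{\hat{\pi}_\theta}(\messages_2)$ and drives the minimizer to $q\to 1$ by arguing that $\partial \Lc(q,m)/\partial q<0$ once $m=|\messages_2|/|\messages_1|$ is large, leaning on convexity of the loss. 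You instead work with the log-odds $u=\theta_1-\theta_2$ and exploit two structural facts: (i) the cross-entropy term $\ell_N$ is a function of $u$ alone, entirely free of $|\messages_2|$, convex, with $\ell_N'$ increasing from $-|\Ic_1|$ to $|\Ic_2|$; and (ii) the regularizer's derivative satisfies $|g'(u)|=\bigl|q(1-q)\ln\tfrac{q(1-\zeta)}{(1-q)\zeta}\bigr|\le C(\zeta)$ uniformly in $u$ \emph{and} in $|\messages_2|$. First-order optimality then traps $\hat u$ in an $|\messages_2|$-free interval whenever $|\Ic_1|,|\Ic_2|>\beta C$, and $P_{\hat{\pi}_{\hat\theta}}(\messages_1)\le |\messages_1|e^{\hat u}/|\messages_2|\to 0$ follows. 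This buys you something real: you never need convexity of the \emph{total} loss (the paper asserts strong convexity of $\Lc_\mathrm{IL}$, which is not obvious and is delicate for the reverse-KL term in either parameterization), only stationarity at an interior minimizer; and your bound on $g'$ is uniform, which sidesteps a uniformity issue in the paper's argument --- the claim that $\partial\Lc(q,m)/\partial q<0$ for \emph{all} $q\in(0,1)$ at large $m$ cannot hold near $q=1$, where the loss diverges whenever $|\Ic_1|\ge 1$, so the paper's derivative computation must be read pointwise in $q$ and then combined with the convexity hypothesis of Lemma \ref{le:convex-solution}, which is precisely the step your trapping argument avoids. What the paper's route buys in exchange is quantitative control: its Chernoff/Hoeffding bounds (Lemmas \ref{le:high-probability-both-types} and \ref{le:high-probability-event-IL}) give explicit exponential rates for the good event, whereas your appeal to the law of large numbers gives only the limit --- which is, however, all the proposition asks for.
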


To understand what causes these failures, let us consider another thought experiment.  Suppose that each choice set contains two messages $\Yc = \{y_1,y_2\}$, with $y_1 \in \messages_1$ and $y_2 \in \messages_2$.  Choice probabilities generating the preference data are given by
$\Pr(Y = 1| \Yc) = p_*(1) = 0.6$ and $\Pr(Y = 2 | \Yc) = p_*(2) = 0.4$.  As the dataset grows, minimizing $\Lc_\mathrm{IL}(\hat{\pi}_\theta)$ identifies parameters $\hat{\theta}$ to match these probabilities, if possible.  In particular, 
\begin{equation}
\label{eq:IIA-IL-example}
\frac{P_{\hat{\pi}_{\hat{\theta}}}(y_1)}{P_{\hat{\pi}_{\hat{\theta}}}(y_1) + P_{\hat{\pi}_{\hat{\theta}}}(y_2)} = 0.6 \qquad \text{and} \qquad  \frac{P_{\hat{\pi}_{\hat{\theta}}}(y_2)}{P_{\hat{\pi}_{\hat{\theta}}}(y_1) + P_{\hat{\pi}_{\hat{\theta}}}(y_2)} = 0.4.
\end{equation}
This implies that $P_{\hat{\pi}_{\hat{\theta}}}(y_1) / P_{\hat{\pi}_{\hat{\theta}}}(y_2) = 3/2$.  Since $P_{\hat{\pi}_{\hat{\theta}}}(y_i) = P_{\hat{\pi}_{\hat{\theta}}}(\messages_i) / |\messages_i|$ for $i \in \{1,2\}$, we have
\begin{equation}
\label{eq:IIA-IL-example-math}
\frac{3}{2} = \frac{P_{\hat{\pi}_{\hat{\theta}}}(\messages_1) / |\messages_1|}{P_{\hat{\pi}_{\hat{\theta}}}(\messages_2) / |\messages_2|} = \frac{|\messages_2|}{10} \cdot \frac{P_{\hat{\pi}_{\hat{\theta}}}(\messages_1)}{1 - P_{\hat{\pi}_{\hat{\theta}}}(\messages_1)}.
\end{equation}
Hence, $P_{\hat{\pi}_{\hat{\theta}}}(\messages_1) = 15 / (15 + |\messages_2|)$.  It follows that, as $\messages_2$ grows, $P_{\hat{\pi}_{\hat{\theta}}}(\messages_1)$ vanishes.

While the number of messages in $\messages_2$ does not influence how a human would choose between two messages, (\ref{eq:IIA-IL-example-math}) implies that it impacts the probabilities $P_{\hat{\pi}_{\hat{\theta}}}(\messages_1)$ and $P_{\hat{\pi}_{\hat{\theta}}}(\messages_2)$ of generating more or less liked messages.  The more equivalent messages there are in $\messages_2$, the more likely the resulting fine-tuned language model is to produce less-liked messages.  This is again due to the fact that the model underlying IL satisfies IIA while the preference data generating process does not.

A similar reasoning implies that SLiC also experiences the same type of failure, the proof of which we defer to the appendix.
\begin{restatable}[SLiC failure]{proposition}{dichotomySlic}
\label{prop:dichotomy-slic}
Under Assumptions \ref{as:dichotomy} and \ref{as:dichotomy-policy-architecture}, if $|\Yc| = 2$ and $\hat{\theta}$ minimizes $\Lc_\mathrm{SLiC}(\hat{\pi}_\theta)$, for fixed $\Mc_1$, 
\begin{equation}
\lim_{|\data|\to\infty} \Pr\left(\lim_{|\messages_2| \to\infty} P_{\hat{\pi}_{\hat\theta}}(\messages_1) = 0\right) = 1.
\end{equation}
\end{restatable}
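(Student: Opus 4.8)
The plan is to reduce the $|\Yc|=2$ problem to a one-dimensional optimization over $\Delta = \theta_1 - \theta_2$ and then use the margin $\delta$ to cap $\Delta$, so that growing $|\messages_2|$ drives $P_{\hat\pi_{\hat\theta}}(\messages_1)$ to zero, mirroring the IL argument. Under Assumption~\ref{as:dichotomy-policy-architecture}, the policy depends on $\theta$ only through $\Delta$, with $P_{\hat\pi_\theta}(\messages_1) = |\messages_1|e^\Delta/(|\messages_1|e^\Delta + |\messages_2|)$, and it is uniform within each category. Hence each SLiC datum contributes a term determined solely by the categories of its two messages: a same-category pair contributes the constant $\delta$ (its log-ratio vanishes), a pair whose chosen message lies in $\messages_1$ and whose other message lies in $\messages_2$ contributes $(\delta-\Delta)_+$, and the reverse contributes $(\delta+\Delta)_+$. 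Writing $N_{12}$ and $N_{21}$ for the counts of these two mixed configurations, the data part of $\Lc_\mathrm{SLiC}$ equals $D(\Delta) = N_{12}(\delta-\Delta)_+ + N_{21}(\delta+\Delta)_+ + \mathrm{const}$, which is independent of $|\messages_2|$; the whole loss depends on $\theta$ only through $\Delta$.

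First I would record the statistics of the mixed configurations. Setting $\zeta = P_{\overline\pi}(\messages_1)$, Assumption~\ref{as:dichotomy} makes each sampled pair mixed with probability $2\zeta(1-\zeta)$, and, conditioned on a mixed pair, the chosen message lies in $\messages_1$ with probability $p_*(1)$ and in $\messages_2$ with probability $p_*(2)$. Thus $N_{12}$ and $N_{21}$ are binomial with positive per-datum success probabilities $2\zeta(1-\zeta)p_*(1)$ and $2\zeta(1-\zeta)p_*(2)$. By the law of large numbers, for any fixed threshold $t$ the event $\{N_{12}\ge 1\}\cap\{N_{21} > t\}$ has probability tending to $1$ as $|\data|\to\infty$; I will apply this with $t = \beta M$ for the constant $M$ introduced next.

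The key step, and the main obstacle, is to prevent the KL regularizer from pushing $\Delta$ to grow with $|\messages_2|$. The term $\KL(P_{\hat\pi_\theta}\|P_{\overline\pi})$ equals the Bernoulli KL divergence between $q(\Delta) = P_{\hat\pi_\theta}(\messages_1)$ and $\zeta$; its unconstrained minimizer sits at $\Delta_0 = \ln(\zeta|\messages_2|/((1-\zeta)|\messages_1|))$, of order $\ln|\messages_2|$, where $q=\zeta$ stays bounded away from $0$. The saving grace is that its derivative in $\Delta$ is $q(1-q)\ln\frac{q(1-\zeta)}{(1-q)\zeta}$, which vanishes as $q\to 0$ and as $q\to 1$ and is therefore bounded in absolute value by a constant $M=M(\zeta)$, uniformly over $\Delta\in\R$ and over $|\messages_2|$. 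For $\Delta>\delta$ the data part $D$ has derivative exactly $N_{21}$, so on the event $\{N_{21}>\beta M\}$ the total loss has derivative at least $N_{21}-\beta M>0$ throughout $(\delta,\infty)$; hence every minimizer satisfies $\hat\Delta\le\delta$, a bound independent of $|\messages_2|$. Coercivity of $D$ as $\Delta\to-\infty$ (guaranteed by $N_{12}\ge1$) together with continuity ensures a minimizer exists.

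Finally, let $A = \{N_{12}\ge1\}\cap\{N_{21}>\beta M\}$. On $A$ the bound $\hat\Delta\le\delta$ holds for every $|\messages_2|$, so $P_{\hat\pi_{\hat\theta}}(\messages_1)\le |\messages_1|e^\delta/(|\messages_1|e^\delta + |\messages_2|)\to 0$ as $|\messages_2|\to\infty$. Thus $\{\lim_{|\messages_2|\to\infty}P_{\hat\pi_{\hat\theta}}(\messages_1)=0\}\supseteq A$, and since $\Pr(A)\to1$ as $|\data|\to\infty$ we obtain $\lim_{|\data|\to\infty}\Pr(\lim_{|\messages_2|\to\infty}P_{\hat\pi_{\hat\theta}}(\messages_1)=0)=1$. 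I expect the delicate part to be the uniform gradient bound on the KL term; everything else is the same margin-capping-meets-IIA mechanism behind the IL failure, specialized to the hinge loss.
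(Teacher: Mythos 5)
Your proof is correct and, in its deterministic core, takes a genuinely different route from the paper's. The paper (Lemma \ref{le:deterministic-failure-Slic}) parametrizes the loss by $q = P_{\hat{\pi}_\theta}(\messages_2)$ and $m = |\messages_2|/|\messages_1|$, argues that on the event $\Ec(\data)$ (with thresholds $\max\{1+\beta,\, 2\beta\ln(p_*(2)/p_*(1))\}$ shared with the IL proof) the derivative in $q$ becomes negative once $m$ is large, and invokes the convexity-based Lemma \ref{le:convex-solution} to push the minimizing $q$ to $1$; its probabilistic step (Lemma \ref{le:high-probability-event-IL}) uses Chernoff/Hoeffding bounds. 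You instead reduce to the scalar $\Delta = \theta_1 - \theta_2$, observe that the SLiC data term is piecewise linear in $\Delta$, independent of $|\messages_2|$, with slope exactly $N_{21}$ on $(\delta,\infty)$, and prove a uniform bound $M(\zeta)$ on the KL term's slope in $\Delta$; this yields a hard cap $\hat\Delta \le \delta$ on every minimizer, uniformly in $|\messages_2|$, from which the inner limit is immediate via $P_{\hat{\pi}_{\hat{\theta}}}(\messages_1) \le |\messages_1|e^{\delta}/(|\messages_1|e^{\delta} + |\messages_2|)$. Your route buys two real advantages: it needs no convexity whatsoever (only strict monotonicity on $(\delta,\infty)$ plus coercivity), which matters because the loss written as a function of $q$ is not obviously convex, so the hypothesis of Lemma \ref{le:convex-solution} is the delicate point in the paper's argument; and the cap gives an explicit, $|\messages_2|$-uniform bound on $P_{\hat{\pi}_{\hat{\theta}}}(\messages_1)$ rather than a limiting statement about minimizers. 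What the paper's route buys in exchange is uniformity of machinery -- the same event and concentration lemmas serve both Proposition \ref{prop:dichotomy-inclusive} and Proposition \ref{prop:dichotomy-slic}, with finite-sample rates in $|\data|$ -- whereas your law-of-large-numbers step yields only the limit, which is all the statement requires. One shared caveat rather than a gap of yours: like the paper, you implicitly need the non-degeneracy conditions $P_{\overline{\pi}}(\messages_1) \in (0,1)$ and $p_*(1), p_*(2) > 0$ so that the mixed-pair counts $N_{12}, N_{21}$ have positive success probabilities; and both arguments interpret the inner limit $|\messages_2|\to\infty$ as holding the data's category statistics fixed, an interpretation your reduction makes explicit since the loss depends on $\data$ only through $N_{12}$, $N_{21}$, and the number of same-category pairs.
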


This observation suggests that using IL or SLiC to fine-tune real language models will induce a bias toward generating longer messages.  To see why, suppose there are two ideas that might be expressed as responses to a prompt, and that the first requires a about ten words, while the second requires about one thousand words.
Suppose that $60\%$ of the population would prefer the first idea while $40\%$ would prefer the second.
Because the number of ways to express information scales quickly with the length of message required to express that information, there are likely to be many more roughly equivalent expressions of the second idea than the first.  Since IL and SLiC biases generation toward messages with many equivalent alternatives, the fine-tuned language model would tend to express the second idea, even though more people prefer the first idea.

\section{Empirical Study}
\label{se:empirical}

We next demonstrate that the sort of egregious behavior we have identified manifests when learning reward models of practical scale from realistic datasets.  In particular, we fit reward models that build on the PaLM 2 \citep{palm2} XXS language model to data generated using GPT-3.5 and GPT-4 \citep{openai2023gpt4}.  Each query includes reponses generated by GPT-3.5 and GPT-4.  We additionally use GPT-4 to simulate choices made by human annotators.

Our results establish that, when training queries each include a pair of responses, the reward model correctly learns to assign higher scores to GPT-4 responses.  However, when training queries each include four responses, the reward model erroneously assigns favorable scores to GPT-3.5 responses.  It is striking that a seemingly innocuous change to the training query format gives rise to such egregious behavior.  The results are summarized in Figure \ref{fig:empirical}.   Before discussing them in detail, we will describe the datasets, the reward model architecture, and the training algorithm.

\begin{figure}
\centering
\begin{subfigure}[h]{0.45\textwidth}
    \centering
    \includegraphics[width=0.75\textwidth]{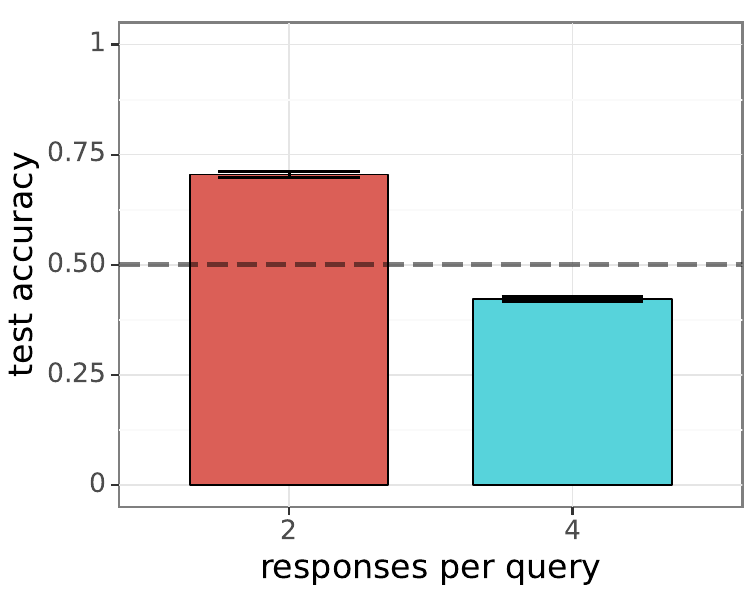}
    \caption{Test accuracy of a standard reward model trained on preference data with 2 or 4 responses per query. The dashed line indicates a purely random baseline.}
    \label{fig:empirical-accuracy}
\end{subfigure}
\hspace{1cm}
\begin{subfigure}[h]{0.45\textwidth}
    \centering
    \includegraphics[width=0.75\textwidth]{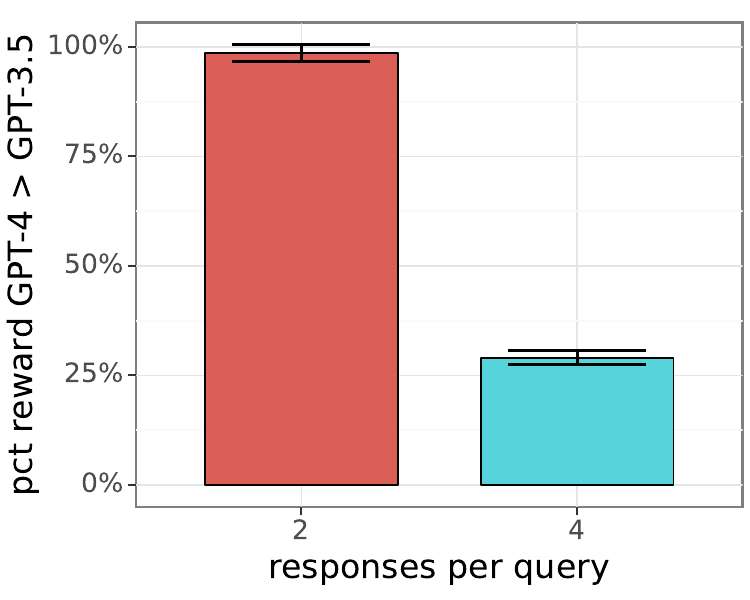}
    \caption{Percentage of test data where the reward model assigns a higher score to GPT-4's response.}
    \label{fig:empirical-preference}
\end{subfigure}
\caption{Standard reward model training can lead to egregious outcomes when training data involves more than two responses per query.}
\label{fig:empirical}
\end{figure}

\subsection{Preference Datasets}

For simplicity, we consider only a single prompt.  While other prompts lead to similar results, we arbitrarily choose to present results produced by the following prompt, based on a popular movie this year:
\begin{displayquote}
\begin{tabular}{ll}
{\bf prompt} & {\it Did Oppenheimer win a Nobel Prize?}
\end{tabular}
\end{displayquote}

We sample candidate responses using GPT-3.5 and GPT-4 with the temperature parameter of each language model set to the default value of one.  Here are representative responses:
\begin{displayquote}
\begin{tabular}{ll}
{\bf GPT-3.5 response} & {\it No, Oppenheimer did not win the Nobel Prize.} \\
{\bf GPT-4 response} & {\it No, Robert Oppenheimer, often called the ``father of the atomic bomb''} \\
& {\it for his role in the Manhattan Project, did not win a Nobel Prize.}
\end{tabular}
\end{displayquote}
GPT-3.5 tends to produce concise responses relative to the more informative ones from GPT-4.  From each of these two language models, we sample 100 responses for training and validation, and 100 responses for testing. We subsample from these responses when constructing each training or validation query, or each test sample.  We produce two training sets that differ in the number of alternative responses per query.  The first training set has 2 responses per query, one from GPT-3.5 and one from GPT-4.  The second training set has 4 responses per query, one from GPT-3.5 and three from GPT-4. We generate 800 queries for each training set. Additionally, we generate 200 queries for each validation set.  These are used in tuning hyperparameters to optimize validation accuracy.  We produce a single test dataset made up of 1000 queries, each with one response from GPT-3.5 and one from GPT-4.

To simulate human annotator choices, we prompt GPT-4 to select its favorite response among a set.  We use two types of prompts, which express preference for concise and informative responses respectively:
\begin{displayquote}
\begin{tabular}{ll}
{\bf concise choice prompt} & {\it Suppose that you are looking for a concise answer to the following} \\
& {\it  question.  Which response below do you like the best...} \\
{\bf informative choice prompt} & {\it Suppose that you are trying to learn more about the following question.} \\
& {\it Which response below do you like the best...}
\end{tabular}
\end{displayquote}
We find that prompting for a concise choice typically favors GPT-3.5, while prompting for an informative choice typically favors GPT-4.  When simulating a human annotator, we sample one of these two types of choice prompts according to probabilities 0.3 and 0.7, respectively.  Using these choice prompts, we find that regardless of the number of responses per query, our simulated human annotators select responses from GPT-4 approximately 70\% of the time.

\subsection{Reward Model Architecture and Training Algorithm}
Our reward model builds on the PaLM 2 \citep{palm2} XXS language model.  In particular, given a context formed by concatenating a prompt and a response, we take the final embedding produced by the base language model and apply a linear layer to obtain reward. We train the linear layer, as well as the base language model, to minimize cross-entropy loss on preference data. For each training dataset, we train the reward model over 150 gradient steps using the Adafactor optimizer \citep{shazeer2018adafactor} with a learning rate of 1e-4 and batch size of 16. The hyperparameters are tuned to optimize validation accuracy. We average results over 3 seeds.

\subsection{Results}
Figure \ref{fig:empirical-accuracy} plots the test accuracy of the reward model trained on queries with 2 or 4 responses. When each training query includes only a pair of responses, the reward model correctly selects the preferred response for around 70\% of test samples.  However, with four responses per training query, the test accuracy drops below 50\%.  In other words, choice predictions generated by the reward model fare worse than random guessing.  Figure \ref{fig:empirical-preference} provides more insight into how learned reward models score responses.  Since the majority of simulated annotators favor GPT-4, we would expect that the learned reward model assigns higher scores to GPT-4 responses.  We see in Figure \ref{fig:empirical-preference} that, when trained on pairwise comparisons, the reward model almost always assigns higher scores to GPT-4 responses. However, when trained on queries each with 4 responses, the reward model tends to erroneously assign higher scores to GPT-3.5 responses.

These results highlight a failing of the standard reward learning approach.  The culprit is the IIA assumption made in reward learning, which is violated by the data as we will now explain.  Intuitively, each simulated human annotator prefers either concise or informative responses but is relatively indifferent between alternatives in each of these two categories.  Consider a simulated annotator who, when presented with one concise and one informative response, selects the former with probability 2/5.  Under the IIA assumption, increasing the number of informative alternatives from one to three would increase the probability of choosing the concise response to 2/3.  However, if the annotator is indifferent between the three informative responses, the probability of choosing the concise response should remain 2/5, regardless of whether there are one or three informative alternatives.

%Figure \ref{fig:empirical} shows the test accuracy of a reward model trained through standard cross-entropy loss on our three preference datasets with varying numbers of alternative responses per choice set. Since the test dataset involves predicting the preferred response between two alternative responses, a uniformly random agent, indicated by the dashed line, would achieve an expected accuracy of 0.5. We see that when the training dataset also involves pairwise comparisons, the reward model's test accuracy is near 70\%. This implies that the reward model correctly learns to assign higher scores to those detailed responses since a random user is more likely to prefer a detailed response over the concise one. However, we see that when the number of alternative responses is 5 or 10, the reward model's test accuracy is even worse than taking random guesses. The reward model incorrectly assigns lower scores to detailed responses, even through they are actually preferred by more users over the shorter response. These results are consistent with our theory, showing that standard reward models, which satisfy IIA, can be problematic when human preferences violate IIA. Since reward models are typically used to guide language policy optimization or response selection (as in best-of-N), such problematic behaviors can incentivize generations of undesirable responses.

\section{Closing Remarks}

The essential role of RLHF in the success of generative AI has attracted tremendous effort and resources to the subject.  It is worthwhile to think carefully about foundations of this methodology to inform algorithmic innovation and ultimately produce reliable AIs.  What we have presented, which points out how the IIA assumption adopted by models underlying current algorithms can give rise to egregious behavior, offers one of what will hopefully be many steps in this direction.

% As we discussed, in some plausible contexts, IIA incentivizes generation of verbose messages when concise ones are preferred.  Recent work studies how RLHF is giving rise to messages longer than desired \citep{singhal2023long}.  While that work uncovers other culprits of this phenomenon, it may be worth exploring the relation to ideas we have presented.

The design of RLHF algorithms that mitigate perverse incentives we have identified remains a subject for future research.  Trade-offs between query formats and how feedback is processed also deserve further study.  For example, while RLPO can fail when queries present more than two alternatives, one can heuristically convert a feedback from such a query to multiple pairwise choices.  This could offer more robust results, though there may be other flaws to this approach.

\appendix

\section{Proofs}

\subsection{RLPO}

In this subsection, we provide the proof for Proposition \ref{prop:dichotomy-rlpo-policy}.  We start by defining two key quantities for messages of category $1$ in $\data$.  First, we let
$$
\rho_\mathrm{chosen} = \frac{1}{|\data|} \sum_{(\Yc,y)\in\data} \1\left(y\in \messages_1\right)
$$
denote the fraction of data in $\data$ where the chosen message is in $\messages_1$.  Second, we let
$$
\rho_\mathrm{data} = \frac{1}{|\data|} \sum_{(\Yc,y)\in\data} \frac{|\Yc \cap \messages_1|}{|\Yc|}
$$
denote the fraction of messages in $\data$ that belong to $\messages_1$.  
\begin{lemma}
\label{le:strictly-convex-zero}
For all strictly convex functions $f:\Re \to \Re$ such that $f'(1) < 0$, if $f(\zeta) = 0$ for some $\zeta \in\Re$, then $\zeta > 1 - \frac{f(1)}{f'(1)}$.
\end{lemma}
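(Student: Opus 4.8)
The plan is to reinterpret the threshold $1 - f(1)/f'(1)$ geometrically and then exploit the supporting-line characterization of strict convexity. First I would observe that $x^\star := 1 - f(1)/f'(1)$ is precisely the zero of the affine tangent $L(x) := f(1) + f'(1)(x-1)$ to $f$ at the point $1$: solving $L(x)=0$ gives $f'(1)(x-1) = -f(1)$, and since $f'(1)<0$ we may divide to get $x-1 = -f(1)/f'(1)$, i.e. $x = x^\star$. Thus the claim is equivalent to the assertion that every zero of $f$ lies strictly to the right of the point where the tangent line at $1$ crosses zero, which is geometrically natural: a strictly convex curve sits above its tangent, and that tangent is decreasing because $f'(1)<0$.

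Next I would record the strict first-order condition for strict convexity, namely that $f(x) > L(x)$ for every $x \neq 1$. I expect this to be the one point worth spelling out, since the standard convexity inequality $f(x)\ge L(x)$ is only non-strict. To upgrade it I would argue by contradiction: if $f(x_0)=L(x_0)$ for some $x_0\neq 1$, then evaluating at the midpoint $m=(x_0+1)/2$ gives $f(m) < \tfrac12 f(x_0) + \tfrac12 f(1)$ by strict convexity, whereas the non-strict supporting inequality forces $f(m)\ge L(m) = \tfrac12 L(x_0)+\tfrac12 f(1) = \tfrac12 f(x_0)+\tfrac12 f(1)$, a contradiction. (This uses only convexity together with differentiability at the single point $1$, so no global smoothness assumption is needed.)

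With the strict supporting inequality in hand, the conclusion follows by substitution. Taking any zero $\zeta$ with $\zeta\neq 1$, I would plug $x=\zeta$ into $f(\zeta) > L(\zeta)$ to obtain $0 > f(1)+f'(1)(\zeta-1)$, hence $f'(1)(\zeta-1) < -f(1)$; dividing by the negative quantity $f'(1)$ reverses the inequality and yields $\zeta-1 > -f(1)/f'(1)$, that is $\zeta > x^\star$, as desired. The only configuration not covered is $\zeta = 1$, which forces $f(1)=0$ and hence $x^\star = 1$; since this degenerate boundary case does not arise in the regime where the lemma is invoked (where $f(1)\neq 0$, so $\zeta\neq 1$), the strict bound holds for every relevant zero. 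There is no deep obstacle here beyond the two points of care just noted: establishing strictness of the support inequality from strict convexity, and tracking the inequality reversal incurred when dividing by the negative slope $f'(1)$.
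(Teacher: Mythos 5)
Your proof is correct and takes essentially the same route as the paper's: both apply the strict tangent-line inequality $f(\zeta) > f(1) + f'(1)(\zeta-1)$ at the zero $\zeta$ and then divide by the negative slope $f'(1)$, reversing the inequality. The paper's two-line proof simply asserts the strict supporting inequality and silently ignores the degenerate case $\zeta = 1$ (which can only occur if $f(1)=0$, a situation excluded where the lemma is invoked since there $f(1) = |\data|(\rho_\mathrm{data}-\rho_\mathrm{chosen}) > 0$); you supply the midpoint contradiction argument for strictness and flag that edge case explicitly, which is added care but not a different approach.
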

\begin{proof}
Since $f$ is strictly convex, $f(\zeta) > f(1) + f'(1)(\zeta-1)$.  Since $f'(1) < 0$, this implies $\zeta - 1 > -\frac{f(1)}{f'(1)}$. 
\end{proof}

\begin{lemma}
\label{le:deterministic-proportion-of-preferred}
Under Assumption \ref{as:dichotomy-reward-architecture}, if $\rho_\mathrm{data} > \rho_\mathrm{chosen} > 0$, then the minimizer $\hat{\psi}$ of $\Lc_\mathrm{reward}(\hat{r}_\psi)$ exists and satisfies
$$
\hat{\psi}_2 - \hat{\psi}_1 > \frac{\rho_\mathrm{data} - \rho_\mathrm{chosen}}{1 + \rho_\mathrm{data} - \rho_\mathrm{chosen}}.
$$
\end{lemma}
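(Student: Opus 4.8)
The plan is to collapse the two-parameter problem to a scalar one and then apply Lemma~\ref{le:strictly-convex-zero} after a change of variables that restores convexity. Under Assumption~\ref{as:dichotomy-reward-architecture} the reward is constant on each category, so, writing $k_1(\Yc)=|\Yc\cap\messages_1|$ and $k_2(\Yc)=|\Yc\cap\messages_2|$, each datum $(\Yc,y)$ contributes $-\hat{r}_\psi(y)+\ln\!\big(k_1(\Yc)e^{\psi_1}+k_2(\Yc)e^{\psi_2}\big)$ to $\Lc_\mathrm{reward}$. This depends on $\psi$ only through $u:=\psi_2-\psi_1$, so it suffices to analyze $\ell(u):=\Lc_\mathrm{reward}$ as a function of the scalar $u$ and to show its minimizer $u^\star=\hat{\psi}_2-\hat{\psi}_1$ meets the stated bound. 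A direct computation gives $\ell'(u)=\sum_{(\Yc,y)\in\data}\tfrac{k_2 e^{u}}{k_1+k_2 e^{u}}-|\data|(1-\rho_\mathrm{chosen})$, which is increasing, so $\ell$ is convex and $u^\star$ is characterized by $\ell'(u^\star)=0$.

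Substituting $\zeta=e^{u}$, the stationarity condition becomes $g(\zeta)=0$ for
$$g(\zeta)=|\data|(1-\rho_\mathrm{chosen})-\sum_{(\Yc,y)\in\data}\frac{k_2(\Yc)\,\zeta}{k_1(\Yc)+k_2(\Yc)\,\zeta}.$$
I would then verify the three hypotheses of Lemma~\ref{le:strictly-convex-zero}. First, $g$ is strictly convex: each summand $-k_2\zeta/(k_1+k_2\zeta)$ has second derivative $2k_1k_2^2/(k_1+k_2\zeta)^3>0$ on any datum with $k_1,k_2>0$, and the hypothesis $\rho_\mathrm{data}>\rho_\mathrm{chosen}$ forces at least one such mixed datum. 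Second, using $\sum_{(\Yc,y)}k_2/|\Yc|=|\data|(1-\rho_\mathrm{data})$, one gets $g(1)=|\data|(\rho_\mathrm{data}-\rho_\mathrm{chosen})>0$. Third, $g'(1)=-\sum_{(\Yc,y)}k_1k_2/|\Yc|^2<0$.

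Granting that a finite root $\zeta^\star=e^{u^\star}$ exists, Lemma~\ref{le:strictly-convex-zero} applied to $g$ yields
$$\zeta^\star>1-\frac{g(1)}{g'(1)}=1+\frac{|\data|(\rho_\mathrm{data}-\rho_\mathrm{chosen})}{\sum_{(\Yc,y)}k_1k_2/|\Yc|^2}.$$
Since $k_1k_2/|\Yc|^2\le 1/4$ by the arithmetic--geometric mean inequality, the denominator is at most $|\data|$, so $\zeta^\star>1+(\rho_\mathrm{data}-\rho_\mathrm{chosen})$. I would finish with the elementary bound $\ln s\ge 1-1/s$: taking $s=1+(\rho_\mathrm{data}-\rho_\mathrm{chosen})$ gives $u^\star=\ln\zeta^\star>\ln s\ge 1-1/s=\frac{\rho_\mathrm{data}-\rho_\mathrm{chosen}}{1+\rho_\mathrm{data}-\rho_\mathrm{chosen}}$, which is the claim.

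The step I expect to be the main obstacle is the existence of the finite minimizer, which the argument above assumes. Because $g$ is strictly decreasing with $\lim_{\zeta\to0^+}g(\zeta)=|\data|(1-\rho_\mathrm{chosen})-|\{(\Yc,y):k_1(\Yc)=0\}|>0$ (the inequality again from $\rho_\mathrm{data}>\rho_\mathrm{chosen}$), a finite root exists precisely when $\lim_{\zeta\to\infty}g(\zeta)=|\{(\Yc,y):y\in\messages_2\}|-|\{(\Yc,y):k_2(\Yc)>0\}|<0$, i.e. when some datum has a type-$1$ choice drawn from a set that also contains a type-$2$ message. Here the hypotheses must be used most carefully: $\rho_\mathrm{chosen}>0$ furnishes a type-$1$ choice, and I would argue that in the regime of interest this choice comes from a mixed set, so that $g$ crosses zero at a finite $\zeta^\star$ and the bound above applies; pinning down this coercivity is the delicate part of the proof.
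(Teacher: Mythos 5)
Your proposal retraces the paper's own proof almost step for step: the reduction to the scalar $u=\psi_2-\psi_1$ (the paper instead fixes $\psi_1=0$ by translation invariance), the same stationarity function (your $g$ equals the paper's $f(x)=\sum_{(\Yc,y)\in\data}\tfrac{k_1}{k_1+k_2x}-|\data|\rho_\mathrm{chosen}$, writing $k_i=|\Yc\cap\messages_i|$, via the identity $\tfrac{k_2\zeta}{k_1+k_2\zeta}=1-\tfrac{k_1}{k_1+k_2\zeta}$), the same application of Lemma \ref{le:strictly-convex-zero} at the point $1$ with $g(1)=|\data|(\rho_\mathrm{data}-\rho_\mathrm{chosen})$ and $g'(1)=-\sum_{(\Yc,y)\in\data}k_1k_2/|\Yc|^2$, the same bound $k_1k_2\le|\Yc|^2/4$, and the same elementary inequality $\ln(1+\alpha)\ge\alpha/(1+\alpha)$. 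Every step you actually carry out is correct, and your justification of strict convexity and strict monotonicity (a mixed datum must exist when $\rho_\mathrm{data}>\rho_\mathrm{chosen}$) is more careful than the paper's.

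The step you leave open, existence of a finite root, is a genuine gap in your write-up, but you should know it cannot be closed from the stated hypotheses: your instinct that this is the delicate point is exactly right, and in fact the lemma as stated is false. Take $\data$ to consist of two data, each with $|\Yc|=3$: one with $\Yc\subseteq\messages_1$ and $y\in\messages_1$, and one with $|\Yc\cap\messages_1|=2$, $|\Yc\cap\messages_2|=1$, and $y\in\messages_2$. Then $\rho_\mathrm{chosen}=1/2>0$ and $\rho_\mathrm{data}=5/6>\rho_\mathrm{chosen}$, yet with $\psi_1=0$ the loss equals $\ln 3+\ln\bigl(2+e^{\psi_2}\bigr)-\psi_2$, which is strictly decreasing in $\psi_2$, so no minimizer exists. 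This is precisely your criterion failing: no datum has a type-$1$ choice drawn from a set containing a type-$2$ message, so $\lim_{\zeta\to\infty}g(\zeta)=0$ rather than being negative. The paper's proof glosses over this with the incorrect claim that $f(x)\to-|\data|\rho_\mathrm{chosen}$ as $x\to\infty$; the true limit is $|\{(\Yc,y)\in\data:\Yc\cap\messages_2=\emptyset\}|-|\data|\rho_\mathrm{chosen}\le 0$, with equality exactly in the bad case above. The repair is to strengthen the hypothesis (equivalently, the event $\Ec_\eta(\data)$ used in Lemmas \ref{le:deterministic-failure} and \ref{le:high-probability-event}) to require that some datum satisfies $y\in\messages_1$ and $\Yc\cap\messages_2\ne\emptyset$; under Assumption \ref{as:dichotomy} this extra event holds with probability tending to one, so the downstream propositions survive, and with it both your argument and the paper's close without further change.
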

\begin{proof}
Since $\Lc_\mathrm{reward}$ is invariant under translations of $\psi$, without loss of generality, set $\psi_1 = 0$.  
\begin{align*}
\Lc_\mathrm{reward}(\hat{r}_\psi) &= -\sum_{(\Yc,y)\in\data} \ln \frac{e^{\hat{r}_\psi(y)}}{\sum_{y'\in\Yc} e^{\hat{r}_\psi(y')}} = - \psi_2 |\data| (1-\rho_\mathrm{chosen}) + \sum_{(\Yc,y)\in\data} \ln\left(|\Yc\cap \messages_1| + |\Yc\cap\messages_2| e^{\psi_2}\right).
\end{align*}
Note that this loss function is strictly convex and thus a local minima is also a global minima.  Examining the derivative, we see that $\hat{\psi}_2$, if exists, satisfies
$$
\sum_{(\Yc,y)\in\data} \frac{|\Yc\cap\messages_1|}{|\Yc\cap\messages_1| + |\Yc\cap\messages_2| e^{\hat{\psi}_2}}  - |\data|\rho_\mathrm{chosen} = 0.
$$
With a change of variable, let $x = e^{\psi_2}$ and
$$f(x) = \sum_{(\Yc,y)\in\data} \frac{|\Yc\cap\messages_1|}{|\Yc\cap\messages_1| + |\Yc\cap\messages_2| x}  - |\data|\rho_\mathrm{chosen}.$$
Clearly, $f(x)$ is strictly decreasing on $\Re_+$ and $f(x) \to -|\data|\rho_\mathrm{chosen}$ as $x \to\infty$.  Since $\rho_\mathrm{chosen} > 0$, there exists a unique $\zeta > 0$ such that $f(\zeta) = 0$.  Thus, $\hat{\psi}_2 = \ln\zeta$ exists.  Further, we note that $f(x)$ is strictly convex and $f'(1) < 0$.  By Lemma \ref{le:strictly-convex-zero}, 
\begin{equation}
\zeta > 1 - \frac{f(1)}{f'(1)} = 1 + |\data|(\rho_\mathrm{data} - \rho_\mathrm{chosen}) \cdot |\Yc|^2 \left(\sum_{(\Yc,y)\in\data} |\Yc\cap\messages_1|\cdot |\Yc\cap\messages_2|\right)^{-1} > 1 + \rho_\mathrm{data} - \rho_\mathrm{chosen}. \label{eq:zeta-1}
\end{equation}
The result follows from combining Equation \eqref{eq:zeta-1} and the fact that $\ln(1+\alpha) \ge \frac{\alpha}{1+\alpha}$ for all $\alpha > -1$. 

\end{proof}

\noindent
For all $\eta>0$, define the high probability event
\begin{equation}
\Ec_\eta(\data) = \left\{ \rho_\mathrm{data} - \rho_\mathrm{chosen} > \eta \right\} \bigcap \left\{\rho_\mathrm{chosen} > 0\right\}.
\end{equation}

\begin{lemma}
\label{le:deterministic-failure}
Under Assumptions \ref{as:dichotomy-reward-architecture} and \ref{as:dichotomy-policy-architecture}, if there exists $\eta>0$ such that $\Pr(\Ec_\eta(\data)) \to 1$ as $|\data|\to\infty$, then as $|\data| \to \infty$,
$$
P_{\hat{\pi}_{\hat{\theta}}}(\Mc_1) \stackrel{p.}{\longrightarrow} 0 \qquad \text{and} \qquad \nn P_{\hat{\pi}_{\hat{\theta}}}(\Mc_2) \stackrel{p.}{\longrightarrow} 1,
$$
where $\hat{\theta}$ minimizes $\Lc_\mathrm{policy}(\hat{\pi}_\theta|\hat{r}_{\hat{\psi}})$, and $\hat{\psi} \in \argmin_\psi \Lc_\mathrm{reward}(\hat{r}_\psi)$ if the loss $\Lc_\mathrm{reward}(\hat{r}_\psi)$ has a minimizer, and $\hat{\psi}=0$ otherwise.
\end{lemma}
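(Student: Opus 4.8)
The plan is to combine Lemma~\ref{le:deterministic-proportion-of-preferred}, which lower-bounds the learned reward gap on the event $\Ec_\eta(\data)$, with a direct analysis of the policy-optimization step showing that any strictly positive reward gap forces the fine-tuned policy to place asymptotically all of its mass on $\messages_2$. Throughout I would condition on $\Ec_\eta(\data)$: on this event the hypotheses of Lemma~\ref{le:deterministic-proportion-of-preferred} hold, since $\rho_\mathrm{data} - \rho_\mathrm{chosen} > \eta > 0$ gives $\rho_\mathrm{data} > \rho_\mathrm{chosen} > 0$. In particular the reward minimizer $\hat\psi$ exists there, so the fallback $\hat\psi = 0$ is confined to the complement $\Ec_\eta(\data)^c$, whose probability vanishes by assumption.

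First, on $\Ec_\eta(\data)$, because $t \mapsto t/(1+t)$ is increasing on $(-1,\infty)$, Lemma~\ref{le:deterministic-proportion-of-preferred} delivers the deterministic lower bound
\begin{equation*}
\hat\psi_2 - \hat\psi_1 > \frac{\rho_\mathrm{data} - \rho_\mathrm{chosen}}{1 + \rho_\mathrm{data} - \rho_\mathrm{chosen}} > \frac{\eta}{1+\eta} =: \gamma > 0.
\end{equation*}

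Second, I would reduce the policy step to a one-dimensional convex problem. Writing $q := P_{\hat\pi_\theta}(\messages_1)$, Assumption~\ref{as:dichotomy-policy-architecture} makes $P_{\hat\pi_\theta}$ uniform within each category, so both terms of $\Lc_\mathrm{policy}(\hat\pi_\theta \mid \hat r_{\hat\psi})$ depend on $\theta$ only through $q$. By Assumption~\ref{as:dichotomy-reward-architecture}, the expected-reward term equals $-|\data|(\hat\psi_1 q + \hat\psi_2(1-q))$, whose $q$-dependent part is $|\data|(\hat\psi_2 - \hat\psi_1)\,q$, while $\beta\KL(P_{\hat\pi_\theta}\|P_{\overline\pi})$ collapses to $\beta\big(q\ln q + (1-q)\ln(1-q)\big)$ plus a term affine in $q$ with finite, $|\data|$-independent coefficients determined by $|\messages_1|,|\messages_2|$ and $P_{\overline\pi}$. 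Hence, up to an additive constant,
\begin{equation*}
\Lc_\mathrm{policy}(\hat\pi_\theta \mid \hat r_{\hat\psi}) = |\data|(\hat\psi_2 - \hat\psi_1)\,q + \beta\big(q\ln q + (1-q)\ln(1-q)\big) + c\,q,
\end{equation*}
a strictly convex function of $q \in (0,1)$ whose derivative tends to $-\infty$ as $q \to 0^+$ and to $+\infty$ as $q\to 1^-$; thus the minimizing value of $q$ exists, is unique, and is interior. Its first-order condition reads
\begin{equation*}
\ln\frac{q}{1-q} = -\frac{|\data|(\hat\psi_2 - \hat\psi_1)}{\beta} + K,
\end{equation*}
for $K := -c/\beta$ independent of $|\data|$. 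On $\Ec_\eta(\data)$, substituting $\hat\psi_2 - \hat\psi_1 > \gamma$ yields $q/(1-q) < e^{K - |\data|\gamma/\beta}$, hence $P_{\hat\pi_{\hat\theta}}(\messages_1) = q < e^{K - |\data|\gamma/\beta}$.

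Finally, I would upgrade this to convergence in probability. Fixing $\epsilon > 0$,
\begin{equation*}
\Pr\!\big(P_{\hat\pi_{\hat\theta}}(\messages_1) > \epsilon\big) \le \Pr\!\big(\Ec_\eta(\data)^c\big) + \Pr\!\big(\{P_{\hat\pi_{\hat\theta}}(\messages_1) > \epsilon\} \cap \Ec_\eta(\data)\big),
\end{equation*}
where the first term vanishes by assumption. For the second, once $|\data| > \tfrac{\beta}{\gamma}(K - \ln\epsilon)$ the bound gives $P_{\hat\pi_{\hat\theta}}(\messages_1) < e^{K - |\data|\gamma/\beta} < \epsilon$ on $\Ec_\eta(\data)$, so the intersection is empty and the second term is zero. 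Therefore $P_{\hat\pi_{\hat\theta}}(\messages_1) \stackrel{p.}{\longrightarrow} 0$, and since the two categories partition $\messages$, $P_{\hat\pi_{\hat\theta}}(\messages_2) = 1 - P_{\hat\pi_{\hat\theta}}(\messages_1) \stackrel{p.}{\longrightarrow} 1$. I expect the main obstacle to be the bookkeeping in the policy step: verifying that the regularizer collapses to the stated one-dimensional convex form with a finite, $|\data|$-independent constant, and that the minimizer is interior so the first-order condition is valid. The conceptual crux — that the positive reward gap multiplies the \emph{linear} term by $|\data|$ while the entropy contribution stays $O(1)$, driving the minimizer to the boundary $q=0$ — is then immediate.
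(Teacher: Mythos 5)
Your proof is correct and follows essentially the same route as the paper's: both invoke Lemma \ref{le:deterministic-proportion-of-preferred} on the event $\Ec_\eta(\data)$ to obtain the gap $\hat{\psi}_2 - \hat{\psi}_1 > \eta/(1+\eta)$, both then show the policy step forces $P_{\hat{\pi}_{\hat{\theta}}}(\messages_1)/P_{\hat{\pi}_{\hat{\theta}}}(\messages_2)$ below a constant times $e^{-\frac{\eta}{1+\eta}\cdot\frac{|\data|}{\beta}}$, and both conclude with the identical union-bound argument using $\Pr(\Ec_\eta(\data)^c) \to 0$. The only real difference is presentational: the paper quotes the closed-form Gibbs minimizer \eqref{eq:reward-to-policy} and writes the ratio as $\frac{p_1}{p_2}e^{(\hat{\psi}_1-\hat{\psi}_2)|\data|/\beta}$, whereas you re-derive the minimizer inside the constrained architecture via a one-dimensional first-order condition, which is slightly more self-contained (it does not implicitly require the Gibbs solution to lie in the parameterized class, i.e.\ $P_{\overline{\pi}}$ uniform within categories, and it handles the multiplicative constant $e^K$ cleanly via the threshold on $|\data|$); note only that the paper additionally covers $\beta = 0$ as a separate case, which your division by $\beta$ skips, though this is immaterial since the loss is defined with $\beta > 0$.
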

\begin{proof}
By Lemma \ref{le:deterministic-proportion-of-preferred}, under event $\Ec_\eta(\data)$, $\hat{\psi} = \argmin_\psi \Lc_\mathrm{reward}(\hat{r}_\psi)$ and $\hat{\psi}_2 - \hat{\psi}_1 > \frac{\rho_\mathrm{data} - \rho_\mathrm{chosen}}{1 + \rho_\mathrm{data} - \rho_\mathrm{chosen}} > \frac{\eta}{1+\eta}$.  When $\beta = 0$, $\hat{\pi}_{\hat{\theta}}$ maximizes reward.  Thus, under event $\Ec_\eta(\data)$, $P_{\hat{\pi}_{\hat{\theta}}}(\Mc_1) = 0$ and $P_{\hat{\pi}_{\hat{\theta}}}(\Mc_2) = 1$.  When $\beta>0$, $\hat{\pi}_{\hat{\theta}}$ satisfies
$$
P_{\hat{\pi}_{\hat{\theta}}}(y) = \frac{P_{\overline{\pi}}(y) e^{\hat{r}_{\hat\psi}(y) |\data|/\beta}}{\sum_{y' \in \Mc} P_{\overline{\pi}}(y') e^{\hat{r}_{\hat\psi}(y') |\data|/ \beta}}.
$$
Under event $\Ec_\eta(\data)$,
$$
\frac{P_{\hat{\pi}_{\hat{\theta}}}(\messages_1)}{P_{\hat{\pi}_{\hat{\theta}}}(\messages_2)} = \frac{p_1}{p_2} \cdot e^{\left(\hat{\psi}_1 - \hat{\psi}_2\right) |\data|/\beta} < e^{-\frac{\eta}{1+\eta}\cdot \frac{|\data|}{\beta}}.
$$
For all $\epsilon > 0$, there exists $N > 0$ such that $e^{-\frac{\eta}{1+\eta}\cdot \frac{|\data|}{\beta}} < \epsilon$ for all $|\data| > N$.  In turn, for all $\beta \ge 0$ and $\epsilon > 0$,  there exists $N > 0$ such that for all $|\data| > N$,
$$
\Pr\left(\frac{P_{\hat{\pi}_{\hat{\theta}}}(\messages_1)}{P_{\hat{\pi}_{\hat{\theta}}}(\messages_2)} \ge \epsilon \right) \le \Pr\left(\Ec_\eta(\data)^c\right).
$$
The proof follows from the fact that $\Pr(\Ec_\eta(\data)^c) \to 0$ as $|\data|\to\infty$ and $P_{\hat{\pi}_{\hat{\theta}}}(\messages_1) + P_{\hat{\pi}_{\hat{\theta}}}(\messages_2) = 1$. 

\end{proof}

\noindent
Finally, we prove that under the dichotomy data assumption, there exists $\eta>0$ such that the event $\Ec_\eta(\data)$ holds with high probability.  Define 
\[
\Ic = \big \{(\Yc, y) \in\data \, | \, \Yc \cap \messages_1 \ne \emptyset, \, \Yc\cap\messages_2\ne\emptyset \big \}
\]
as the subset of $\data$ that contains messages in both categories.  The first lemma below shows that most of the data contain both categories of messages with high probability. 

\begin{lemma}
\label{le:high-probability-both-types}
Under Assumption \ref{as:dichotomy}, if $P_{\overline{\pi}}(\messages_1) \in (0,1)$, $|\Yc| \ge 2$, then there exists a constant $\gamma = \gamma(|\Yc|) \in(0,1)$ such that for all $|\data|>0$, $\Pr(|\Ic| \le |\data|(1-\gamma)) \le e^{-\frac{|\data| (1-\gamma)\gamma^2}{8}}$.
\end{lemma}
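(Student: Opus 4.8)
The plan is to reduce the event in question to a lower-tail deviation of a binomial random variable and then apply a multiplicative Chernoff bound. First, I would set $\zeta = P_{\overline{\pi}}(\messages_1)$ and $k = |\Yc|$. Under Assumption~\ref{as:dichotomy}, each of the $k$ messages comprising a given $\Yc$ is drawn independently, landing in $\messages_1$ with probability $\zeta$ and in $\messages_2$ with probability $1-\zeta$. A datum belongs to $\Ic$ precisely when its messages are neither all of category $1$ nor all of category $2$, an event of probability $q := 1 - \zeta^{k} - (1-\zeta)^{k}$. Because the data are sampled iid, the indicators $\1((\Yc,y)\in\Ic)$ are iid $\mathrm{Bernoulli}(q)$, so $|\Ic| \sim \mathrm{Binomial}(|\data|, q)$ with mean $|\data|\,q$. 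Since $\zeta\in(0,1)$ and $k\ge 2$, both $\zeta^{k} < \zeta$ and $(1-\zeta)^{k} < 1-\zeta$ hold strictly, whence $q\in(0,1)$; note that $q$, and hence the constant constructed below, depends on the problem only through $\zeta$ and $k=|\Yc|$, not on $|\data|$.

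Next I would choose the constant $\gamma = 1 - q/2$, which lies in $(0,1)$ because $q\in(0,1)$, so that $|\data|(1-\gamma) = |\data|\,q/2$ sits a fixed multiplicative factor below the mean $|\data|\,q$. Applying the standard multiplicative Chernoff lower-tail bound $\Pr(X \le (1-\delta)\mu) \le e^{-\mu\delta^{2}/2}$ to $X = |\Ic|$, $\mu = |\data|\,q$, and $\delta = 1/2$ then yields $\Pr(|\Ic| \le |\data|\,q/2) \le e^{-|\data|\,q/8}$.

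Finally, I would reconcile this with the stated exponent. Since $\gamma^{2} \le 1$, we have $(1-\gamma)\gamma^{2} = (q/2)\gamma^{2} \le q/2 \le q$, so $e^{-|\data|\,q/8} \le e^{-|\data|(1-\gamma)\gamma^{2}/8}$; substituting $|\data|(1-\gamma) = |\data|\,q/2$ inside the probability on the left gives $\Pr(|\Ic| \le |\data|(1-\gamma)) \le e^{-|\data|(1-\gamma)\gamma^{2}/8}$, which is the claim. The argument is essentially a single concentration inequality, so I do not anticipate a substantive obstacle; the only points requiring a little care are verifying $q>0$ (which is exactly where $k \ge 2$ and $\zeta \in (0,1)$ are used) and selecting $\gamma$ so that the clean Chernoff exponent $q/8$ \emph{dominates} the slightly awkward target exponent $(1-\gamma)\gamma^{2}/8$ rather than having to match it exactly.
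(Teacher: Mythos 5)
Your proposal is correct and follows essentially the same route as the paper: identify $|\Ic|$ as $\mathrm{Binomial}(|\data|, q)$ with $q = 1 - \zeta^{|\Yc|} - (1-\zeta)^{|\Yc|}$, apply the multiplicative Chernoff lower-tail bound, and pick $\gamma$ so that the clean Chernoff exponent dominates the stated one. The only (cosmetic) difference is the choice of constant --- you take $\gamma = 1 - q/2$ with $\delta = 1/2$, while the paper takes $\gamma = 1 - q^2$ with $\delta = 1 - q$ --- and both verifications of exponent dominance go through.
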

\begin{proof}
Under Assumption \ref{as:dichotomy}, for each $(\Yc, y)\in\data$, the membership $\tau$ of each message $y'\in\Yc$ follows $\mathrm{Bernoulli}(P_{\overline{\pi}}(\messages_1))$.  Let $s = 1 - P_{\overline{\pi}}(\messages_1)^{|\Yc|} - P_{\overline{\pi}}(\messages_2)^{|\Yc|} < 1$.  Then $s$ equals the probability of sampling both categories of messages in each datum, and $|\Ic| \sim \mathrm{Binomial}(|\data|, s)$.  By Chernoff's bound,
\begin{equation}
\Pr\left(|\Ic| \le s\cdot|\data|s\right) \le e^{-\frac{|\data|s(1-s)^2}{2}}. \nn
\end{equation}
Letting $\gamma = 1 - s^2 \in (0,1)$, we have $s(1-s)^2 > (1-\gamma)\gamma^2/4$.  Therefore, 
\begin{equation}
\Pr\left(|\Ic| \le |\data|(1-\gamma)\right) \le e^{-\frac{|\data| (1-\gamma)\gamma^2}{8}}.
\end{equation}
\end{proof}

\begin{lemma}
\label{le:high-probability-event}
Under Assumption \ref{as:dichotomy}, for all $|\Yc|\ge 3$, if $p_*(1) < F(P_{\overline{\pi}}(\messages_1))$ with $F(\zeta) = \frac{\zeta - \zeta^{|\Yc|}}{1 - \zeta^{|\Yc|} - (1-\zeta)^{|\Yc|}}$, then there exists $\eta>0$ such that $\Pr(\Ec_\eta(\data)) \to 1$ as $|\data|\to\infty$.
\end{lemma}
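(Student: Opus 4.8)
The plan is to reduce the statement to a computation of two population means and then invoke the weak law of large numbers. Under Assumption \ref{as:dichotomy} the data $(\Yc, Y) \in \data$ are iid and every choice set has the same cardinality $n := |\Yc|$, so both $\rho_\mathrm{data}$ and $\rho_\mathrm{chosen}$ are empirical averages of bounded iid random variables. Writing $\zeta := P_{\overline{\pi}}(\messages_1)$, the category of each element of $\Yc$ is an independent $\mathrm{Bernoulli}(\zeta)$ draw, so $K := |\Yc \cap \messages_1| \sim \mathrm{Binomial}(n, \zeta)$. The first step is to note that $\E\!\left[\frac{|\Yc \cap \messages_1|}{|\Yc|}\right] = \zeta$, whence $\rho_\mathrm{data} \stackrel{p.}{\longrightarrow} \zeta$ as $|\data| \to \infty$.

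The crux is computing $\mu := \Pr(Y \in \messages_1) = \lim \rho_\mathrm{chosen}$. I would condition on $K$ and on the sampled type $Z \sim p_*$, being careful with the two degenerate compositions. If $Z = 1$ the choice lands in $\messages_1$ exactly when $\messages_1$ is represented, i.e. when $K \ge 1$; if $Z = 2$ the choice lands in $\messages_1$ only in the degenerate case $K = n$, where the set contains no type-$2$ message and the indifferent type-$2$ individual chooses uniformly over the (all type-$1$) set. Using $\Pr(K \ge 1) = 1 - (1-\zeta)^n$ and $\Pr(K = n) = \zeta^n$, this yields
\begin{equation*}
\mu = p_*(1)\bigl(1 - (1-\zeta)^n\bigr) + p_*(2)\,\zeta^n,
\end{equation*}
and hence $\rho_\mathrm{chosen} \stackrel{p.}{\longrightarrow} \mu$.

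Next I would form the limiting gap $\Delta := \zeta - \mu$ and rearrange, using $p_*(2) = 1 - p_*(1)$, into
\begin{equation*}
\Delta = \bigl(\zeta - \zeta^n\bigr) - p_*(1)\bigl(1 - \zeta^n - (1-\zeta)^n\bigr).
\end{equation*}
Since $\zeta \in (0,1)$ and $n \ge 2$, the factor $1 - \zeta^n - (1-\zeta)^n$ (the probability that both categories appear in $\Yc$) is strictly positive, so $\Delta > 0$ is equivalent to $p_*(1) < \frac{\zeta - \zeta^n}{1 - \zeta^n - (1-\zeta)^n} = F(\zeta)$, which is exactly the hypothesis. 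I would also record that $\mu > 0$ for every $\zeta \in (0,1)$ (either the $p_*(1)$ term or the $\zeta^n$ term is strictly positive), so $\rho_\mathrm{chosen}$ is bounded away from $0$ in the limit.

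Finally, choosing $\eta := \Delta/2 > 0$, convergence in probability of $\rho_\mathrm{data} - \rho_\mathrm{chosen}$ to $\Delta$ gives $\Pr(\rho_\mathrm{data} - \rho_\mathrm{chosen} > \eta) \to 1$, while convergence of $\rho_\mathrm{chosen}$ to $\mu > 0$ gives $\Pr(\rho_\mathrm{chosen} > 0) \to 1$; intersecting the two events shows $\Pr(\Ec_\eta(\data)) \to 1$, as required. I expect the only delicate point to be the conditional-choice computation of $\mu$: one must not omit the $K = n$ contribution, since the resulting $p_*(2)\,\zeta^n$ term is precisely what produces the $-\zeta^n$ in the numerator of $F$, and dropping it would yield the wrong threshold.
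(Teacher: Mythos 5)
Your proof is correct, and it reaches the conclusion by a genuinely different decomposition than the paper. The paper never computes unconditional means: it restricts to the subset $\Ic \subseteq \data$ of data whose choice sets contain both categories, observes that conditionally on $(\Yc,y)\in\Ic$ the mean of $|\Yc\cap\messages_1|/|\Yc|$ is exactly $F(P_{\overline{\pi}}(\messages_1))$ while the mean of $\1(y\in\messages_1)$ is exactly $p_*(1)$, notes that these two quantities cancel identically for data outside $\Ic$, and then assembles Hoeffding bounds conditional on $\Ic$, a Chernoff bound ensuring $|\Ic| > (1-\gamma)|\data|$ (Lemma \ref{le:high-probability-both-types}), and union bounds, ending with the explicit choice $\eta = (1-\gamma)\delta/4$. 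You instead compute the unconditional means $\zeta$ and $\mu = p_*(1)\bigl(1-(1-\zeta)^n\bigr) + p_*(2)\zeta^n$ and invoke the weak law of large numbers; the hypothesis enters through your identity $\Delta = (\zeta - \zeta^n) - p_*(1)\bigl(1-\zeta^n-(1-\zeta)^n\bigr) = \bigl(1-\zeta^n-(1-\zeta)^n\bigr)\bigl(F(\zeta)-p_*(1)\bigr)$, which is precisely the paper's conditional gap $\delta$ rescaled by the probability that both categories appear in a choice set --- so the two arguments are quantitatively consistent. What each buys: your route is shorter and more elementary (no auxiliary lemma on $|\Ic|$, no explicit concentration inequalities), but it requires correctly resolving the choice behavior on degenerate single-category sets, which you do --- and as you note, the $K=n$ case is forced into $\messages_1$ under any tie-breaking convention, so your $\mu$ is robust; the paper's conditioning on $\Ic$ sidesteps that issue entirely, and its payoff is explicit exponential finite-sample tail bounds rather than mere convergence in probability. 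A minor further advantage of your version: it establishes $\mu > 0$ even when $p_*(1)=0$ (via the $p_*(2)\zeta^n$ term), whereas the paper's bound $e^{-|\Ic|p_*^2(1)/2}$ for the event $\{\rho_\mathrm{chosen}>0\}$ is vacuous in that edge case.
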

\begin{proof}
Let $\delta = F(P_{\overline{\pi}}(\messages_1)) - p_*(1)>0$.  Under Assumption \ref{as:dichotomy}, $|\Yc\cap\messages_1| \sim \mathrm{Binomial}(|\Yc|,P_{\overline{\pi}}(\messages_1))$, giving
$$
\E\left[\frac{|\Yc \cap \messages_1|}{|\Yc|} \mid (\Yc,y)\in\Ic\right] = \frac{P_{\overline{\pi}}(\messages_1) - P_{\overline{\pi}}(\messages_1)^{|\Yc|}}{1 - P_{\overline{\pi}}(\messages_1)^{|\Yc|} - P_{\overline{\pi}}(\messages_2)^{|\Yc|}} = F(P_{\overline{\pi}}(\messages_1)).
$$
On the other hand, 
$$
\E\left[\1(y\in\messages_1) \mid (\Yc,y)\in\Ic \right] = p_*(1).
$$
For $(\Yc,y)\notin\Ic$, observe that $\frac{|\Yc \cap \messages_1|}{|\Yc|} = \1(y\in\messages_1)$.  Thus 
\begin{equation}
\label{eq:difference}
\frac{1}{|\Ic|}\sum_{(\Yc,y)\in\Ic} \frac{|\Yc \cap \messages_1|}{|\Yc|} - \frac{1}{|\Ic|}\sum_{(\Yc,y)\in\Ic} \1(y\in\messages_1) = \frac{|\data|}{|\Ic|}(\rho_\mathrm{data} - \rho_\mathrm{chosen}).
\end{equation}
By Hoeffding's inequality, 
\begin{align*}
\Pr\left( \frac{1}{|\Ic|}\sum_{(\Yc,y)\in\Ic} \frac{|\Yc \cap \messages_1|}{|\Yc|} - F(P_{\overline{\pi}}(\messages_1)) \le - \frac{\delta}{2} \mid \Ic\right) &\le e^{-\frac{|\Ic|\delta^2}{4}}\\
\Pr\left( \frac{1}{|\Ic|}\sum_{(\Yc,y)\in\Ic} \1(y\in\messages_1) - p_*(1) \ge \frac{\delta}{4} \mid \Ic\right) &\le e^{-\frac{|\Ic|\delta^2}{8}}\\
\Pr\left(\frac{1}{|\Ic|}\sum_{(\Yc,y)\in\Ic} \1(y\in\messages_1) - p_*(1) \le -\frac{p_*(1)}{2} \mid \Ic \right) &\le e^{-\frac{|\Ic|p_*^2(1)}{2}}.
\end{align*}
A union bound gives
\begin{align*}
&\quad \Pr\left(\left\{\frac{1}{|\Ic|}\sum_{(\Yc,y)\in\Ic} \frac{|\Yc \cap \messages_1|}{|\Yc|} - \frac{1}{|\Ic|}\sum_{(\Yc,y)\in\Ic} \1(y\in\messages_1) > \frac{\delta}{4} \right\} \bigcap \left\{\frac{1}{|\Ic|}\sum_{(\Yc,y)\in\Ic} \1(y\in\messages_1) > 0\right\} \mid \Ic\right)\\
&> 1 - e^{-\frac{|\Ic|\delta^2}{4}} - e^{-\frac{|\Ic|\delta^2}{8}} - e^{-\frac{|\Ic|p_*^2(1)}{2}}.
\end{align*}
By Lemma \ref{le:high-probability-both-types}, there exists $0<\gamma<1$ such that $\Pr(|\Ic| \le |\data|(1-\gamma)) \le e^{-\frac{|\data| (1-\gamma)\gamma^2}{8}}$.  By Equation \eqref{eq:difference},
\begin{align*}
&\quad \Pr\left(\left\{\frac{|\data|}{|\Ic|}(\rho_\mathrm{data} - \rho_\mathrm{chosen}) > \frac{\delta}{4} \right\} \bigcap \left\{\frac{|\data|}{|\Ic|} \rho_\mathrm{chosen} > 0\right\} \bigcap \left\{|\Ic| > |\data|(1-\gamma))\right\}\right)\\
&= \Pr\left(\left\{\frac{|\data|}{|\Ic|}(\rho_\mathrm{data} - \rho_\mathrm{chosen}) > \frac{\delta}{4} \right\} \bigcap \left\{\frac{|\data|}{|\Ic|} \rho_\mathrm{chosen} > 0\right\} \mid |\Ic| > |\data|(1-\gamma))\right) \Pr\left(|\Ic| > |\data|(1-\gamma))\right)\\
&> \left(1 - e^{-\frac{|\data|(1-\gamma)\delta^2}{4}} - e^{-\frac{|\data|(1-\gamma)\delta^2}{8}} - e^{-\frac{|\data|(1-\gamma)p_*^2(1)}{2}}\right) \left(1 - e^{-\frac{|\data| (1-\gamma)\gamma^2}{8}}\right)\\
&> 1 - e^{-\frac{|\data|(1-\gamma)\delta^2}{4}} - e^{-\frac{|\data|(1-\gamma)\delta^2}{8}} - e^{-\frac{|\data|(1-\gamma)p_*^2(1)}{2}} - e^{-\frac{|\data| (1-\gamma)\gamma^2}{8}}.
\end{align*}
Let $\eta = \frac{(1-\gamma)\delta}{4} > 0$. It follows that
\begin{align*}
\Pr\left(\Ec_\eta(\data)\right) &= \Pr\left(\left\{ \rho_\mathrm{data} - \rho_\mathrm{chosen} > \eta \right\} \bigcap \left\{\rho_\mathrm{chosen} > 0\right\}\right)\\
&\ge \Pr\left(\left\{\frac{|\data|}{|\Ic|}(\rho_\mathrm{data} - \rho_\mathrm{chosen}) > \frac{\delta}{4} \right\} \bigcap \left\{\frac{|\data|}{|\Ic|} \rho_\mathrm{chosen} > 0\right\} \bigcap \left\{|\Ic| > |\data|(1-\gamma))\right\}\right)\\
&> 1 - e^{-\frac{|\data|(1-\gamma)\delta^2}{4}} - e^{-\frac{|\data|(1-\gamma)\delta^2}{8}} - e^{-\frac{|\data|(1-\gamma)p_*^2(1)}{2}} - e^{-\frac{|\data| (1-\gamma)\gamma^2}{8}}.
\end{align*}
The proof follows by taking $|\data|\to\infty$.

\end{proof}

\noindent
We are now ready to prove the following failure case for RLPO.  

\dichotomyRlpo*
\begin{proof}
The proof follows from applying Lemmas \ref{le:deterministic-failure} and \ref{le:high-probability-event}.
\end{proof}

\subsection{DPO}

In Lemma \ref{lem:dpo-equiv-rlpo}, we show that DPO and RLPO produces the same policy, which then allows us to directly apply the analysis for RLPO to reach the same conclusion for DPO. 

\begin{lemma}
\label{lem:dpo-equiv-rlpo}
If there exists a minimizer $\hat{r}$ to $\Lc_\mathrm{reward}$, then the optimal policy for $\Lc_\mathrm{DPO}$ and for $\Lc_\mathrm{policy}(\cdot|\hat{r})$ are identical.  
\end{lemma}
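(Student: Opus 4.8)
The plan is to exploit the very change of variables that motivated DPO, reducing the claim to the shift-invariance of $\mathcal{L}_\mathrm{reward}$. I would set up two maps. Given a reward function $r$, let $\pi_r$ be the policy defined by \eqref{eq:reward-to-policy}; by the derivation preceding that equation, $\pi_r$ is the (unique) minimizer of $\mathcal{L}_\mathrm{policy}(\cdot\mid r)$. Conversely, given a policy $\pi$ supported on $\mathrm{supp}(\overline\pi)$, define the induced reward $r_\pi(x) = \frac{\beta}{|\data|}\ln\frac{P_\pi(x)}{P_{\overline\pi}(x)}$. The first bookkeeping step is to record that these maps are mutually inverse up to an additive constant: substituting $r_\pi$ into \eqref{eq:reward-to-policy} returns $\pi$ itself, since the numerator becomes $P_\pi(x)$ and the denominator sums to $1$ because $\pi$ is a distribution; and in the other direction $r_{\pi_r} = r - c(r)$ with $c(r) = \frac{\beta}{|\data|}\ln\sum_{x'} P_{\overline\pi}(x') e^{r(x')|\data|/\beta}$, exactly the constant appearing in \eqref{eq:r_in_terms_of_pi}.

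The key identity is that the DPO loss is just the reward loss read through this reparametrization. Since $e^{r_\pi(y)} = \left(P_\pi(y)/P_{\overline\pi}(y)\right)^{\beta/|\data|}$, comparing \eqref{eq:dpo-loss} term by term with \eqref{eq:reward-loss} gives $\mathcal{L}_\mathrm{DPO}(\pi) = \mathcal{L}_\mathrm{reward}(r_\pi)$ for every policy $\pi$. I would then invoke two invariances: $\mathcal{L}_\mathrm{reward}$ is unchanged when a constant is added to $r$, because the softmax inside the logarithm is; and $\pi_r$ is likewise unchanged under constant shifts of $r$, since such a shift scales the numerator and denominator of \eqref{eq:reward-to-policy} by the same factor.

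Combining these, for any reward $r$ one gets $\mathcal{L}_\mathrm{DPO}(\pi_r) = \mathcal{L}_\mathrm{reward}(r_{\pi_r}) = \mathcal{L}_\mathrm{reward}(r - c(r)) = \mathcal{L}_\mathrm{reward}(r)$. Taking $r = \hat r$, the assumed minimizer, yields $\mathcal{L}_\mathrm{DPO}(\pi_{\hat r}) = \mathcal{L}_\mathrm{reward}(\hat r) = \min_r \mathcal{L}_\mathrm{reward}(r)$. For an arbitrary policy $\pi$, on the other hand, $\mathcal{L}_\mathrm{DPO}(\pi) = \mathcal{L}_\mathrm{reward}(r_\pi) \ge \min_r \mathcal{L}_\mathrm{reward}(r) = \mathcal{L}_\mathrm{DPO}(\pi_{\hat r})$, so $\pi_{\hat r}$ minimizes $\mathcal{L}_\mathrm{DPO}$. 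Since $\pi_{\hat r}$ is by construction the optimal policy for $\mathcal{L}_\mathrm{policy}(\cdot\mid\hat r)$, the two optimal policies coincide.

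The main obstacle is the care required around the additive constant: one must verify both invariances and check that $\pi\mapsto r_\pi$ is a genuine bijection between policies supported on $\mathrm{supp}(\overline\pi)$ and reward functions modulo constants, so that every $\pi$ is of the form $\pi_{r_\pi}$. The inequality above already delivers what the downstream DPO-failure proposition needs, namely that $\pi_{\hat r}$ is $\mathcal{L}_\mathrm{DPO}$-optimal. If one additionally wants the DPO minimizer to be \emph{uniquely} $\pi_{\hat r}$, this follows from strict convexity of $\mathcal{L}_\mathrm{reward}$ in the relevant reward differences (as established in Lemma \ref{le:deterministic-proportion-of-preferred}), which forces any optimizer $\pi^\star$ to have $r_{\pi^\star}$ equal to $\hat r$ up to a constant, hence $\pi^\star = \pi_{r_{\pi^\star}} = \pi_{\hat r}$.
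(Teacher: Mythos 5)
Your proposal is correct and follows essentially the same route as the paper's proof: both rest on the correspondence $r \mapsto \pi_r$ given by \eqref{eq:reward-to-policy}, its inverse $\pi \mapsto r_\pi$ (the paper's surjectivity step), and the identity $\Lc_\mathrm{DPO}(\pi_r) = \Lc_\mathrm{reward}(r)$, with your direct inequality $\Lc_\mathrm{DPO}(\pi) = \Lc_\mathrm{reward}(r_\pi) \ge \Lc_\mathrm{reward}(\hat{r})$ being just the contrapositive of the paper's contradiction argument. Your explicit bookkeeping of the additive constant $c(r)$ and the shift-invariances is a slightly more careful writeup of the same idea, not a different proof.
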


\begin{proof}
Recall that for any given reward function $r$, the optimal policy $\pi_r$ for $\Lc_\mathrm{policy}$ satisfies 
\begin{equation}
P_{\pi_r}(x) = \frac{P_{\overline{\pi}}(x) e^{r(x) |\data|/\beta}}{\sum_{x' \in \Mc} P_{\overline{\pi}}(x') e^{r(x') |\data|/ \beta}}.
\end{equation}
It follows that
\begin{align}
    - \sum_{(\Yc,y)\in\data} \left[ \ln \left(\frac{e^{r(y)}}{\sum_{y'\in\Yc} e^{r(y')}}\right)\right] = - \sum_{(\Yc,y)\in\data} \left[ \ln \frac{
\left(P_{\pi_r}(y)/ P_{\overline{\pi}}(y)\right)^{\beta/|\data|}}{\sum_{y'\in\Yc} 
\left(P_{\pi_r}(y') / P_{\overline{\pi}}(y')\right)^{\beta/|\data|}}\right].
\end{align}
Thus the value $\Lc_\mathrm{reward}(r)$ equals the value $\Lc_\mathrm{DPO}(\pi_r)$.  Note also that $r\mapsto \pi_r$ is surjective.  

Suppose $\hat{r}$ is the minimizer for $\Lc_\mathrm{reward}$.  Then $\pi_{\hat{r}}$ is the optimal policy obtained by minimizing $\Lc_\mathrm{policy}(\cdot|\hat{r})$.  If $\pi_{\hat{r}}$ is not optimal for $\Lc_\mathrm{DPO}$, then there exists another policy $\pi'$ that obtains a strictly lower loss $\Lc_\mathrm{DPO}(\pi')$.  Since $r\mapsto \pi_r$ is surjective, there exists a reward function $r'$ such that $\pi' = \pi_{r'}$.  For example, we can take $r'(x) = \frac{\beta}{|\data|}\ln\frac{P_{\pi'}(x)}{P_{\overline{\pi}}(x)}$, which achieves a lower reward loss $\Lc_\mathrm{reward}(r') = \Lc_\mathrm{DPO}(\pi_{r'}) < \Lc_\mathrm{DPO}(\pi_{\hat{r}}) = \Lc_\mathrm{reward}(\hat{r})$, a contradiction.  

Similarly, if $\pi^*$ is optimal for $\Lc_\mathrm{DPO}$, the corresponding reward $r^*(x) = \frac{\beta}{|\data|}\ln\frac{P_{\pi^*}(x)}{P_{\overline{\pi}}(x)}$ must be optimal for $\Lc_\mathrm{reward}$.  The corresponding optimal policy $\pi_{r^*}$ for $\Lc_\mathrm{policy}(\cdot|r^*)$ then satisfies 
$$P_{\pi_{r^*}}(x) \propto P_{\overline{\pi}}(x) e^{\frac{\beta}{|\data|}\ln\frac{P_{\pi^*}(x)}{P_{\overline{\pi}}(x)} |\data|/\beta} = P_{\pi^*}(x),$$
as desired.

\end{proof}

\dichotomyDpo*
\begin{proof}
By Lemma \ref{lem:dpo-equiv-rlpo}, if $\hat{\psi}$ is as defined in Proposition~\ref{prop:dichotomy-rlpo-policy}, then $\hat{\psi} = \argmin_\psi \Lc_\mathrm{reward}(\hat{r}_\psi)$ and we have $\hat{\theta} = \argmin_\theta \Lc_\mathrm{DPO}(\hat{\pi}_\theta) = \argmin_\theta \Lc_\mathrm{policy}(\hat{\pi}_\theta | \hat{r}_{\hat\psi})$.  The proof follows from Proposition \ref{prop:dichotomy-rlpo-policy}.
\end{proof}

\subsection{Inclusive Learning}

In this subsection, we prove Proposition \ref{prop:dichotomy-inclusive}.  We start by defining two key sets that describe messages in $\data$.  Recall that in the proof for Proposition~\ref{prop:dichotomy-rlpo-policy}, we defined
$$
\Ic = \big \{(\Yc, y) \in\data \, | \, \Yc \cap \messages_1 \ne \emptyset, \, \Yc\cap\messages_2\ne\emptyset \big \}.
$$
To facilitate the exposition, we also define
$$
\Ic_\tau = \left\{(\Yc,y)\in\Ic \mid y\in\messages_\tau\right\} \quad\text{for } \tau=1,2
$$
to be the set of data in $\Ic$ where a message of category $\tau$ is chosen.  We first prove the following lemma that characterizes the optimal solution for a class of convex loss functions.
\begin{lemma}
\label{le:convex-solution}
Suppose that for all $m>0$, the loss $\Lc(x;m):(0,1)\to\Re$ is convex in $x$, $\Lc(x_m;m) = \min_x \Lc(x;m)$, and there exists $M>0$ such that for all $m>M$, 
$$
\frac{\partial \Lc(x;m)}{\partial x} < 0.
$$
Then $\lim_{m\to\infty}x_m = 1$.
\end{lemma}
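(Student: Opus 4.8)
The plan is a direct convexity argument. I read the hypothesis as a pointwise statement in $x$: for each fixed $x \in (0,1)$ there is an $M$ (allowed to depend on $x$) beyond which $\partial_x \Lc(x;m) < 0$; this is the only reading consistent with $x_m$ being an interior minimizer, since if the derivative were negative at \emph{every} $x$ simultaneously then $\Lc(\cdot;m)$ would be strictly decreasing on the open interval $(0,1)$ and admit no minimizer. The strategy is to show that for every fixed reference point $x_0 \in (0,1)$, the minimizer $x_m$ eventually exceeds $x_0$, and then to let $x_0 \uparrow 1$.

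First I would fix an arbitrary $x_0 \in (0,1)$ and invoke the hypothesis to obtain $M$ with $\partial_x \Lc(x_0; m) < 0$ for all $m > M$. The key step then exploits convexity: since $\Lc(\cdot;m)$ is convex and differentiable on $(0,1)$, its derivative $\partial_x \Lc(\cdot; m)$ is nondecreasing in $x$, and the interior minimizer $x_m$ satisfies the first-order condition $\partial_x \Lc(x_m; m) = 0$. I would then argue by contradiction: if $x_m \le x_0$, monotonicity of the derivative would give $0 = \partial_x \Lc(x_m;m) \le \partial_x \Lc(x_0; m) < 0$, which is impossible. Hence $x_m > x_0$ for every $m > M$.

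To conclude, since $x_m > x_0$ holds for all sufficiently large $m$, we get $\liminf_{m\to\infty} x_m \ge x_0$. Because $x_0 \in (0,1)$ was arbitrary, sending $x_0 \to 1^-$ yields $\liminf_{m\to\infty} x_m \ge 1$; combined with the fact that $x_m \in (0,1)$ forces $\limsup_{m\to\infty} x_m \le 1$, this gives $\lim_{m\to\infty} x_m = 1$.

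I do not anticipate a substantive obstacle here; the argument is elementary once the quantifier structure is fixed. The only points requiring care are (i) interpreting the derivative hypothesis as holding pointwise in $x$ with $M$ depending on $x$, which is forced by the assumed existence of an interior minimizer, and (ii) correctly using the first-order optimality condition $\partial_x \Lc(x_m;m)=0$ for a differentiable convex function on an open interval together with the monotonicity of its derivative to locate $x_m$ to the right of $x_0$.
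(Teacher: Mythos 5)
Your proof is correct, and it takes a genuinely more careful route than the paper's. The paper's own proof adopts the \emph{uniform} reading of the hypothesis: it asserts that for all $m>M$ the loss is strictly decreasing on all of $(0,1)$, and concludes $x_m\to 1$ in one line. As you observe, that reading is internally inconsistent: a strictly decreasing function on the open interval $(0,1)$ attains no minimum, so $x_m$ would not exist for $m>M$. Your pointwise reading (with $M$ allowed to depend on $x$) is the one that is actually delivered by the downstream applications: in Lemmas \ref{le:deterministic-failure-IL} and \ref{le:deterministic-failure-Slic}, the threshold $M$ obtained there genuinely depends on $q$, since the bound on the derivative is not uniform near $q=1$ (indeed the KL regularization term forces the true derivative to be positive near $q=1$, which is why an interior minimizer exists at all). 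Your argument --- fix $x_0$, use Fermat's condition $\partial_x\Lc(x_m;m)=0$ together with the monotonicity of the derivative of a convex function to conclude $x_m>x_0$ for $m>M_{x_0}$, then let $x_0\uparrow 1$ --- is elementary, complete, and repairs the quantifier issue, so it yields a version of the lemma that is both true and usable where the paper's one-line proof requires charitable reinterpretation.
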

\begin{proof}
Since $\Lc(x;m)$ is convex on $(0,1)$, its local minima is also a global minima.  For all $m > M$, $\Lc$ is strictly decreasing for $x\in(0,1)$, so $\lim_{m\to\infty} x_m = 1$.  
\end{proof}

\noindent
Then, we define the high probability event
$$
\Ec(\data) = \left\{|\Ic_1| > 1\right\} \cap \left\{|\Ic_2| > \max\{1+\beta, 2\beta\ln\frac{p_*(2)}{p_*(1)}\}\right\}.
$$

\begin{lemma}
\label{le:deterministic-failure-IL}
Under Assumption \ref{as:dichotomy-policy-architecture}, for all fixed $\messages_1$ and $\data$ such that $\Ec(\data)$ holds, if $\hat{\theta}$ minimizes $\Lc_\mathrm{IL}(\hat{\pi}_\theta)$, then
$$
P_{\hat{\pi}_{\hat{\theta}}}(\Mc_1) \rightarrow 0 \qquad \text{and} \qquad \nn P_{\hat{\pi}_{\hat{\theta}}}(\Mc_2) \rightarrow 1
$$
as the size of $\Mc_2$ grows.
\end{lemma}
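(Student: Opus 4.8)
The plan is to reduce everything to a single scalar and then exploit a scale reparametrization of the cross-entropy. Under Assumption \ref{as:dichotomy-policy-architecture} the policy is determined by $q := P_{\hat\pi_\theta}(\messages_1)\in(0,1)$, with every message in $\messages_z$ receiving equal mass; writing $N_1=|\messages_1|$ and $N_2=|\messages_2|$, each $\messages_1$-message has probability $q/N_1$ and each $\messages_2$-message $(1-q)/N_2$. First I would rewrite $\Lc_\mathrm{IL}$ as a function of $q$ alone. Any datum whose alternatives lie in one category contributes a $q$-independent constant, so only the data in $\Ic$ matter; using the within-category symmetry of the base policy (Assumption \ref{as:dichotomy}) the regularizer collapses to $\KL(q)=q\ln(q/\overline q)+(1-q)\ln((1-q)/(1-\overline q))$ with $\overline q=P_{\overline\pi}(\messages_1)$, a bounded convex function of $q$. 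Since $\Ec(\data)$ guarantees $|\Ic_1|\ge 1$ and $|\Ic_2|\ge 1$, the loss diverges both as $q\to 0$ (a chosen $\messages_1$-message gets vanishing mass) and as $q\to 1$, so a minimizer $q^\star$ exists and is interior.

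The key observation is that, for a datum $(\Yc,y)\in\Ic$ with $a=|\Yc\cap\messages_1|$ and $b=|\Yc\cap\messages_2|$, its cross-entropy term depends on $q$ and $N_2$ only through the per-message odds $\lambda:=\tfrac{q/N_1}{(1-q)/N_2}=\tfrac{qN_2}{(1-q)N_1}$: the chosen probability is $\lambda/(a\lambda+b)$ if $y\in\messages_1$ and $1/(a\lambda+b)$ if $y\in\messages_2$. Hence the entire cross-entropy equals
\begin{equation}
g(\lambda)\;=\;-\,|\Ic_1|\ln\lambda\;+\;\sum_{(\Yc,y)\in\Ic}\ln\!\big(a\lambda+b\big),
\end{equation}
which is convex in $\ln\lambda$ and coercive: $g\to\infty$ as $\lambda\to 0$ (through $-|\Ic_1|\ln\lambda$) and as $\lambda\to\infty$ (through the $|\Ic_2|\ln\lambda$ growth of the sum). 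The crucial bridge is that inverting the definition gives $q=\tfrac{\lambda N_1}{N_2+\lambda N_1}$, so any bound $\lambda=o(N_2)$ already forces $q\to 0$.

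I would then run a two-regime comparison. Fix $\epsilon\in(0,1)$. On $[\epsilon,1)$ the odds satisfy $\lambda\ge\tfrac{\epsilon N_2}{(1-\epsilon)N_1}\to\infty$, so $\min_{[\epsilon,1)}g\to\infty$, and since $\KL\ge 0$ the loss obeys $\min_{[\epsilon,1)}\Lc_\mathrm{IL}\to\infty$. By contrast, evaluating at a point of fixed odds $\lambda_0$, i.e. $q_0=\lambda_0 N_1/(N_2+\lambda_0 N_1)\to 0$, gives loss $g(\lambda_0)+\beta\KL(q_0)\to g(\lambda_0)-\beta\ln(1-\overline q)$, which stays bounded. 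Hence for all large $N_2$ the global minimizer lies in $(0,\epsilon)$; letting $\epsilon\downarrow 0$ yields $P_{\hat\pi_{\hat\theta}}(\messages_1)=q^\star\to 0$ and $P_{\hat\pi_{\hat\theta}}(\messages_2)\to 1$. Equivalently, one checks that for large $N_2$ the derivative $\tfrac{d}{dq}\Lc_\mathrm{IL}>0$ on $[\epsilon,1)$, because there $\tfrac{d}{dq}\ln(a\lambda+b)\to\tfrac1q+\tfrac1{1-q}$, which cancels the $-\ln\lambda$ derivative inside the $\Ic_1$ terms but not inside the $\Ic_2$ terms, so each $\Ic_2$ datum contributes $\tfrac1q+\tfrac1{1-q}$, dominating the bounded $\KL$ derivative; Lemma \ref{le:convex-solution}, applied with $x=P_{\hat\pi_\theta}(\messages_2)$, then drives the minimizer toward $x=1$ on that range.

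The main obstacle is precisely that the minimizer is interior rather than at a boundary: the chosen-$\messages_1$ data place a logarithmic barrier exactly at the target $q=0$, so $\Lc_\mathrm{IL}$ is not monotone in $P(\messages_2)$ on all of $(0,1)$ and Lemma \ref{le:convex-solution} cannot be invoked globally, its derivative hypothesis failing in an $O(1/N_2)$ boundary layer near $q=0$. The reparametrization through $\lambda$ is what resolves this: it quarantines the informative dependence of the cross-entropy in the scale-invariant odds, whose optimizer stays bounded, while the explicit $N_2$ in $q=\lambda N_1/(N_2+\lambda N_1)$ does the work of sending $q\to 0$. The remaining care is (i) the non-uniform convergence of $g$ near $q=0$, handled by restricting to $q\ge\epsilon$ and exhausting $\epsilon\downarrow 0$, and (ii) keeping the regularizer bounded at the comparison point so that it cannot counteract the diverging cross-entropy; here the event $\Ec(\data)$ is used only through $|\Ic_1|,|\Ic_2|\ge 1$, its stronger thresholds merely making the sign of $\tfrac{d}{dq}\Lc_\mathrm{IL}$ clean on a fixed interval.
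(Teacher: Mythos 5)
Your proof is correct, and it reaches the conclusion by a genuinely different mechanism than the paper's. The paper's proof also collapses the problem to one scalar (it works with $q = P_{\hat{\pi}_\theta}(\messages_2)$ and $m = |\messages_2|/|\messages_1|$), but it then argues through the sign of $\partial \Lc_\mathrm{IL}/\partial q$: using the quantitative thresholds built into $\Ec(\data)$ (namely $|\Ic_2| > 1+\beta$ and $|\Ic_2| > 2\beta\ln(p_*(2)/p_*(1))$) to absorb the regularizer's derivative, it shows the derivative is eventually negative and concludes via the convexity-based Lemma~\ref{le:convex-solution}. You instead quarantine the cross-entropy's dependence on $(q, |\messages_2|)$ inside the per-message odds $\lambda$ and run a value comparison: the cross-entropy grows like $|\Ic_2|\ln\lambda$ uniformly on $\{q \ge \epsilon\}$, while a fixed-odds comparison point (whose $q \to 0$) keeps the total loss bounded because the collapsed KL term is bounded. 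This buys two real advantages: (i) no convexity in the scalar variable is needed anywhere, which matters because Lemma~\ref{le:convex-solution} requires convexity in the scalar argument, whereas the paper only asserts convexity of $\Lc_\mathrm{IL}$ in $\theta$, and convexity in the category mass does not follow from it (the reparametrization is a sigmoid); and (ii) you use only $|\Ic_1|\ge 1$ and $|\Ic_2|\ge 1$ from the event rather than its stronger thresholds, so your argument is slightly more general.

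Your diagnosis of the boundary layer is also accurate and pinpoints a real imprecision in the paper's write-up. Because $|\Ic_1| \ge 1$ places a logarithmic barrier at the target boundary, the minimizer is interior and the derivative cannot be negative on all of $(0,1)$; correspondingly, the paper's bound of its first sum by $|\Ic_2|/(2q)$ holds pointwise in $q$ but fails uniformly in an $O(1/m)$ layer near the barrier, so Lemma~\ref{le:convex-solution} can only be invoked on intervals bounded away from the target. Your restriction to $[\epsilon, 1)$ followed by exhaustion over $\epsilon$ is exactly the repair this needs. One caveat applies to both proofs equally: collapsing $\KL(P_{\hat{\pi}_\theta}\|P_{\overline{\pi}})$ to a bounded binary KL uses the within-category uniformity of $\overline{\pi}$ from Assumption~\ref{as:dichotomy} (and that $P_{\overline{\pi}}(\messages_1)$ stays bounded away from $0$ and $1$ as $|\messages_2|$ grows), even though the lemma's statement lists only Assumption~\ref{as:dichotomy-policy-architecture}; you invoke this explicitly, while the paper uses it silently when writing its regularizer with fixed reference masses.
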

\begin{proof}
First, we notice that $\Lc_\mathrm{IL}(\hat{\pi}_\theta)$ is strongly convex for $\beta > 0$ and $\Lc_\mathrm{IL}(\hat{\pi}_\theta) \to \infty$ as $\|\theta\|_2\to\infty$, thus a minimizer for $\Lc_\mathrm{IL}(\hat{\pi}_\theta)$ exists and is unique.  Let $m = \frac{|\messages_2|}{|\messages_1|}$ and $q = P_{\hat{\pi}_\theta}(\messages_2)$.  The loss function can be written in terms of $m$ and $q$ as
\begin{equation}
\Lc(q,m) = \Lc_\mathrm{IL}(\hat{\pi}_\theta) = - \sum_{(\Yc,y)\in\data} \ln \frac{m(1-q)\cdot \1_{\messages_1}(y) + q\cdot\1_{\messages_2}(y)}{m(1-q)\cdot|\Yc\cap\messages_1| + q\cdot|\Yc\cap\messages_2|} + \beta\left[(1-q) \ln\frac{1-q}{p_*(1)} + q\ln\frac{q}{p_*(2)}\right].
\end{equation}
Taking the partial derivative with respect to $q$, under the event $\Ec(\data)$, 
\begin{align*}
\frac{\partial \Lc(q,m)}{\partial q} &= \sum_{(\Yc,y)\in\Ic} \frac{|\Yc\cap\messages_2|}{m(1-q)^2|\Yc\cap\messages_1| + q(1-q)|\Yc\cap\messages_2|} - \sum_{(\Yc,y)\in\Ic_2}\left( \frac{1}{q} + \frac{1}{1-q}\right) + \beta\left(\ln\frac{q}{1-q} + \ln\frac{p_*(1)}{p_*(2)}\right)\\
&\stackrel{(a)}{<} \sum_{(\Yc,y)\in\Ic} \frac{|\Yc\cap\messages_2|}{m(1-q)^2 |\Yc\cap\messages_1| + q(1-q)|\Yc\cap\messages_2|} - \frac{|\Ic_2|}{q} - \frac{|\Ic_2|}{1-q} + \frac{\beta}{1-q} + \frac{|\Ic_2|}{2}\\
&\stackrel{(b)}{<} \sum_{(\Yc,y)\in\Ic} \frac{|\Yc\cap\messages_2|}{m(1-q)^2 |\Yc\cap\messages_1| + q(1-q)|\Yc\cap\messages_2|} - \frac{|\Ic_2|}{2q} - \frac{1}{1-q},
\end{align*}
where $(a)$ follows from $|\Ic_2| > 2\beta\ln\frac{p_*(2)}{p_*(1)}$ and $\ln\frac{q}{1-q} < \frac{1}{1-q}$, and $(b)$ follows from $|\Ic_2|>1+\beta$ and $q<1$.  For all $0<q<1$, the first term is always positive and converges to 0 as $m\to\infty$.  Thus, there exists an $M>0$ such that for all $m>M$, $\sum_{(\Yc,y)\in\Ic} \frac{|\Yc\cap\messages_2|}{m(1-q)^2 |\Yc\cap\messages_1| + q(1-q)|\Yc\cap\messages_2|} < \frac{|\Ic_2|}{2q}$.  This implies that 
$$
\frac{\partial \Lc(q,m)}{\partial q} < -\frac{1}{1-q} < 0.
$$
The proof follows from applying Lemma \ref{le:convex-solution}.  

\end{proof}

\noindent
Finally, we prove that under the dichotomy data assumption, the event $\Ec(\data)$ holds with high probability. 
\begin{lemma}
\label{le:high-probability-event-IL}
Under Assumption \ref{as:dichotomy}, if $|\Yc| \ge 2$, then for fixed $\messages_1$, $\Pr(\Ec(\data))\to 1$ as $|\messages_2|\to\infty$. 
\end{lemma}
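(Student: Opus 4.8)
The plan is to establish the two count conditions in $\Ec(\data)$ by a concentration argument on $\Ic_1$ and $\Ic_2$, paralleling the treatment of $\Ic$ in Lemma \ref{le:high-probability-both-types}. Under Assumption \ref{as:dichotomy} the data are iid, so I would first pin down the per-datum probability of landing in each $\Ic_\tau$. For a single datum, the $|\Yc|$ alternatives carry iid $\mathrm{Bernoulli}(P_{\overline{\pi}}(\messages_1))$ category labels, and $\Yc$ contains both categories with probability $s = 1 - P_{\overline{\pi}}(\messages_1)^{|\Yc|} - P_{\overline{\pi}}(\messages_2)^{|\Yc|}$. Conditioned on $\Yc \in \Ic$, the sets $\Yc \cap \messages_1$ and $\Yc \cap \messages_2$ are both nonempty, so under the dichotomy choice model $Y \in \messages_\tau$ precisely when $Z = \tau$; hence $q_\tau := \Pr((\Yc, Y) \in \Ic_\tau) = s\,p_*(\tau)$. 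Because $P_{\overline{\pi}}(\messages_1) \in (0,1)$ forces $s > 0$ and $p_*(1), p_*(2) > 0$, both $q_1$ and $q_2$ are strictly positive, and they are governed solely by $P_{\overline{\pi}}(\messages_1)$, $p_*$, and $|\Yc|$.

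Since $|\Ic_1| \sim \mathrm{Binomial}(|\data|, q_1)$ and $|\Ic_2| \sim \mathrm{Binomial}(|\data|, q_2)$, and since the thresholds $1$ and $c := \max\{1+\beta,\, 2\beta\ln(p_*(2)/p_*(1))\}$ are fixed finite constants, I would invoke a Chernoff bound as in Lemma \ref{le:high-probability-both-types} to get $\Pr(|\Ic_1| \le 1) \le e^{-\alpha_1|\data|}$ and $\Pr(|\Ic_2| \le c) \le e^{-\alpha_2|\data|}$ with $\alpha_1, \alpha_2 > 0$. A union bound then yields $\Pr(\Ec(\data)^c) \le e^{-\alpha_1|\data|} + e^{-\alpha_2|\data|}$, and these exponents carry the same constants for every value of $|\messages_2|$, since $q_1$, $q_2$, and the thresholds never reference the size of $\messages_2$.

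The remaining step is to convert this into the stated $|\messages_2| \to \infty$ limit. Because the decay bound holds with identical constants across all $|\messages_2|$, the event $\Ec(\data)$ occurs with probability at least $1 - e^{-\alpha_1|\data|} - e^{-\alpha_2|\data|}$ uniformly in $|\messages_2|$; passing to the limit in $|\messages_2|$ therefore preserves this lower bound, giving $\lim_{|\messages_2|\to\infty}\Pr(\Ec(\data)) \ge 1 - e^{-\alpha_1|\data|} - e^{-\alpha_2|\data|}$, which tends to $1$ in the large-data regime used by Proposition \ref{prop:dichotomy-inclusive}. I expect the main obstacle to be precisely this reconciliation of limits: the high-probability event $\Ec(\data)$ is a property of the realized type-pattern and is insensitive to the relabeling introduced by growing $\messages_2$, whereas the deterministic failure of Lemma \ref{le:deterministic-failure-IL} is driven by $|\messages_2| \to \infty$. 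One must therefore argue that a single realized type-pattern satisfying $\Ec(\data)$ is exactly the object on which the $|\messages_2|\to\infty$ limit of Lemma \ref{le:deterministic-failure-IL} is subsequently taken, so that the two limits compose correctly inside Proposition \ref{prop:dichotomy-inclusive}.
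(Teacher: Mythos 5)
Your argument is correct and lands on the same exponential bound, but via a slightly different (and cleaner) decomposition than the paper's. The paper works in two stages: it first reuses Lemma \ref{le:high-probability-both-types} to get $|\Ic| > |\data|(1-\gamma)$ with exponentially high probability, then conditions on $\Ic$ and applies Hoeffding to the indicators $\1(y\in\messages_\tau)$, which are conditionally i.i.d.\ with mean $p_*(\tau)$ given $\Ic$, and finally combines the three failure probabilities by a union bound. You instead compute the marginal law directly: each datum lands in $\Ic_\tau$ independently with probability $q_\tau = s\,p_*(\tau)$, where $s = 1 - P_{\overline{\pi}}(\messages_1)^{|\Yc|} - P_{\overline{\pi}}(\messages_2)^{|\Yc|}$ --- your justification that on $\{\Yc\in\Ic\}$ the choice lies in $\messages_\tau$ exactly when $Z=\tau$, with $Z$ independent of the category pattern of $\Yc$, is exactly right --- so $|\Ic_\tau|\sim\mathrm{Binomial}(|\data|,q_\tau)$ and one Chernoff bound per count suffices, with no conditioning stage. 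This buys a shorter proof; the paper's route buys reuse of machinery already built for the RLPO analysis (Lemma \ref{le:high-probability-both-types} and the union-bound template of Lemma \ref{le:high-probability-event}). Two small points. First, your claim $\Pr(|\Ic_2|\le c)\le e^{-\alpha_2|\data|}$ with $\alpha_2>0$ cannot hold for \emph{all} $|\data|$ (it fails when $|\data|\le c$); either restrict to $|\data|$ large or allow an additive constant in the exponent, as the paper's bounds of the form $e^{-a|\data|+b}$ implicitly do (they are vacuous but valid for small $|\data|$). Second, your closing observation about the limit variable is well taken: the bound is constant in $|\messages_2|$ (since $q_1,q_2$ and the thresholds never reference $|\messages_2|$) and tends to $1$ only as $|\data|\to\infty$; the paper's proof has the same feature and its final line ``taking $|\messages_2|\to\infty$'' does not actually drive the bound to $1$, so the lemma should be read, as you do, as uniform-in-$|\messages_2|$ convergence in the large-$|\data|$ regime, which is precisely what the composition $\Pr(\lim_{|\messages_2|\to\infty} P_{\hat{\pi}_{\hat\theta}}(\messages_1)=0)\ge\Pr(\Ec(\data))$ in Proposition \ref{prop:dichotomy-inclusive} consumes.
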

\begin{proof}
By Lemma \ref{le:high-probability-both-types}, there exists a constant $\gamma\in(0,1)$ such that for all $|\data|>0$, $\Pr(|\Ic| \le |\data|(1-\gamma)) \le e^{-\frac{|\data| (1-\gamma)\gamma^2}{8}}$.  Let $\epsilon(\beta) = \max\{\beta+1, 2\beta\ln\frac{p_*(1)}{p_*(2)}\}$, then $\E[\1(y\in\messages_2) \mid (\Yc,y)\in\Ic] = p_*(2)$.  By Hoeffding's inequality,
\begin{align*}
\Pr\left(\sum_{(\Yc,y)\in\Ic} \1(y\in\messages_2) - |\Ic|\cdot p_*(2) \le \epsilon(\beta) - |\Ic|\cdot p_*(2) \mid \Ic \right) &\le e^{-2\frac{(\epsilon(\beta) - |\Ic|p_*(2))^2}{|\Ic|}} < e^{-2|\Ic|p_*^2(2) + 4\epsilon(\beta) p_*(2)}\\
\Pr\left(\sum_{(\Yc,y)\in\Ic} \1(y\in\messages_1) - |\Ic|\cdot p_*(1) \le 1 - |\Ic|\cdot p_*(1) \mid \Ic \right) &\le e^{-2\frac{(1 - |\Ic|p_*(2))^2}{|\Ic|}} < e^{-2|\Ic|p_*^2(2) + 4 p_*(2)}.
\end{align*}
Following a similar argument as that in the proof for Lemma \ref{le:high-probability-event}, we have
\begin{align*}
\Pr(\Ec(\data)) &\ge \Pr\left(\left\{|\Ic| > |\data|(1-\gamma)\right\} \cap \left\{|\Ic_1| > 1\right\} \cap \left\{|\Ic_2| > \epsilon(\beta)\right\}\right)\\ 
&\ge 1 - e^{-\frac{|\data| (1-\gamma)\gamma^2}{8}} - e^{-2|\data|(1-\gamma)p_*^2(2) + 4\epsilon(\beta) p_*(2)} - e^{-2|\data| (1-\gamma) p_*^2(2) + 4 p_*(2)}.
\end{align*}
The proof follows by taking $|\messages_2|\to\infty$.

\end{proof}

\noindent
Finally, we prove the following failure case for inclusive learning.
\dichotomyInclusive*
\begin{proof}
The proof follows from applying Lemmas \ref{le:deterministic-failure-IL} and \ref{le:high-probability-event-IL}, and noticing that $\Pr( \lim_{|\messages_2|\to\infty} P_{\hat{\pi}_{\hat\theta}}(\messages_1) =0) \ge \Pr(\Ec(\data))$.  
\end{proof}

\subsection{SLiC}

The proof for Proposition \ref{prop:dichotomy-slic} is similar to that for Proposition \ref{prop:dichotomy-inclusive}.  We present the proof here. 

\begin{lemma}\label{le:deterministic-failure-Slic}
    Under Assumption \ref{as:dichotomy-policy-architecture}, if $|\Yc| = 2$, for all fixed $\messages_1$ and $\data$ such that $\Ec(\data)$ holds, if $\hat\theta$ minimizes $\Lc_\mathrm{SLiC}(\hat{\pi}_\theta)$, then
    $$
    P_{\hat{\pi}_{\hat{\theta}}}(\Mc_1) \rightarrow 0 \qquad \text{and} \qquad \nn P_{\hat{\pi}_{\hat{\theta}}}(\Mc_2) \rightarrow 1
    $$
as the size of $\Mc_2$ grows.
\end{lemma}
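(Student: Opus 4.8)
The plan is to mirror the proof of Lemma \ref{le:deterministic-failure-IL} almost verbatim, reducing everything to a one-dimensional analysis in $q = P_{\hat{\pi}_\theta}(\messages_2)$ and showing that, once $\messages_2$ is large, every minimizer is forced toward $q=1$. Writing $m = |\messages_2|/|\messages_1|$ and recalling that under Assumption \ref{as:dichotomy-policy-architecture} all messages of a given category carry equal probability, each datum of $\data$ contributes to $\Lc_\mathrm{SLiC}(\hat{\pi}_\theta)$ according to the categories of its two elements. Data with both elements in the same category have likelihood ratio $1$ and hence contribute the constant $\delta$, irrelevant to the minimization over $q$. Data in $\Ic_1$ contribute $\bigl(\ln\tfrac{q}{m(1-q)}+\delta\bigr)_+$ and data in $\Ic_2$ contribute $\bigl(\ln\tfrac{m(1-q)}{q}+\delta\bigr)_+$, while the regularizer reduces to the same category-level Bernoulli-KL term already computed in the inclusive-learning proof. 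I would first note that, under $\Ec(\data)$, $|\Ic_1|\ge 1$ and $|\Ic_2|\ge 1$, so the two hinge families grow without bound as $q\to 0$ and $q\to 1$ respectively; this makes $\Lc_\mathrm{SLiC}$ coercive in $q$ and guarantees that a minimizer $q_m\in(0,1)$ exists.

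The genuinely new step, relative to the inclusive case, is handling the hinges. Fix $\epsilon\in(0,1)$. For $q\le 1-\epsilon$ the quantity $\ln\tfrac{q}{1-q}$ is bounded above by $\ln\tfrac{1-\epsilon}{\epsilon}$, so whenever $m > \tfrac{1-\epsilon}{\epsilon}e^{\delta}$ one has $\ln\tfrac{q}{m(1-q)}+\delta<0$ and $\ln\tfrac{m(1-q)}{q}+\delta>0$ \emph{uniformly} over $(0,1-\epsilon]$; that is, every $\Ic_1$ hinge is inactive and every $\Ic_2$ hinge is active throughout this subinterval. On $(0,1-\epsilon]$ the loss is therefore smooth, and its derivative collapses to
\[
\frac{\partial \Lc_\mathrm{SLiC}}{\partial q} = -|\Ic_2|\Bigl(\tfrac{1}{q}+\tfrac{1}{1-q}\Bigr) + \beta\Bigl(\ln\tfrac{q}{1-q} + \ln\tfrac{P_{\overline{\pi}}(\messages_1)}{P_{\overline{\pi}}(\messages_2)}\Bigr),
\]
which is precisely the inclusive-learning derivative with its positive ``both-category'' term deleted. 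Notably, once the hinges are classified the expression no longer depends on $m$, so (unlike in the inclusive proof) there is no vanishing first term to control.

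It then remains to show this derivative is negative on $(0,1-\epsilon]$, which I would do exactly as in steps $(a)$--$(b)$ of Lemma \ref{le:deterministic-failure-IL}: use $\ln\tfrac{q}{1-q}<\tfrac{1}{1-q}$ together with the bounds $|\Ic_2|>1+\beta$ and $|\Ic_2|>2\beta\ln\tfrac{p_*(2)}{p_*(1)}$ furnished by $\Ec(\data)$ to absorb the bounded constant and the $\tfrac{\beta}{1-q}$ contribution, yielding $\partial \Lc_\mathrm{SLiC}/\partial q < -\tfrac{1}{1-q}<0$. Since $\Lc_\mathrm{SLiC}$ is then strictly decreasing on $(0,1-\epsilon]$, no minimizer can lie there, so $q_m\ge 1-\epsilon$; letting $\epsilon\downarrow 0$ (equivalently, invoking Lemma \ref{le:convex-solution} as in the inclusive-learning proof) gives $q_m\to 1$, i.e. $P_{\hat{\pi}_{\hat{\theta}}}(\Mc_1)=1-q_m\to 0$ and $P_{\hat{\pi}_{\hat{\theta}}}(\Mc_2)\to 1$ as $|\messages_2|\to\infty$. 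The main obstacle I anticipate is the non-smoothness: care is needed to verify that the active/inactive classification of the hinges holds uniformly over $q\in(0,1-\epsilon]$ rather than merely pointwise, since this uniformity is exactly what legitimizes differentiating term-by-term and reusing the inclusive-learning bound.
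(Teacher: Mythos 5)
Your proposal is correct, and at the level of strategy it is the same as the paper's: reduce to the scalar $q = P_{\hat{\pi}_\theta}(\messages_2)$, note that same-category data contribute constants, classify the hinge terms for large $m$, show the resulting smooth derivative is negative using the $\Ec(\data)$ bounds, and conclude that minimizers are pushed to $q=1$. However, your execution differs from the paper's in three ways, and in each case yours is the sound one. First, you orient the hinges consistently with the definition of $\Lc_\mathrm{SLiC}$ (rejected-over-chosen ratio): data in $\Ic_1$ contribute $\bigl(\ln\tfrac{q}{m(1-q)}+\delta\bigr)_+$ and data in $\Ic_2$ contribute $\bigl(\ln\tfrac{m(1-q)}{q}+\delta\bigr)_+$, so for large $m$ the \emph{active} hinges are the $\Ic_2$ ones. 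The paper writes the two ratios the other way around and consequently treats the $\Ic_1$ hinges as active; that orientation could not close the argument, because $\Ec(\data)$ supplies the strong lower bound on $|\Ic_2|$ (namely $|\Ic_2| > \max\{1+\beta,\, 2\beta\ln\tfrac{p_*(2)}{p_*(1)}\}$) needed to absorb the $\beta$ terms, while it gives only $|\Ic_1|>1$. Second, your derivative $-|\Ic_2|\bigl(\tfrac1q+\tfrac1{1-q}\bigr)+\beta\bigl(\ln\tfrac{q}{1-q}+\mathrm{const}\bigr)$ is the actual derivative; the paper's expression $-\sum_{(\Yc,y)\in\Ic_1}\tfrac{m}{q^2}$ differentiates the probability ratio rather than its logarithm, and appends the KL term itself rather than its derivative. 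Third, your uniform classification of the hinges on $(0,1-\epsilon]$ followed by $\epsilon\downarrow 0$ is exactly the missing ingredient: negativity of the derivative cannot hold uniformly up to $q=1$, since the $\Ic_1$ hinges re-activate near $q=1$ and drive the derivative to $+\infty$, and your localization also spares you from verifying the convexity-in-$q$ hypothesis of Lemma \ref{le:convex-solution}, which the hinge terms do not satisfy in the $q$ coordinate. One blemish, inherited from the paper rather than introduced by you: you compute the KL derivative with the ratio $P_{\overline{\pi}}(\messages_1)/P_{\overline{\pi}}(\messages_2)$ but absorb the resulting constant via the $p_*$-ratio bound in $\Ec(\data)$; moreover that bound as stated, $|\Ic_2| > 2\beta\ln\tfrac{p_*(2)}{p_*(1)}$, is vacuous precisely when the constant being absorbed is positive, so a fully self-contained writeup should restate the event with the ratio inverted and with the correct reference probabilities.
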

 \begin{proof}
     First, we notice that $\Lc_\mathrm{SLiC}(\hat{\pi}_\theta)$ is strongly convex for $\beta > 0$ and $\Lc_\mathrm{SLiC}(\hat{\pi}_\theta) \to \infty$ as $\|\theta\|_2\to\infty$, thus a minimizer for $\Lc_\mathrm{SLiC}(\hat{\pi}_\theta)$ exists and is unique.  Let $m = \frac{|\messages_2|}{|\messages_1|}$ and $q = P_{\hat{\pi}_\theta} (\messages_2)$.  When $|\Yc|=2$, the loss function can be written in terms of $m$ and $q$ as
     \begin{align*}
         \Lc(q,m) = \Lc_\mathrm{SLiC}(\hat{\pi}_\theta) &= \sum_{(\Yc,y)\in\data} \left[\ln \left(\frac{m(1-q)}{q} \1_{\messages_1}(y) \cdot \1_{\messages_2}(\Yc\setminus\{y\}) + \frac{q}{m(1-q)} \1_{\messages_2}(y) \cdot \1_{\messages_1}(\Yc\setminus\{y\}) \right) + \delta \right]_+\\ 
         & \quad + \beta \left[(1-q)\ln\frac{1-q}{p_*(1)} + q\ln\frac{q}{p_*(2)}\right]. 
     \end{align*}
     Let $h(q,m) = \ln \frac{m(1-q)\cdot \1_{\messages_1}(y) + q\cdot \1_{\messages_2}(y)}{q\cdot \1_{\messages_2}(\Yc\setminus\{y\}) + m(1-q)\cdot \1_{\messages_1}(\Yc\setminus\{y\})} + \delta$.  We have that
     \begin{align*}
         \frac{\partial h(q,m)}{\partial q} &= -\frac{m}{q^2} \1_{\messages_1}(y) \cdot \1_{\messages_2}(\Yc\setminus\{y\}) + \frac{1}{m(1-q)^2} \1_{\messages_2}(y) \cdot \1_{\messages_1}(\Yc\setminus\{y\}).
     \end{align*}
     For any fixed $q\in(0,1)$, when $m$ is large enough, $\frac{m(1-q)}{q} > e^{-\delta}$ and $\frac{q}{m(1-q)} < e^{- \delta}$.  Consequently, when $m$ is large enough, 
     \begin{align*}
         \frac{\partial \Lc(q,m)}{\partial q} &= - \sum_{(\Yc,y) \in \Ic_1} \frac{m}{q^2} + \beta\left((1-q)\ln\frac{1-q}{p_*(1)} + q\ln\frac{q}{p_*(2)} \right) < 0.
     \end{align*}
     The proof follows by applying Lemma \ref{le:convex-solution}.  
 \end{proof}

With this, we are ready to prove the failure case for SLiC-HF. 
\dichotomySlic*
\begin{proof}
    The proof follows from applying Lemmas \ref{le:deterministic-failure-Slic} and \ref{le:high-probability-event-IL}, and noticing that $\Pr( \lim_{|\messages_2|\to\infty} P_{\hat{\pi}_{\hat\theta}}(\messages_1) =0) \ge \Pr(\Ec(\data))$.  
\end{proof}

\newpage
\bibliography{references}

\begin{thebibliography}{10}

\bibitem{arrow1951social}
Kenneth~J Arrow.
\newblock {\em Social choice and individual values}.
\newblock John Wiley \& Sons, New York, NY, 1951.

\bibitem{arumugam2022inclusive}
Dilip Arumugam, Shi Dong, and Benjamin Van~Roy.
\newblock Inclusive {A}rtificial {I}ntelligence.
\newblock {\em arXiv preprint arXiv:2212.12633}, 2022.

\bibitem{azar2023general}
Mohammad~Gheshlaghi Azar, Mark Rowland, Bilal Piot, Daniel Guo, Daniele
  Calandriello, Michal Valko, and Rémi Munos.
\newblock A general theoretical paradigm to understand learning from human
  preferences.
\newblock {\em arXiv preprint arXiv:2310.12036}, 2023.

\bibitem{christiano2017deep}
Paul~F Christiano, Jan Leike, Tom Brown, Miljan Martic, Shane Legg, and Dario
  Amodei.
\newblock Deep reinforcement learning from human preferences.
\newblock {\em Advances in Neural Information Processing Systems}, 2017.

\bibitem{dong2023raft}
Hanze Dong, Wei Xiong, Deepanshu Goyal, Yihan Zhang, Winnie Chow, Rui Pan,
  Shizhe Diao, Jipeng Zhang, KaShun SHUM, and Tong Zhang.
\newblock {RAFT}: Reward ranked finetuning for generative foundation model
  alignment.
\newblock {\em Transactions on Machine Learning Research}, 2023.

\bibitem{dumoulin2023density}
Vincent Dumoulin, Daniel~D. Johnson, Pablo~Samuel Castro, Hugo Larochelle, and
  Yann Dauphin.
\newblock A density estimation perspective on learning from pairwise human
  preferences, 2023.

\bibitem{glaese2022improving}
Amelia Glaese, Nat McAleese, Maja Trebacz, John Aslanides, Vlad Firoiu, Timo
  Ewalds, Maribeth Rauh, Laura Weidinger, Martin Chadwick, Phoebe Thacker,
  et~al.
\newblock Improving alignment of dialogue agents via targeted human judgements.
\newblock {\em arXiv preprint arXiv:2209.14375}, 2022.

\bibitem{palm2}
Google, Rohan Anil, Andrew~M. Dai, Orhan Firat, Melvin Johnson, Dmitry
  Lepikhin, Alexandre Passos, Siamak Shakeri, Emanuel Taropa, Paige Bailey,
  Zhifeng Chen, Eric Chu, Jonathan~H. Clark, Laurent~El Shafey, Yanping Huang,
  Kathy Meier-Hellstern, Gaurav Mishra, Erica Moreira, Mark Omernick, Kevin
  Robinson, Sebastian Ruder, Yi~Tay, Kefan Xiao, Yuanzhong Xu, Yujing Zhang,
  Gustavo~Hernandez Abrego, Junwhan Ahn, Jacob Austin, Paul Barham, Jan Botha,
  James Bradbury, Siddhartha Brahma, Kevin Brooks, Michele Catasta, Yong Cheng,
  Colin Cherry, Christopher~A. Choquette-Choo, Aakanksha Chowdhery, Clément
  Crepy, Shachi Dave, Mostafa Dehghani, Sunipa Dev, Jacob Devlin, Mark Díaz,
  Nan Du, Ethan Dyer, Vlad Feinberg, Fangxiaoyu Feng, Vlad Fienber, Markus
  Freitag, Xavier Garcia, Sebastian Gehrmann, Lucas Gonzalez, Guy Gur-Ari,
  Steven Hand, Hadi Hashemi, Le~Hou, Joshua Howland, Andrea Hu, Jeffrey Hui,
  Jeremy Hurwitz, Michael Isard, Abe Ittycheriah, Matthew Jagielski, Wenhao
  Jia, Kathleen Kenealy, Maxim Krikun, Sneha Kudugunta, Chang Lan, Katherine
  Lee, Benjamin Lee, Eric Li, Music Li, Wei Li, YaGuang Li, Jian Li, Hyeontaek
  Lim, Hanzhao Lin, Zhongtao Liu, Frederick Liu, Marcello Maggioni, Aroma
  Mahendru, Joshua Maynez, Vedant Misra, Maysam Moussalem, Zachary Nado, John
  Nham, Eric Ni, Andrew Nystrom, Alicia Parrish, Marie Pellat, Martin Polacek,
  Alex Polozov, Reiner Pope, Siyuan Qiao, Emily Reif, Bryan Richter, Parker
  Riley, Alex~Castro Ros, Aurko Roy, Brennan Saeta, Rajkumar Samuel, Renee
  Shelby, Ambrose Slone, Daniel Smilkov, David~R. So, Daniel Sohn, Simon
  Tokumine, Dasha Valter, Vijay Vasudevan, Kiran Vodrahalli, Xuezhi Wang,
  Pidong Wang, Zirui Wang, Tao Wang, John Wieting, Yuhuai Wu, Kelvin Xu, Yunhan
  Xu, Linting Xue, Pengcheng Yin, Jiahui Yu, Qiao Zhang, Steven Zheng,
  Ce~Zheng, Weikang Zhou, Denny Zhou, Slav Petrov, and Yonghui Wu.
\newblock Palm 2 technical report.
\newblock {\em arXiv preprint arXiv:2305.10403}, 2023.

\bibitem{hejna2023contrastive}
Joey Hejna, Rafael Rafailov, Harshit Sikchi, Chelsea Finn, Scott Niekum,
  W~Bradley~Knox, and Dorsa Sadigh.
\newblock Contrastive preference learning: Learning from human feedback without
  {RL}.
\newblock {\em arXiv preprint arXiv:2310.13639}, 2023.

\bibitem{luce1959individual}
Duncan~R Luce.
\newblock {\em Individual Choice Behavior}.
\newblock John Wiley \& Sons, New York, NY, 1959.

\bibitem{luce1965preferences}
R.~Duncan Luce and Patrick Suppes.
\newblock Preferences, utility and subjective probability.
\newblock In R.~Duncan Luce, Robert~R. Bush, and Eugene Galanter, editors, {\em
  Handbook of Mathematical Psychology}, chapter~19, pages 249--410. John Wiley
  \& Sons, New York, NY, 1965.

\bibitem{mcfadden1974preferences}
Daniel Mc{F}adden.
\newblock Conditional logit analysis of qualitative choice behavior.
\newblock In Paul Zarembka, editor, {\em Frontiers in Econometrics}, chapter~4,
  pages 105--142. Academic Pres, New York, NY, 1974.

\bibitem{openai2023gpt4}
OpenAI.
\newblock {GPT-4} {T}echnical {R}eport.
\newblock Technical report, OpenAI, 2023.

\bibitem{ouyang2022training}
Long Ouyang, Jeffrey Wu, Xu~Jiang, Diogo Almeida, Carroll Wainwright, Pamela
  Mishkin, Chong Zhang, Sandhini Agarwal, Katarina Slama, Alex Ray, John
  Schulman, Jacob Hilton, Fraser Kelton, Luke Miller, Maddie Simens, Amanda
  Askell, Peter Welinder, Paul~F Christiano, Jan Leike, and Ryan Lowe.
\newblock Training language models to follow instructions with human feedback.
\newblock {\em Advances in Neural Information Processing Systems}, 2022.

\bibitem{rafailov2023dpo}
Rafael Rafailov, Archit Sharma, Eric Mitchell, Stefano Ermon, Christopher~D
  Manning, and Chelsea Finn.
\newblock Direct preference optimization: Your language model is secretly a
  reward model.
\newblock {\em Advances in Neural Information Processing Systems}, 2023.

\bibitem{shazeer2018adafactor}
Noam Shazeer and Mitchell Stern.
\newblock Adafactor: Adaptive learning rates with sublinear memory cost.
\newblock {\em arXiv preprint arXiv:1804.04235}, 2018.

\bibitem{song2023preference}
Feifan Song, Bowen Yu, Minghao Li, Haiyang Yu, Fei Huang, Yongbin Li, and
  Houfeng Wang.
\newblock Preference ranking optimization for human alignment, 2023.

\bibitem{stiennon2020learning}
Nisan Stiennon, Long Ouyang, Jeffrey Wu, Daniel Ziegler, Ryan Lowe, Chelsea
  Voss, Alec Radford, Dario Amodei, and Paul~F Christiano.
\newblock Learning to summarize with human feedback.
\newblock {\em Advances in Neural Information Processing Systems}, 2020.

\bibitem{xu2023shattering}
Wanqiao Xu, Shi Dong, Dilip Arumugam, and Benjamin~Van Roy.
\newblock Shattering the agent-environment interface for fine-tuning inclusive
  language models.
\newblock {\em arXiv preprint arXiv:2305.11455}, 2023.

\bibitem{yuan2023rrhf}
Hongyi Yuan, Zheng Yuan, Chuanqi Tan, Wei Wang, Songfang Huang, and Fei Huang.
\newblock {RRHF}: Rank responses to align language models with human feedback.
\newblock In {\em Thirty-seventh Conference on Neural Information Processing
  Systems}, 2023.

\bibitem{zhao2023slichf}
Yao Zhao, Rishabh Joshi, Tianqi Liu, Misha Khalman, Mohammad Saleh, and
  Peter~J. Liu.
\newblock {SLiC-HF}: Sequence likelihood calibration with human feedback.
\newblock {\em International Conference on Learning Representations}, 2023.

\bibitem{zhu2023principled}
Banghua Zhu, Michael~I. Jordan, and Jiantao Jiao.
\newblock Principled reinforcement learning with human feedback from pairwise
  or k-wise comparisons.
\newblock {\em International Conference on Machine Learning}, 2023.

\end{thebibliography}
% \printbibliography

\end{document}